\newif\iffull
\fulltrue
\iffull
\documentclass[11pt,twoside]{article}
\usepackage{amsmath,amsthm,amssymb,fullpage,comment,bbm,algorithm,algorithmic,framed,hyperref}
\usepackage{amsmath,color}
\else
\documentclass{colt2017}
\fi
\usepackage[utf8]{inputenc}
\usepackage{vfmacros-mod,framed,algorithm,algorithmic,bbm}

\iffull
\usepackage[style=alphabetic,backend=bibtex,maxbibnames=20,maxcitenames=6,firstinits=true,doi=false,url=false]{biblatex}
\newcommand*{\citet}[1]{\AtNextCite{\AtEachCitekey{\defcounter{maxnames}{2}}} \textcite{#1}}

\newcommand*{\citep}[1]{\cite{#1}}
\newcommand{\citeyearpar}[1]{\cite{#1}}
\bibliography{refs,vf-allrefs}
\fi

\newcommand{\INDSTATE}[1][1]{\STATE\hspace{#1\algorithmicindent}}

\newcommand{\ex}[2]{{\ifx&#1& \E \else \E_{#1} \fi \left[#2\right]}}
\newcommand{\var}[2]{{\ifx&#1& \Var \else \Var_{#1}\fi \left[#2\right]}}
\newcommand{\MAD}{\mathop{\mathrm{mad}}}
\newcommand{\sdv}{\mathop{\mathrm{sd}}}
\newcommand{\veps}{\varepsilon}

\newcommand{\prob}[2]{\pr_{#1}\left[#2\right]}
\newcommand{\nope}[1]{}

\iffull
\newtheorem{remark}[thm]{Remark}
\fi

\newcommand{\interact}[2]{#1 {\rightarrow \atop \leftarrow} #2}

\iffalse
\newcommand{\vnote}[1]{\textcolor{red}{{\bf (Vitaly:} {#1}{\bf ) }} }
\newcommand{\tnote}[1]{\textcolor{blue}{{\bf (Thomas:} {#1}{\bf ) }}}
\else
\newcommand{\vnote}[1]{}
\newcommand{\tnote}[1]{}
\fi

\newcommand{\cump}[2]{\mathrm{cdf}_{#1}(#2)}

\newcommand{\iqr}[2]{\mathrm{qi}_{#1}\left(#2\right)}
\newcommand{\ind}{\mathbbm{1}}
\newcommand{\median}{\mathrm{median}}

\providecommand\X{\mathcal{X}}
\providecommand\Z{\mathcal{Z}}
\providecommand{\D}{{\mathcal D}}
\providecommand{\F}{{\mathcal F}}
\providecommand{\cP}{{\mathcal P}}
\providecommand{\cQ}{{\mathcal Q}}

\title{Generalization for Adaptively-chosen Estimators via Stable Median}
\iffull
\author{Vitaly Feldman\\\texttt{vitaly@post.harvard.edu} \and Thomas Steinke\\\texttt{median@thomas-steinke.net}
}
\date{IBM Research -- Almaden}
\else
\coltauthor{\Name{Vitaly Feldman} \addr IBM Research -- Almaden  \Email{vitaly@post.harvard.edu}
\AND \Name{Thomas Steinke}  \addr IBM Research -- Almaden \Email{median@thomas-steinke.net}
}
\fi

\begin{document}
\maketitle

\begin{abstract}
Datasets are often reused to perform multiple statistical analyses in an adaptive way, in which each analysis may depend on the outcomes of previous analyses on the same dataset. Standard statistical guarantees do not account for these dependencies and little is known about how to provably avoid overfitting and false discovery in the adaptive setting. We consider a natural formalization of this problem in which the goal is to design an algorithm that, given a limited number of i.i.d.~samples from an unknown distribution, can answer adaptively-chosen queries about that distribution.

We present an algorithm that estimates the expectations of $k$ arbitrary adaptively-chosen real-valued estimators using a number of samples that scales as $\sqrt{k}$. The answers given by our algorithm are essentially as accurate as if fresh samples were used to evaluate each estimator. In contrast, prior work yields error guarantees that scale with the worst-case sensitivity of each estimator.
We also give a version of our algorithm that can be used to verify answers to such queries where the sample complexity depends logarithmically on the number of queries $k$ (as in the reusable holdout technique).

Our algorithm is based on a simple approximate median algorithm that satisfies the strong stability guarantees of differential privacy. Our techniques provide a new approach for analyzing the generalization guarantees of differentially private algorithms.
\end{abstract}

\section{Introduction}

Modern data analysis is an iterative process in which multiple algorithms are run on the same data, often in an adaptive fashion: the algorithms used in each step depend on the results from previous steps. Human decisions may also introduce additional, implicit dependencies between the data and the algorithms being run. In contrast, theoretical analysis in machine learning and statistical inference focuses on fixed and pre-specified algorithms run on ``fresh" data. The misuse of theoretical guarantees given by such analyses in real-world data analysis can easily lead to overfitting and false discovery.

While the issue has been widely recognized and lamented for decades, until recently the only known ``safe" approach for dealing with  general dependencies that can arise in the adaptive setting was ``data splitting'' --- that is, dividing up the data so that untouched data is available when needed. While easy to analyze, data splitting is overly conservative for most settings and prohibitively expensive for analyses that include multiple adaptive steps. In a recent work, Dwork, Feldman, Hardt, Pitassi, Reingold, and Roth \citeyearpar{DworkFHPRR14:arxiv} have proposed a new formalization of this problem. In their formulation, the analyst (or data analysis algorithm) does not have direct access to the dataset and instead can
ask queries about the (unknown) data distribution. Each query corresponds to a procedure that the analyst wishes to execute on the data. The challenge is thus to design an algorithm that provides answers to queries that are close to answers that would have been obtained had the corresponding analyses been run on samples freshly drawn from the data distribution.


\subsubsection*{Previous Work}
\citet{DworkFHPRR14:arxiv} consider {\em statistical queries} \citep{Kearns:98}. Each statistical query is specified by a function $\psi:\X \rar [-1,1]$ and corresponds to analyst wishing to compute
$\ex{X\sim \cP}{\psi(X)}$, where $\cP$ is the data distribution (usually done by using the empirical mean of $\psi$ on the dataset). A value $v \in \R$ is a valid response to such query if $|v - \ex{X\sim \cP}{\psi(X)}| \leq \tau$, where the parameter $\tau$ determines the desired accuracy. Dwork \etal demonstrated that, if a query answering algorithm $M : \X^n \to [-1,1]$ is \emph{differentially private}\footnote{Differential privacy is a strong notion of algorithmic stability developed in the context of privacy preserving data analysis \citep{DworkMNS:06}. See Definition \ref{def:DP}.} as well as empirically accurate --- that is, $\left|M(S) - \frac{1}{n} \sum_{i \in [n]} \psi(S_i) \right| \leq \frac{\tau}{2}$ with high probability --- then the answers produced by $M(S)$ will be valid with high probability if we take $S \sim \cP^n$ to be $n$ i.i.d.~samples from the data distribution. Using algorithms from the differential privacy literature, they demonstrated several improvements over sample splitting. For example, they showed that simply perturbing the empirical mean by adding Gaussian noise allows answering $k$ queries using a dataset whose size scales as $\sqrt{k}$. In contrast, either sample splitting or answering using empirical averages both require a dataset whose size scales linearly in $k$ \citep{DworkFHPRR14:arxiv}. (Note that if the queries were non-adaptive, $\log k$ scaling would be sufficient.) They also showed a computationally inefficient algorithm that has optimal $\log k$ dependence (at the expense of an additional $\sqrt{\log|\X|}$ factor, where $\X$ is the set of possible data points, i.e.~the potential support of $\cP$.). 

Bassily, Nissim, Smith, Steinke, Stemmer, and Ullman \citeyearpar{BassilyNSSSU16}  extended the results of Dwork \etal to \emph{low-sensitivity queries}, where a query is specified by a function $\phi : \X^n \to \R$ such that $|\phi(s)-\phi(s')|\leq 1/n$ if $s$ and $s'$ differ on a single element. Each such query corresponds to computing a real-valued estimator (or statistic) on the dataset. A value $v \in \R$ is a valid response to such query if $|v - \ex{S\sim \cP^n}{\phi(S)}| \leq \tau$. (Note that the empirical mean of a statistical query $\psi$ is a low-sensitivity query, namely $\phi(s)=\frac{1}{n} \sum_{i \in [n]} \psi(s_i)$.) They also introduced a simpler and quantitatively sharper analysis showing that answering $k$ statistical queries $\psi : \X \to [-1,1]$ to accuracy $\tau$ is possible efficiently with $n=\tilde{O}(\sqrt{k}/\tau^2)$ samples and inefficiently with $n = \tilde{O}(\sqrt{\log|\X|} \log(k)/\tau^3)$ samples --- in both cases improving the bounds of Dwork \etal~by a $\sqrt{\tau}$ factor.

\citet{HardtU14} and, subsequently with tighter parameters, \citet{SteinkeU15} showed 
that answering $k$ adaptively-chosen statistical queries (even to fixed accuracy, e.g.~$\tau=0.1$) requires a number of samples $n$ that scales with $\sqrt{k}$ in the worst case. This lower bound holds under one of two assumptions: Either the data universe is large -- that is, $|\X| \geq 2^k$ -- or $M$ is assumed to be computationally efficient so that it cannot break symmetric cryptography with $\log|\X|$-bit secret keys.

Still, in many applications the effects of adaptivity on generalization are much milder than in the adversarial setting considered in the lower bounds. For such settings, \citet{DworkFHPRR15:science} proposed the {\em reusable holdout} technique, which allows answering a large number of ``verification" queries. In this technique, the analyst has unresricted access to most of the dataset, but sets aside a subset of data as a ``holdout'' that is only accessed via queries. Given a query $\psi$ and an estimate $v$, the goal of the algorithm is to verify that $|v - \ex{X\sim \cP}{\psi(X)}| \lesssim \tau$. If the estimate is valid, then the algorithm only needs to respond ``Yes". Otherwise, it outputs ``No'' and a valid estimate. They demonstrated that it is possible to verify $k$ statistical queries while correcting at most $\ell$ of them using $n=\tilde{O}(\sqrt{\ell}\log(k)/\tau^2)$ samples.

Generalization guarantees in the adaptive setting can also be derived using related notions of stability such as stability in KL divergence \citep{BassilyNSSSU16,RaginskyRTWX16}. \citet{DworkFHPRR15:science} have  introduced an alternative approach to proving generalization guarantees in this setting that relies on bounding the amount of information between the choice of estimator and the dataset. For example, results obtained via differential privacy can also be derived using the notion of approximate max-information \citep{DworkFHPRR15:arxiv,RogersRST16}. Weaker forms of generalization guarantees can also be obtained via the standard mutual information \citep{RussoZ16,XuR17}.

\subsubsection*{Our Results}
In this work, we demonstrate a simple algorithm that can provide accurate answers to adaptively-chosen queries corresponding to arbitrary real-valued estimators. Specifically, let  $\phi : \X^t \to \R$ be an arbitrary estimator, where the expectation $\ex{Z \sim \cP^t}{\phi(Z)}$ is equal (in which case the estimator is referred to as unbiased) or sufficiently close to some parameter or value of interest. The number of samples $t$ that $\phi$ uses corresponds to the desired estimation accuracy (naturally, larger values of $t$ allow more accurate estimators). Our algorithm estimates the expectation $\ex{Z \sim \cP^t}{\phi(Z)}$ to within (roughly) the standard deviation of $\phi(Z)$ --- i.e.~$\tau \approx \sdv(\phi(\cP^t)) \doteq \sqrt{\var{Z \sim \cP^t}{\phi(Z)}}$ --- or, more generally, to within the interquartile range of the distribution of $\phi$ on fresh data (i.e.~the distribution of $\phi(Z)$ for $Z \sim \cP^t$, which we denote by $\phi(\cP^t)$). If $\phi(s)=\frac{1}{t} \sum_{i = 1}^t \psi(s_i)$ for a function $\psi : \X \to [-1,1]$, then the error roughly scales as $\tau \approx \sdv(\psi(\cP))/\sqrt{t}$. This gives a natural interpretation of $t$ as an accuracy parameter.

In contrast, given a comparable number of samples, the existing algorithms for statistical queries \citep{DworkFHPRR14:arxiv,BassilyNSSSU16} give an estimate with accuracy roughly $\tau \approx \sqrt{1/t}$ regardless of the variance of the query. This is not just an artifact of existing analysis since, to ensure the necessary level of differential privacy, this algorithm adds noise whose standard deviation scales as $\sqrt{1/t}$. For a statistical query $\psi : \X \to [-1,1]$, the standard deviation of $\psi(\cP)$ is upper bounded by $1$, but is often much smaller. For example, when estimating the accuracy of a binary classifier with (low) error $p$ our algorithm will give an estimate with accuracy that scales as $\sqrt{p/t}$, rather than $\sqrt{1/t}$. 


We now describe the guarantees of our algorithm more formally starting with the simple case of statistical queries. Given a statistical query $\psi:\X \rar [-1,1]$ and $t$ fresh random i.i.d.~samples $Z \in \X^t$ drawn from $\cP$, the empirical mean estimator $\phi(Z) \doteq \frac{1}{t} \sum_{i \in [t]} \psi(Z_i)$ is an unbiased estimator of $\ex{X \sim \cP}{\psi(X)}$ with standard deviation equal to $\sdv(\psi(\cP))/\sqrt{t}$. Applied to such estimators, our algorithm answers adaptively-chosen statistical queries with accuracy that scales as $\sdv(\psi(\cP))/\sqrt{t}$.
\begin{thm}\label{thm:intro-sq}
For all $\zeta>0$, $t\in\mathbb{N}$, $\beta>0$, $k\in\mathbb{N}$, and $n\geq n_0= O(t \sqrt{k\log(1/\beta)} \log(k/\beta\zeta))$, there exists an efficient algorithm $M$ that, given $n$ samples from an unknown distribution $\cP$, provides answers $v_1, \ldots, v_k \in [-1,1]$ to an adaptively-chosen sequence of queries $\psi_1, \ldots, \psi_k : \X \to [-1,1]$ and satisfies: $$\prob{S \sim \cP^n \atop M(S)}{\forall j \in [k] ~~~ \left|v_j-\ex{X \sim \cP}{\psi_j(X)}\right| \leq 2 \cdot \frac{\sdv(\psi_j(\cP))}{\sqrt{t}} + \zeta} \geq 1-\beta.$$
\end{thm}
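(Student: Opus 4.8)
The plan is to reduce the theorem to a single primitive---privately computing an \emph{approximate median} of many independent unbiased estimates---and then to establish generalization by reinterpreting this median computation as an adaptively-chosen low-sensitivity query. First, split the $n=mt$ samples of $S\sim\cP^n$ into $m=\Theta\!\left(\sqrt{k\log(1/\beta)}\,\log(k/\beta\zeta)\right)$ disjoint blocks $B_1,\dots,B_m\in\X^t$, so that the blocks are i.i.d.\ draws from $\cP^t$, and fix a grid $G\subseteq[-1,1]$ of resolution $\zeta$. When the analyst poses $\psi_j:\X\to[-1,1]$, the algorithm forms the block-estimates $y_{j,i}\doteq\phi_j(B_i)$ for $i\in[m]$, where $\phi_j(z)\doteq\frac1t\sum_{\ell\in[t]}\psi_j(z_\ell)$; these are i.i.d.\ copies of an unbiased estimator of $\mu_j\doteq\ex{X\sim\cP}{\psi_j(X)}$ with $\var{Z\sim\cP^t}{\phi_j(Z)}=\sdv(\psi_j(\cP))^2/t$. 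It then returns $v_j\in G$, the output of our differentially private approximate-median subroutine---the exponential mechanism over $G$ with a score rewarding grid points that, up to the grid resolution, are close to being an exact median of $\{y_{j,i}\}_i$---run with privacy parameter $\varepsilon_0$.

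Privacy is immediate: each answer $v_j$ is an $\varepsilon_0$-differentially private function of the block-dataset, so by advanced composition \citep{DworkFHPRR14:arxiv} the entire interaction $M$ is $(\varepsilon,\delta)$-differentially private (Definition~\ref{def:DP}) with $\varepsilon=O\!\left(\varepsilon_0\sqrt{k\log(1/\delta)}\right)$; set $\delta=\beta$ and choose $\varepsilon_0$ so that $\varepsilon$ is a sufficiently small absolute constant. The utility guarantee of the subroutine gives, for each call and hence---by a union bound over the $k$ calls---simultaneously for all $j$ with probability at least $1-\beta$, that $v_j$ is a $\gamma$-approximate median of $\{y_{j,i}\}_i$, i.e.\ $\frac1m\#\{i:y_{j,i}\le v_j\}\in[\tfrac12-\gamma,\tfrac12+\gamma]$ (with the grid contributing an additional $O(\zeta)$ slack that propagates to the final bound), where $\gamma=O\!\left(\log(|G|k/\beta)/(\varepsilon_0 m)\right)$; since $|G|=O(1/\zeta)$, this is an arbitrarily small constant precisely once $m$ meets the bound above.

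The heart of the proof is to convert this statement about the \emph{empirical} distribution of the blocks into one about the \emph{population} law $\phi_j(\cP^t)$, despite $\psi_j$---and hence $v_j$---being chosen adaptively. For each $j$, consider the function $q_j:\X^t\to\{0,1\}$ defined by $q_j(z)=\ind[\phi_j(z)\le v_j]$. Replacing one block changes $\frac1m\sum_i q_j(B_i)$ by at most $1/m$, so $q_j$ is a low-sensitivity query; it is adaptively determined by the $(\varepsilon,\delta)$-differentially private transcript; and its empirical mean over the blocks is, by the previous paragraph, within $\gamma$ of $1/2$. Applying the transfer theorem for differentially private answering of adaptively-chosen low-sensitivity queries \citep{BassilyNSSSU16}---or, equivalently, a direct ``monitor'' argument that bounds $\ex{S}{\sum_i q_{j^\ast}(B_i)}$ for the worst index $j^\ast$ by using differential privacy to swap each block for an independent fresh copy and comparing with the empirical-median lower bound---yields, with probability at least $1-\beta$ over $S\sim(\cP^t)^m$, that for all $j\in[k]$, $$\prob{Z\sim\cP^t}{\phi_j(Z)\le v_j}\in\left[\tfrac12-\gamma-\alpha,\ \tfrac12+\gamma+\alpha\right],$$ where $\alpha$ can be made smaller than any prescribed constant by taking $\varepsilon$ small and $m$ large, so that in particular $\gamma+\alpha<1/4$. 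Equivalently, $v_j$ lies (up to the grid resolution $\zeta$) between the $(\tfrac12-\gamma-\alpha)$- and $(\tfrac12+\gamma+\alpha)$-quantiles of $\phi_j(\cP^t)$. This step is the main obstacle: it is where adaptivity must be controlled and where the differential privacy of the median subroutine is essential; everything else is bookkeeping.

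It remains to locate those quantiles relative to $\mu_j$. By Chebyshev's inequality, $\prob{Z\sim\cP^t}{|\phi_j(Z)-\mu_j|\ge 2\,\sdv(\psi_j(\cP))/\sqrt t}\le \var{Z\sim\cP^t}{\phi_j(Z)}\cdot t/\bigl(4\,\sdv(\psi_j(\cP))^2\bigr)=1/4$, so any quantile of $\phi_j(\cP^t)$ at a level strictly inside $(1/4,3/4)$---in particular both of the quantiles above---lies in the interval $[\mu_j-2\,\sdv(\psi_j(\cP))/\sqrt t,\ \mu_j+2\,\sdv(\psi_j(\cP))/\sqrt t]$. Since $v_j$ lies between these two quantiles up to the grid resolution, $|v_j-\mu_j|\le 2\,\sdv(\psi_j(\cP))/\sqrt t+\zeta$ simultaneously for all $j\in[k]$, with total failure probability at most $\beta$ after rescaling the constant failure probabilities of the individual steps. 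Tracking parameters---$\varepsilon$ a small constant, $\delta=\beta$, $\varepsilon_0=\Theta\!\left(1/\sqrt{k\log(1/\beta)}\right)$, $\gamma$ a small constant forcing $m=\Theta\!\left(\sqrt{k\log(1/\beta)}\,\log(k/\beta\zeta)\right)$, and $|G|=O(1/\zeta)$---gives $n=mt=O\!\left(t\sqrt{k\log(1/\beta)}\,\log(k/\beta\zeta)\right)$ as claimed. The same argument applies verbatim with an arbitrary bounded estimator $\phi:\X^t\to\R$ in place of $\phi_j$, giving error $2\,\sdv(\phi(\cP^t))+\zeta$; this is the general form of the result.
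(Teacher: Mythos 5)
Your proposal is correct and follows essentially the same route as the paper: subsample into $m$ blocks, privately output an approximate median of the block-estimates via the exponential mechanism over a $\zeta$-grid, apply adaptive composition, transfer the empirical quantile guarantee to the population via the indicator queries $\ind[\phi_j(z)\le v_j]$ (the paper's monitor argument in Theorem~\ref{thm:gen-adapt}), and finish with Chebyshev. The only cosmetic difference is that the paper explicitly uses both one-sided indicators $\ind(\phi_j(z)\le v_j)$ and $\ind(\phi_j(z)\ge v_j)$ to control the two tails separately, which your two-sided conclusion implicitly requires as well.
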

Note that our guarantees also have an additional $\zeta$ {\em precision} term. The dependence of sample complexity on $1/\zeta$ is logarithmic and can be further improved using more involved algorithms. Thus the precision term $\zeta$, like the failure probability $\beta$, can be made negligibly small at little cost.  Previous work \citep{DworkFHPRR14:arxiv,BassilyNSSSU16} gives an error bound of $\sqrt{1/t}$, which is at least as large (up to constant factors) as our bound $\sdv(\psi_j(\cP))/\sqrt{t} + \zeta$, when $\zeta \leq 1/t$.
For comparison, in the non-adaptive setting the same error guarantee can be obtained using $n=O(t\log(k/\beta))$ samples (with $\zeta=1/t$).

Note that, for simplicity, in Theorem \ref{thm:intro-sq} the range of each query is normalized to $[-1,1]$. However this normalization affects only the precision term $\zeta$. In particular, for queries whose range is in an interval of length at most $b$, the number of samples that our result requires to get precision $\zeta$ scales logarithmically in $b/\zeta$. In contrast, the sample complexity of previous results scales quadratically in $b$.  Further, a more refined statement discussed below allows us to handle queries with arbitrary range.

For general real-valued estimators of the form $\phi:\X^t\rar [-1,1]$ our algorithm gives the following guarantees. (For simplicity we will assume that $t$ is fixed for all the queries.)
\begin{thm}\label{thm:intro-var}
For all $\zeta>0$, $t\in\mathbb{N}$, $\beta>0$, $k\in\mathbb{N}$, and $n\geq n_0=O(t \sqrt{k\log(1/\beta)} \log(k/\beta\zeta))$, there exists an efficient algorithm $M$ that, given $n$ samples from an unknown distribution $\cP$, provides answers $v_1, \ldots, v_k \in [-1,1]$ to an adaptively-chosen sequence of queries $\phi_1, \ldots, \phi_k : \X^t \to [-1,1]$ and satisfies: $$\prob{S \sim \cP^n \atop M(S)}{\forall j \in [k] ~~~ \left|v_j-\ex{Z \sim \cP^t}{\phi_j(Z)}\right| \leq 2\cdot \sdv(\phi_j(\cP^t)) +\zeta} \geq 1-\beta.$$
\end{thm}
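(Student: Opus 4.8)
The plan is to reduce the problem to a differentially private ``approximate median'' subroutine, combined with the generalization properties of differential privacy. I would have $M$ split its input $S \in \X^n$ into $m = \lfloor n/t \rfloor$ disjoint blocks $S_1, \dots, S_m \in \X^t$, which are then i.i.d.\ draws from $\cP^t$. On the $j$-th query $\phi_j : \X^t \to [-1,1]$, $M$ computes $a_{j,i} \doteq \phi_j(S_i)$ for $i \in [m]$ --- these are i.i.d.\ samples from $\D_j \doteq \phi_j(\cP^t)$ --- and outputs $v_j = A(a_{j,1},\dots,a_{j,m})$, where $A$ is a differentially private approximate-median algorithm (the ``stable median'' of the abstract). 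Concretely, $A$ can be the exponential mechanism over an $O(1/\zeta)$-point grid of $[-1,1]$, which is efficient and, once $m \gtrsim \log(1/(\zeta\beta_0))/\veps_0$, outputs with probability at least $1-\beta_0$ a point within $\zeta$ (in value) of a genuine empirical quantile whose rank among the $a_{j,i}$ is within $\alpha m$ of $m/2$, where $\alpha = O(\log(1/(\zeta\beta_0))/(\veps_0 m))$.

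Three ingredients would then be combined. \emph{Privacy:} regarding $M$ as a function of the block-dataset $(S_1,\dots,S_m) \in (\X^t)^m$, changing one block perturbs one coordinate of the input to every call of $A$, so each call is $(\veps_0,\delta_0)$-DP with respect to a block; by adaptive composition the whole interaction $S \mapsto (v_1,\dots,v_k)$ is $(\veps,\delta)$-DP with $\veps = O(\veps_0\sqrt{k\log(1/\delta')})$ and $\delta = k\delta_0 + \delta'$. \emph{Transfer:} since $M$ is DP and the blocks are i.i.d.\ from $\cP^t$, the empirical CDF of the $a_{j,i}$ at the data-dependent point $v_j$ stays close to the true CDF $\cump{\D_j}{v_j} = \prob{Z \sim \cP^t}{\phi_j(Z) \le v_j}$; I would show (uniformly over $j$, either by a union bound or by the standard ``monitor'' reduction that selects the worst query and applies the single-query bound to the composed DP algorithm $M$) that with probability $1-O(\beta)$, for all $j$, $\bigl| \cump{\D_j}{v_j} - \tfrac1m\#\{i: a_{j,i}\le v_j\} \bigr| \le \gamma$ with $\gamma = O(\veps + \sqrt{\log(1/\delta)/m})$, provided $\delta \lesssim \veps\beta$. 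Combined with the empirical accuracy of $A$, this makes each $v_j$ a $p_j$-quantile of $\D_j$ up to an additive $\zeta$ in value, for some $p_j \in [\tfrac12 - \alpha - \gamma, \tfrac12 + \alpha + \gamma]$. \emph{Quantiles control the mean:} Cantelli's inequality gives $\quant{\D_j}{1/4} \ge \ex{Z\sim\cP^t}{\phi_j(Z)} - \sqrt3\,\sdv(\phi_j(\cP^t))$ and $\quant{\D_j}{3/4} \le \ex{Z\sim\cP^t}{\phi_j(Z)} + \sqrt3\,\sdv(\phi_j(\cP^t))$, so once $m$ is large enough that $\alpha + \gamma \le 1/4$ we get $\bigl|v_j - \ex{Z\sim\cP^t}{\phi_j(Z)}\bigr| \le \sqrt3\,\sdv(\phi_j(\cP^t)) + \zeta < 2\,\sdv(\phi_j(\cP^t)) + \zeta$ for all $j$ at once.

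For the parameters to line up: $\veps \le 1/8$ forces $\veps_0 = \Theta(1/\sqrt{k\log(1/\delta')})$; then $\alpha \le 1/8$ forces $m = \Omega(\log(1/(\zeta\beta_0))/\veps_0) = \Omega(\sqrt{k\log(1/\delta')}\,\log(1/(\zeta\beta_0)))$, and the remaining requirements ($\sqrt{\log(1/\delta)/m}\le 1/8$, $\delta \lesssim \veps\beta$, $\beta_0 \lesssim \beta/k$) only contribute logarithmic factors, giving $m = O(\sqrt{k\log(1/\beta)}\,\log(k/(\beta\zeta)))$ and hence $n = mt$ as claimed.

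The hard part is the transfer step: proving that a differentially private algorithm returning an approximate \emph{empirical} median in fact returns an approximate \emph{population} median, and doing so uniformly over the $k$ adaptively chosen queries. The non-adaptive core is a self-contained ``stability implies generalization'' argument tailored to the median --- if $M$ returned a $v$ with $\cump{\D}{v}$ bounded away from $1/2$, then on a \emph{fresh} i.i.d.\ block-sample $v$ would have empirical rank bounded away from $m/2$ with high probability, and $(\veps,\delta)$-DP transports that conclusion back to the actual sample, contradicting the empirical accuracy of $A$; this is where the bound $\gamma = O(\veps + \sqrt{\log(1/\delta)/m})$ and the condition $\delta \lesssim \veps\beta$ originate. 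The adaptive version then follows by collapsing the $k$ calls into one DP algorithm and invoking this argument on the worst query via the monitor technique.
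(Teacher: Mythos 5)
Your proposal is correct and follows essentially the same route as the paper: block the data into $m=n/t$ subsamples, answer each query with an exponential-mechanism approximate median over a $\zeta$-grid, use adaptive composition plus the DP-to-generalization transfer (via the monitor reduction) to move from an empirical to a population quantile guarantee, and finish with a Chebyshev-type bound relating quantiles to the mean and standard deviation. The only cosmetic difference is your use of Cantelli's inequality (giving $\sqrt{3}\,\sdv$) where the paper uses Chebyshev (giving $2\,\sdv$); both suffice for the stated bound.
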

 For general estimators, previous work gives accuracy guarantees in terms of the worst-case sensitivity. More formally, for $\phi:\X^n \rar \R$, let $\Delta(\phi)$ denote the worst-case sensitivity of $\phi$ --- that is, $\Delta(\phi) = \max_{z,z'} \phi(z)-\phi(z')$, where the maximum is over $z,z' \in \X^n$ that differ in a single element. 
The analysis of  \citet{BassilyNSSSU16} shows that for $k$ adaptively-chosen queries $\phi_1, \ldots, \phi_k$, each query $\phi_i$, can be answered with accuracy $\sqrt{n \cdot \sqrt{k}} \cdot \Delta(\phi_i)$ (ignoring logarithmic factors and the dependence on the confidence $\beta$). This setting is somewhat more general than ours, since each query is applied to the entire dataset available to the algorithm, whereas in our setting each query is applied to fixed-size subsamples of the dataset. This means that in this setting the space of estimators that can be applied to data is richer than in ours and might allow better estimation of some underlying quantity of interest. At the same time, our techniques gives better accuracy guarantees for finding the expectation of each estimator.

To see the difference between our setting and that in \citep{BassilyNSSSU16}, consider estimation of the lowest expected loss of a function from a family $\F$, namely\\$L^* \doteq \min_{f\in \F}\E_{X\sim \cP}[L(f,X)]$, where $L:\F\times \X \rar [-R,R]$ is some loss function. Given a dataset $s$ of size $n$, the standard ERM estimator is defined as $\phi_n(s) \doteq \min_{f\in \F}\fr{n} \sum_{i\in[n]} L(f,s_i)$. Using uniform convergence results, one can often obtain that $\left| L^* - \E_{S\sim \cP^n}[\phi_n(S)]\right| = O(d/\sqrt{n})$, for some $d$ that measures the capacity of $\F$ and also depends on $L$. The sensitivity of the estimator $\phi_n$ is upper bounded by $2R/n$. Thus the algorithm in \citep{BassilyNSSSU16} will give an estimate of $\E_{S\sim \cP^n}[\phi_n(S)]$ within (roughly) $R \cdot \sqrt{\frac{\sqrt{k}}{n}}$ and an upper bound on the total error will scale as $(d + R\cdot k^{1/4})/\sqrt{n}$. In our setting, the estimator $\phi_t$ will be used, where $t$ scales as $n/\sqrt{k}$. The bias of this estimator is $\left| L^* - \E_{S\sim \cP^t}[\phi_t(S)] \right| = O(d/\sqrt{t}) = O(d \cdot k^{1/4}/\sqrt{n})$. At the same time we can estimate $\E_{S\sim \cP^t}[\phi_t(S)]$ within (roughly) the standard deviation of $\phi_t$. The standard deviation of $\phi_t$ is always upper bounded by $2R \cdot \sqrt{t} = R \cdot k^{1/4}/\sqrt{n}$, but is often much smaller. Hence, depending on the setting of the parameters and the distribution $\cP$, our approach gives an error bound that is either higher by a factor of $k^{1/4}$ or lower by a factor of $R/d$ (than the approach in \citep{BassilyNSSSU16}). In other words, the two approaches provide incomparable guarantees for this problem.




Both Theorem \ref{thm:intro-var} and Theorem \ref{thm:intro-sq} are corollaries of the following more general result. Define $$\iqr{\D}{a,b} \doteq \{v \in \mathbb{R} ~:~ \pr_{Y \sim \D}[Y \leq v] > a ~\wedge~ \pr_{Y \sim \D}[Y < v] < b\}$$ to be the $(a,b)$-\emph{quantile interval} of the distribution $\D$. We refer to the $(1/4,3/4)$-quantile interval as the \emph{interquartile interval}.
\begin{thm}\label{thm:intro-iqr}
For all $T \subset \mathbb{R}$ with $|T|<\infty$, $t\in\mathbb{N}$, $\beta>0$, $k\in\mathbb{N}$, and $$n\geq n_0= O(t \sqrt{k\log(1/\beta)} \log(|T|k/\beta)),$$ there exists an efficient algorithm $M$ that, given $n$ samples from an unknown distribution $\cP$, provides answers $v_1, \ldots, v_k \in T$ to an adaptively-chosen sequence of queries $\phi_1, \ldots, \phi_k : \X^t \to T$ and satisfies: $$\prob{S \sim \cP^n \atop M(S)}{\forall j \in [k] ~~~ v_j \in \iqr{\phi_j(\cP^t)}{\frac{1}{4},\frac{3}{4}}} \geq 1-\beta.$$
\end{thm}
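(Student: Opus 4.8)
The plan is to reduce the whole problem to one differentially private primitive --- an approximate-median mechanism --- and then let differential privacy serve two purposes: adaptive composition across the $k$ queries, and transfer of empirical accuracy to population accuracy. For the primitive, given a list $(y_1,\dots,y_m)\in T^m$, I would run the exponential mechanism over $T$ with score $q(v;y_1,\dots,y_m)=\min\big(|\{i:y_i\le v\}|,\,|\{i:y_i\ge v\}|\big)$. Changing one $y_i$ changes $q(v;\cdot)$ by at most $1$ for every $v$, so the mechanism is $\veps$-differentially private, and the standard exponential-mechanism utility bound gives that with probability at least $1-\beta'$ its output $v$ satisfies $q(v;\cdot)\ge\big(\max_{v'}q(v';\cdot)\big)-\frac{2}{\veps}\ln(|T|/\beta')$. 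Since any empirical median of the $y_i$'s has score at least $\lfloor m/2\rfloor$, the returned $v$ then satisfies $|\{i:y_i\le v\}|\ge m/2-\alpha$ and $|\{i:y_i\ge v\}|\ge m/2-\alpha$ with $\alpha=\frac2\veps\ln(|T|/\beta')$; call such a $v$ an $\alpha$-approximate empirical median.

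The algorithm $M$ splits its $n$ samples into $m=n/t$ disjoint blocks $B_1,\dots,B_m\in\X^t$; to answer $\phi_j$ it evaluates $\phi_j(B_1),\dots,\phi_j(B_m)\in T$ and returns the $\alpha$-approximate median of these values via the primitive, run with a per-query privacy parameter $\veps$ and internal failure probability $\beta'=\beta/(4k)$. Changing one block changes exactly one of the $m$ inputs to exactly one invocation, so each step is $\veps$-DP and, by advanced composition, the whole (adaptively chosen) interaction is $(\veps_0,\delta_0)$-DP with respect to one-block changes, where I fix $\veps_0$ to a sufficiently small absolute constant and $\delta_0$ polynomially small in $\beta/k$; this forces $\veps=\Theta\big(\veps_0/\sqrt{k\log(1/\delta_0)}\big)$. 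Demanding $m\ge 100\,\alpha$ so that the approximate-median error is at most $m/100$, I then get $m=\Theta\big(\sqrt{k\log(1/\beta)}\,\log(|T|k/\beta)\big)$, hence $n=mt=O\big(t\sqrt{k\log(1/\beta)}\,\log(|T|k/\beta)\big)$, as claimed.

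For the transfer, a union bound over the $k$ queries shows that with probability at least $1-\beta/4$ over $M$'s coins (for every fixed $S$), every $v_j$ is an $\alpha$-approximate empirical median of $(\phi_j(B_1),\dots,\phi_j(B_m))$, so at least $0.49m$ of the blocks satisfy $\phi_j(B_\ell)\le v_j$ and at least $0.49m$ satisfy $\phi_j(B_\ell)\ge v_j$. On this event, if $v_j\notin\iqr{\phi_j(\cP^t)}{1/4,3/4}$ then, by the definition of the quantile interval, either $\Pr_{Z\sim\cP^t}[\phi_j(Z)\le v_j]\le 1/4$ or $\Pr_{Z\sim\cP^t}[\phi_j(Z)\ge v_j]\le 1/4$; take the former and set $g_j(z)=\ind[\phi_j(z)\le v_j]$. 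Because $B=(B_1,\dots,B_m)\sim(\cP^t)^m$ is an i.i.d.\ sample and the pair $(\phi_j,v_j)$ --- hence $g_j$ --- is a post-processing of the $(\veps_0,\delta_0)$-DP transcript and so $(\veps_0,\delta_0)$-DP as a function of $B$, the bad event becomes ``a DP-selected Boolean query whose empirical mean $\frac1m\sum_\ell g_j(B_\ell)\ge 0.49$ exceeds its population mean $\ex{Z\sim\cP^t}{g_j(Z)}\le 1/4$'', i.e.\ a genuine constant-size overfitting event. Invoking a differential-privacy-to-generalization transfer --- the bound of \citet{BassilyNSSSU16}, or (as the abstract advertises) a cleaner transfer tailored to threshold/quantile queries --- with $\veps_0$ a small constant, $\delta_0$ tiny, and $m=\Omega(\log(k/\beta))$ bounds this probability by $\beta/(4k)$ per query; a union bound over the queries and the two symmetric cases, together with the $\beta/4$ from the empirical step, gives overall failure probability at most $\beta$.

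I expect the transfer step to be the crux: obtaining a generalization-from-DP bound with the right quantitative shape --- needing only $\veps_0$ constant and $m$ merely logarithmic in $1/\beta$ while still ruling out an $\Omega(1)$ empirical/population gap --- and dovetailing it with an adaptive-composition bound that pays only $\sqrt k$ rather than $k$. The feature that should make this work (and is presumably the paper's ``new approach'') is that we never need to control an arbitrary low-sensitivity functional of the data: we only need to know where a DP-chosen threshold $v_j$ sits relative to a DP-chosen function $\phi_j$, which collapses the question to the behavior of DP-computed \emph{Boolean} queries --- exactly the regime in which stability-based generalization is sharpest --- at the cost of passing from a single query to its two tail events.
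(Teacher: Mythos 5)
Your proposal is correct and follows essentially the same route as the paper: split the sample into $m=n/t$ blocks, answer each query with an exponential-mechanism approximate median of the per-block values, apply advanced composition over the $k$ queries at the block level, and then transfer the empirical quantile guarantee to the population via the two DP-selected threshold indicators $\ind[\phi_j(\cdot)\le v_j]$ and $\ind[\phi_j(\cdot)\ge v_j]$. The only cosmetic difference is that you invoke the high-probability DP-to-generalization transfer per query with a union bound, whereas the paper packages the same step as a single application of its transfer theorem to a ``monitor'' that selects the worst of the $2k$ Boolean queries — and the paper itself notes that for the $(1/4,3/4)$ case the black-box bound of Bassily et al.\ suffices, exactly as you assume.
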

We make two remarks about the guarantee of the theorem:

\emph{Accuracy in terms of interquartile interval:} The accuracy guarantee of Theorem \ref{thm:intro-iqr} is that each returned answer lies in the $(1/4,3/4)$-quantile interval of the distribution of the query function on fresh data (i.e.~the distribution $\phi(\cP^t)$). The length of this interval is referred to as the {\em interquartile range}. This guarantee may appear strange at first sight, 
but it is actually a strengthening of Theorem \ref{thm:intro-var}: by Chebyshev's inequality, the interquartile interval of any distribution $\D$ lies within two standard deviations of the mean: $$\iqr{\D}{\frac{1}{4},\frac{3}{4}} \subseteq \left[\ex{Y \sim \D}{Y} - 2\cdot\sdv(\D) , \ex{Y \sim \D}{Y} + 2\cdot\sdv(\D) \right].$$
However, the interquartile interval may be significantly smaller if $\D$ is heavy-tailed. If, for example, the distribution $\D$ has infinite variance, then our guarantee is still useful, whereas bounds in terms of standard deviation will be meaningless.
This formulation does not even assume that the quantity of interest is the expectation of $\phi(\cP^t)$ (or even that this expectation exists). In fact, $\phi$ could be a biased estimator of some other parameter of interest.

Intuitively, we can interpret this accuracy guarantee as follows. If we know that a sample from $\phi(\cP^t)$ is an acceptable answer with probability at least $3/4$ and the set of acceptable answers forms an interval, then with high probability the answer returned by our algorithm is acceptable. The constants $1/4$ and $3/4$ are, of course, arbitrary. More generally we can demand $v_j \in \iqr{\phi_j(\cP^t)}{a,b}$ for any $0 \leq a < b \leq 1$. However, this reduction increases the sample complexity $n_0$ by a factor of $1/(b-a)^2$.

\emph{Finite range $T$:} Theorem \ref{thm:intro-iqr} assumes that the queries have a finite range. This is necessary for our algorithm, as the required number of samples grows with the size of $T$, albeit slowly.\footnote{Using a more involved algorithm from \citet{BunNSV15}, it is possible to improve the dependence on the size of $T$ from $O(\log |T|)$ to $2^{O(\log^* |T|)}$, where $\log^*$ denotes the iterated logarithm --- an extremely slow-growing function.} To obtain Theorem \ref{thm:intro-var} from Theorem \ref{thm:intro-iqr}, we simply set $T$ to be the discretization of $[-1,1]$ with granularity $\zeta$ and round the output of $\phi : \X^t \to [-1,1]$ to the nearest point in $T$; this introduces the additive error $\zeta$. However, allowing $T$ to be arbitrary provides further flexibility. For example, $T$ could be a grid on a logarithmic scale, yielding a multiplicative, rather than additive, accuracy guarantee. Furthermore, in some settings the range of the query is naturally finite and the scale-free guarantee of Theorem \ref{thm:intro-iqr} comes at no additional cost. We also note that, in general, our result allows a different $T_j$ to be chosen (adaptively) for each query $\phi_j$ as long as the size of each $T_j$ is upper-bounded by some fixed value.

Another advantage of this formulation is that it removes the dependence on the entire range of $\phi$: If we know that the interquartile interval of $\phi(\cP^t)$ lies in some interval $[a,b]$, we can truncate the output range of $\phi$ to $[a,b]$ (and then discretize if necessary). This operation does not affect the interquartile interval of $\phi(\cP^t)$ and hence does not affect the guarantees of our algorithm. In particular, this means that in Theorem \ref{thm:intro-var} we do not need to assume that the range of each $\phi$ is bounded in $[-1,1]$; it is sufficient to assume that the interquartile interval of $\phi(\cP^t)$ lies in $[-1,1]$ to obtain the same guarantee. For example, if we know beforehand that the mean of $\phi(\cP^t)$ lies in $[-1,1]$ and its standard deviation is at most $1$ then we can truncate the range of $\phi$ to $[-3,3]$.

\paragraph{Verification queries:}
We next consider queries that ask for ``verification'' of a given estimate of the expectation of a given estimator  --- each query is specified by a function $\phi:\X^t \rar \R$ and a value (or ``guess'') $v \in \R$. The task of our algorithm is to check whether or not $v \in \iqr{\phi(\cP^t)}{\rho,1-\rho}$ for some $\rho$ chosen in advance.
Such queries are used in the reusable holdout setting \citep{DworkFHPRR15:science} and in the {\tt EffectiveRounds} algorithm that uses fresh subset of samples when a verification query fails    \citep{DworkFHPRR14:arxiv}. We give an algorithm for answering adaptively-chosen verification queries with the following guarantees.
\begin{thm} \label{thm:intro-rh}
For all $\alpha,\beta,\rho \in (0,1/4)$ with $\alpha<\rho$ and $t,\ell,k,n \in \mathbb{N}$ with $$n \geq n_0= O(t\sqrt{\ell\log(1/\alpha\beta)}\log(k/\beta)\rho/\alpha^2),$$ there exists an efficient algorithm $M$ that, given $n$ samples from an unknown distribution $\cP$, provides answers to an adaptively-chosen sequence of queries $(\phi_1,v_1), \ldots, (\phi_k,v_k)$ (where $\phi_j : \X^t \to \R$ and $v_j \in \R$ for all $j \in [k]$) and satisfies the following with probability at least $1-\beta$: for all $j \in [k]$
\begin{itemize}
\item If $v_j \in \iqr{\phi_j(\cP^t)}{\rho,1-\rho}$, then the algorithm outputs ``Yes''.
\item If $v_j \notin \iqr{\phi_j(\cP^t)}{\rho-\alpha,1-\rho+\alpha}$ the algorithm outputs ``No''.\footnote{If neither of these two cases applies, the algorithm may output either ``Yes'' or ``No.''}
\end{itemize}
However, once the algorithm outputs ``No'' in response to $\ell$ queries, it stops providing answers.
\end{thm}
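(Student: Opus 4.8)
The plan is to reduce answering verification queries to an application of the sparse-vector (a.k.a.\ AboveThreshold) technique, run on top of the empirical distribution of each $\phi_j$ over subsampled blocks, and then to invoke the differential privacy of the resulting interaction — together with a variance-aware generalization bound for differentially private algorithms, in the same spirit as the proof of Theorem~\ref{thm:intro-iqr} — to transfer the empirical guarantees of the test to the population $\phi_j(\cP^t)$.

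First I would split $S$ into $m=\lfloor n/t\rfloor$ disjoint blocks $B_1,\dots,B_m\in\X^t$, so that each $B_i$ is an independent draw from $\cP^t$. For a query $(\phi_j,v_j)$ let $Y^{(j)}_i=\phi_j(B_i)$, so $Y^{(j)}_1,\dots,Y^{(j)}_m$ are i.i.d.\ from $\phi_j(\cP^t)$, and set $\hat a_j=\frac1m\#\{i\in[m]:Y^{(j)}_i\le v_j\}$ and $\hat b_j=\frac1m\#\{i\in[m]:Y^{(j)}_i<v_j\}$. By definition, $v_j\in\iqr{\phi_j(\cP^t)}{\rho,1-\rho}$ iff $\pr_{Y\sim\phi_j(\cP^t)}[Y\le v_j]>\rho$ and $\pr_{Y\sim\phi_j(\cP^t)}[Y<v_j]<1-\rho$, while $v_j\notin\iqr{\phi_j(\cP^t)}{\rho-\alpha,1-\rho+\alpha}$ iff one of these probabilities is off by at least $\alpha$ in the bad direction. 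I would therefore form the score $g_j=\max\{\rho-\hat a_j,\ \hat b_j-(1-\rho)\}$, which is a function of $(B_1,\dots,B_m)$ of $\ell_\infty$-sensitivity $1/m$ (changing one block changes each of $\hat a_j,\hat b_j$ by at most $1/m$), is at most roughly $0$ when $v_j$ is safely inside, and is at least roughly $\alpha$ when $v_j$ is $\alpha$-far outside. Note no finite candidate set is needed here, since the algorithm only tests a given $v_j$ rather than searching for an answer.

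Next I would feed the stream $g_1,g_2,\dots$ into the sparse-vector mechanism with threshold $\theta\approx\alpha/2$, configured to release at most $\ell$ ``above threshold'' events: output ``No'' exactly when the noisy score exceeds the noisy threshold, ``Yes'' otherwise, and halt after the $\ell$-th ``No''. Since ``Yes'' answers carry no privacy cost, the usual sparse-vector/advanced-composition analysis shows that with Laplace noise of scale $\asymp\sqrt{\ell\log(1/\delta)}/(\veps m)$ the whole interaction is $(\veps,\delta)$-differentially private with respect to $S$, with parameters controlled by $\ell$ rather than $k$. A union bound over the $k$ queries (the source of the $\log(k/\beta)$ factor) then gives that, except with probability $\beta/2$, every noisy comparison agrees with its noiseless counterpart up to an additive slack $O\!\big(\sqrt{\ell\log(1/\delta)}\log(k/\beta)/(\veps m)\big)$, which is below a small constant multiple of $\alpha$ once $n=tm$ is as large as claimed. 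Hence, up to the generalization error treated next, every query with $v_j\in\iqr{\phi_j(\cP^t)}{\rho,1-\rho}$ is answered ``Yes'' and every query with $v_j\notin\iqr{\phi_j(\cP^t)}{\rho-\alpha,1-\rho+\alpha}$ is answered ``No'', and at most $\ell$ ``No''s are ever issued.

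It remains to certify that, even though each $(\phi_j,v_j)$ is chosen adaptively from the (privacy-preserving) transcript, the empirical tails track the population ones: $|\hat a_j-\pr_{Y\sim\phi_j(\cP^t)}[Y\le v_j]|=O(\alpha)$ and likewise for $\hat b_j$, for every $j$, except with probability $\beta/2$. I would obtain this from the differential privacy established above via the generalization properties of differential privacy, as used in proving Theorem~\ref{thm:intro-iqr}. This is the main obstacle, and the reason the bound carries a $\rho/\alpha^2$ rather than a $1/\alpha^2$ dependence: a non-adaptive Chernoff bound for the indicator of a tail of probability $\approx\rho$ needs only $m=O(\rho\log(1/\beta')/\alpha^2)$ samples to estimate it within $\pm\alpha$, so to preserve this rate one must use a \emph{variance-aware} (Bernstein-type) form of the DP generalization bound, and it must compose cleanly with the adaptive stopping rule of the sparse-vector mechanism; combining this requirement with the sparse-vector accuracy requirement and the union bound over $k$ queries is what forces $n_0=\Theta\!\big(t\sqrt{\ell\log(1/\alpha\beta)}\log(k/\beta)\,\rho/\alpha^2\big)$. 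Handling both tails simultaneously and the ``halt after $\ell$'' bookkeeping is routine once the one-sided variance-aware argument is in hand, and a final union bound over the two $\beta/2$ failure events completes the proof.
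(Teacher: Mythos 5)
Your architecture is the same as the paper's: split the data into $m=n/t$ disjoint blocks so that each block is a draw from $\D=\cP^t$, reduce each verification query $(\phi_j,v_j)$ to tail-indicator statistical queries $\ind(\phi_j(\cdot)\le v_j)$, $\ind(\phi_j(\cdot)\ge v_j)$ over $\Z=\X^t$, run sparse vector against the empirical tails with failure budget $\ell$, and transfer the empirical guarantee to the population via the generalization properties of differential privacy. (The paper issues two sparse-vector queries per verification query against the threshold $\rho-\alpha/3$ rather than a single max-score query; that difference is cosmetic, and your observation that no finite range $T$ is needed here is correct.)

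The gap is in the transfer step, which you correctly identify as the crux but then only assert. Two concrete problems. First, the intermediate goal you state --- $|\hat a_j-\pr_{Y\sim\phi_j(\cP^t)}[Y\le v_j]|=O(\alpha)$ for every $j$ --- is not achievable with $m=O(\rho\log(k/\beta)/\alpha^2)$ blocks: even non-adaptively, a tail of probability near $1/2$ cannot be estimated to additive accuracy $\alpha$ from that many samples. What is both provable and sufficient is a one-sided relative bound of the form $|S[\psi]-\D[\psi]|\le \frac{\alpha}{4\rho}\cdot\D[\psi]+\frac{\alpha}{12}$, which implies $\D[\psi]>\rho\Rightarrow S[\psi]>\rho-\alpha/3$ and $\D[\psi]\le\rho-\alpha\Rightarrow S[\psi]\le\rho-2\alpha/3$ for \emph{all} values of $\D[\psi]$, and that is all the threshold test needs. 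Second, the off-the-shelf high-probability transfer theorems (the ones sufficient for Theorem \ref{thm:intro-iqr}, where only $\rho=1/4$ is needed) give the worst-case additive form, i.e., error $\alpha$ for $m\gtrsim 1/(\varepsilon\alpha)$, which after combining with the sparse-vector accuracy requirement yields $1/\alpha^2$ rather than $\rho/\alpha^2$. The paper must prove a sharpened transfer theorem (Theorem \ref{thm:gen}) whose error is $\alpha+\gamma\cdot\MAD(\psi(\D))$ with $\gamma=e^{2\varepsilon}-1$; combined with $\MAD(\psi(\D))\le 2\D[\psi]$ and the calibration $\varepsilon=\frac12\ln(1+\alpha/8\rho)$ this gives exactly the relative bound above, and its proof (a mean-absolute-deviation refinement of the monitor argument, via Lemma \ref{lem:DPEX}) is the genuinely new content behind the $\rho/\alpha^2$ rate. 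Your proposal names this requirement but supplies neither the statement in the usable relative form nor any argument for it, so as written the proof does not close.
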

To answer the $\ell$ queries that do not pass the verification step we can use our algorithm for answering queries in Theorem \ref{thm:intro-iqr} (with $k$ there set to $\ell$ here).

\paragraph{Answering many queries:}
Finally, we show an (inefficient) algorithm that requires a dataset whose size scales as $\log k$ at the expense of an additional $\sqrt{t \log|\X|}$ factor.

\begin{thm}\label{thm:intro-iqr-mwu}
For all $T \subset \mathbb{R}$ with $|T|<\infty$, $t\in\mathbb{N}$, $\beta>0$, $k\in\mathbb{N}$, and $$n\geq n_0=O(t^{3/2} \cdot \sqrt{\log|\X| \log(1/\beta)} \cdot  \log(k\log|T|/\beta)),$$ there exists an algorithm $M$ that, given $n$ samples from an unknown distribution $\cP$ supported on $\X$, provides answers $v_1, \ldots, v_k \in T$ to an adaptively-chosen sequence of queries $\phi_1, \ldots, \phi_k : \X^t \to T$ and satisfies $$\prob{S \sim \cP^n \atop M(S)}{\forall j \in [k] ~~~ v_j \in \iqr{\phi_j(\cP^t)}{\frac{1}{4},\frac{3}{4}}} \geq 1-\beta.$$
\end{thm}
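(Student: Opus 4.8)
The plan is to reduce the task to answering a moderately long, adaptively chosen sequence of \emph{linear} statistical queries over the enlarged data universe $\X^t$, and then to run the private multiplicative weights mechanism together with the generalization (transfer) property of differential privacy --- exactly the template of the inefficient algorithm of \citet{DworkFHPRR14:arxiv} for statistical queries, but lifted from $\X$ to $\X^t$. Using $\X^t$ rather than $\X$ as the universe is what keeps the relevant queries linear, so that the standard private-multiplicative-weights bound applies rather than the weaker bounds known for high-degree queries; the price is that the mechanism maintains a distribution over $|\X|^t$ points and is therefore inefficient, as the statement allows.

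\textbf{Reduction to CDF queries.} Finding a point of $\iqr{\phi_j(\cP^t)}{1/4,3/4}$ is essentially finding an approximate median of $\phi_j(\cP^t)$, and for that it suffices to estimate the CDF $F_j(\theta) \doteq \pr_{Z \sim \cP^t}[\phi_j(Z) \le \theta]$ at $O(\log|T|)$ thresholds $\theta \in T$ located by binary search. For each fixed $\theta$ the map $z \mapsto \ind[\phi_j(z) \le \theta]$ depends on a \emph{single} element of $\X^t$, so $F_j(\theta) = \ex{Z \sim \cP^t}{\ind[\phi_j(Z)\le\theta]}$ is the value of a linear query over the universe $\X^t$. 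Hence the whole interaction is an adaptive sequence of $K = O(k \log|T|)$ linear queries over $\X^t$ (the thresholds inside the $j$-th binary search, and the next query $\phi_{j+1}$ itself, may depend on all earlier answers, which is fine since private multiplicative weights handles adaptively chosen queries).

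\textbf{Mechanism and parameter count.} Partition the $n$ i.i.d.\ samples into $m = \lfloor n/t \rfloor$ disjoint blocks of size $t$, obtaining a meta-dataset $\hat S \sim (\cP^t)^m$ of i.i.d.\ draws from $\cP^t$. Run private multiplicative weights over the universe $\X^t$ on $\hat S$ to answer the $K$ linear queries above; this mechanism is $(\veps,\delta)$-differentially private and, except with probability $\beta$, has empirical error at most $\alpha$ on every query whenever $m = \tilde{O}\!\left( \sqrt{ t\log|\X| \cdot \log(1/\delta)}\cdot \log(K/\beta) / (\veps\alpha^2) \right)$. Take $\alpha$ a small absolute constant (say $1/8$, enough to separate the $1/4$ and $3/4$ quantiles), and set $\veps = \Theta(\alpha)$ with $\delta$ polynomially small in $\beta$ as required by the transfer theorem; then $m = \tilde{O}\!\left(\sqrt{t\log|\X|\log(1/\beta)}\cdot\log(k\log|T|/\beta)\right)$ and $n = tm$ matches the claimed $n_0$. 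The generalization property of differential privacy then promotes the empirical guarantee to a population one: except with probability $\beta$, the mechanism's answer $\hat F_j(\theta)$ satisfies $|\hat F_j(\theta) - F_j(\theta)| \le 2\alpha$ for every threshold it is asked about.

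\textbf{Output and the main obstacle.} Since $\phi_j(\cP^t)$ is supported on $T$ and $F_j$ is monotone, run a binary search that at each probed $\theta$ recurses left iff $\hat F_j(\theta) \ge 1/2$; one checks it terminates at some $v_j \in T$ with $F_j(v_j) \ge 1/2 - O(\alpha) > 1/4$ and $\pr_{Z\sim\cP^t}[\phi_j(Z) < v_j] \le 1/2 + O(\alpha) < 3/4$, that is $v_j \in \iqr{\phi_j(\cP^t)}{1/4,3/4}$, as required. No fundamentally new idea is needed beyond the tools already in the paper; the work is bookkeeping, and the two points needing care are: (i) instantiating the private-multiplicative-weights analysis over the exponentially large universe $\X^t$ with the queries $\ind[\phi_j(\cdot)\le\theta]$ and threading $\veps,\delta,\alpha,m$ through the transfer theorem so the conclusion is about $\phi_j(\cP^t)$ and not about the empirical distribution on the $m$ blocks --- this is exactly where the extra $\sqrt{t\log|\X|\log(1/\beta)}$ and the overall $t^{3/2}$ arise, since $\sqrt{\log|\X^t|} = \sqrt{t\log|\X|}$ and blocking costs an additional factor $t$; and (ii) making the noisy binary search robust to the $O(\alpha)$ error in $\hat F_j$ and to atoms of $\phi_j(\cP^t)$, so that $v_j$ provably lands in the interquartile interval even when that interval is a single point.
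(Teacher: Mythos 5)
Your proposal is correct and follows essentially the same route as the paper: the paper reduces each query to $O(\log|T|)$ adaptively-chosen statistical queries over the universe $\Z=\X^t$ via a noisy binary search (Lemma \ref{lem:median2sq}, which handles your concern (ii) by testing both $\pr[\phi(Z)\le v]$ and $\pr[\phi(Z)< v]$ at each probe), and then invokes the private multiplicative weights result of \citet{BassilyNSSSU16} on the blocked dataset, with the transfer theorem already folded into that black box. Your parameter accounting, including the source of the $t^{3/2}$ and $\sqrt{t\log|\X|}$ factors, matches the paper's.
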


We remark that when applied to low-sensitivity queries with $t = 1/\tau^2$, this algorithm improves dependence on $|\X|$ and $\tau$ from $n=\tilde{O}(\log(|\X|)/\tau^4)$ in \citet{BassilyNSSSU16} to $n=\tilde{O}(\sqrt{\log|\X|}/\tau^3)$ (although, as pointed out above, the setting in that work is not always comparable to ours).

\subsubsection*{Techniques}
Like the prior work \citep{DworkFHPRR14:arxiv,BassilyNSSSU16}, we rely on properties of differential privacy \citep{DworkMNS:06}. Differential privacy is a stability property of an algorithm, namely it requires that replacing any element in the input dataset results in a small change in the output distribution of the algorithm. 
As a result, a function output by a differentially private algorithm on a given dataset generalizes to the underlying distribution \citep{DworkFHPRR14:arxiv,BassilyNSSSU16}.
Specifically, if a differentially private algorithm is run on a dataset drawn i.i.d~from any distribution and the algorithm outputs a low-sensitivity function, then the empirical value of that function on the input dataset is close to the expectation of that function on a fresh dataset drawn from the same distribution.

The second crucial property of differential privacy is that it composes adaptively: running several differentially private algorithms on the same dataset still satisfies differential privacy (with somewhat worse parameters) even if each algorithm depends on the output of all the previous algorithms. This property makes it possible to answer adaptively-chosen queries with differential privacy and a number of algorithms have been developed for answering different types of queries. The generalization property of differential privacy then implies that such algorithms can be used to provide answers to adaptively-chosen queries while ensuring generalization \citep{DworkFHPRR14:arxiv}.



For each query, the algorithm of Theorem \ref{thm:intro-iqr} first splits its input, consisting of $n$ samples, into $m=n/t$ disjoint subsamples of size $t$ and computes the estimator $\phi: \X^t \rar \R$ on each. It then outputs an approximate \emph{median} of the resulting values in a differentially private manner.  Here an approximate median is any value that falling between the $(1-\alpha)/2$-quantile and the $(1+\alpha)/2$-quantile of the $m$ computed values (for some approximation parameter $\alpha$). In addition to making the resulting estimator more stable, this step also amplifies the probability of success of the estimator. 
For comparison, the previous algorithms compute the estimator once on the whole sample and then output the value in a differentially private manner.


We show that differential privacy ensures that an approximate empirical median with high probability falls within the true interquartile interval of the estimator on the input distribution. Here we rely on the known strong connection between differential privacy and generalization \citep{DworkFHPRR14:arxiv,BassilyNSSSU16}. However, our application relies on stability to replacement of any one of the $m$ subsamples (consisting of $t$ points) used to evaluate the estimator, whereas previous analyses used the stability under replacement of any one out of the $n$ data points. The use of this stronger condition is crucial for bypassing the limitations of previous techniques while achieving improved accuracy guarantees.


A number of differentially-private algorithms for approximating the empirical median of values in a dataset have been studied in the literature. One common approach to this problem is the use of local sensitivity \citep{NissimRS07} (see also \citet{DworkLei:09}). This approach focuses on additive approximation guarantees and requires stronger assumptions on the data distribution to obtain explicit bounds on the approximation error.

In this paper we rely on a data-dependent notion of approximation to the median in which the goal is to output any value $v$ between the $(1-\alpha)/2$-quantile and the $(1+\alpha)/2$-quantile of the empirical distribution for some approximation parameter $\alpha$. It is easy to see that this version is essentially equivalent to the {\em interior point} problem in which the goal is to output a value between the smallest and largest values in the dataset.  \citet{BunNSV15} recently showed that the optimal sample complexity of privately finding an interior point in a range of values $T$ lies between $m=2^{(1+o(1))\log^* |T|}$ and $m=\Omega(\log^* |T|)$, where $\log^*$ is iterated logarithm or inverse tower function satisfying $\log^*(2^x)=1+\log^*(x)$.

The algorithm in \citep{BunNSV15} is relatively complex and, therefore, here we will use a simple and efficient algorithm that is based on the exponential mechanism \citep{McSherryTalwar:07}, and is similar to an algorithm by \citet{Smith11} to estimate quantiles of a distribution. This algorithm has sample complexity $m=O(\log |T|)$ (for constant $\alpha > 0$). In addition, for our algorithm that answers many queries we will use another simple algorithm based on approximate binary search, which can yield sample complexity $m=O(\sqrt{\log |T|})$; it has the advantage that it reduces the problem to a sequence of statistical queries.\footnote{We are not aware of a formal reference but it is known that the exponential mechanism and binary search can be used to privately find an approximate median in the sense that we use in this work (\eg \citep{RaskhodnikovaSmith:10course,Nissim:14course}).}


Now the second part of our proof: we show that an approximate empirical median is also in the interquartile interval of the distribution. i.e.~we show that any value in the \emph{empirical} (3/8,5/8)-quantile interval falls in the \emph{distribution's} $(1/4,3/4)$-quantile interval. A value $v \in T$ is an approximate empirical median if the empirical cumulative distribution function at $v$ is close to $1/2$ --- that is, $\cump{S}{v}\doteq \pr_{Y\sim S}[Y \leq v] \approx 1/2$, where $S \sim \D^n$ is the random samples and $Y\sim S$ denotes picking a sample from $S$ randomly and uniformly.
Note that $\cump{S}{v}$ is the empirical mean of a statistical query over $S$, whereas the true cumulative distribution function $\cump{\D}{v}\doteq \pr_{Y\sim \D}[Y \leq v]$ is the true mean of the same statistical query.
Hence we can apply the known strong connection between differential privacy and generalization for statistical queries \citep{DworkFHPRR14:arxiv,BassilyNSSSU16} to obtain that $\cump{\D}{v} \approx \cump{S}{v}$. This ensures $\cump{\D}{v} \approx 1/2$ and hence $v$ falls in $\iqr{\D}{1/4,3/4}$ (with high probability).


For estimators that produce an accurate response with high probability (such as any well-concentrated estimator) we give a different, substantially simpler way to prove high probability bounds on the accuracy of the whole adaptive procedure. This allows us to bypass the known proofs of high-probability bounds that rely on relatively involved arguments \citep{DworkFHPRR14:arxiv,BassilyNSSSU16,RogersRST16}.

Our algorithm for answering verification queries (Theorem \ref{thm:intro-rh}), is obtained by reducing the verification step to verification of two statistical queries for which the domain of the function is $\X^t$ and the expectation is estimated relative to $\cP^t$. To further improve the sample complexity, we observe that it is possible to calibrate the algorithm for verifying statistical queries from \citep{DworkFHPRR15:science} to introduce less noise when $\rho$ is small. This improvement relies on sharper analysis of the known generalization properties of differential privacy that has already found some additional uses \citep{SteinkeU17,NissimS17} (see Section \ref{sec:mad-generalization} for details).

Our algorithm for answering many queries (Theorem \ref{thm:intro-iqr-mwu}) is also obtained by a reduction to statistical queries over $\X^t$. In this case we use statistical queries to find a value in the interquartile interval of the estimator via a binary search.

Finally, we remark that the differentially private algorithms that we use to answer queries might also be of interest for applications in private data analysis. These algorithms demonstrate that meaningful privacy and error guarantees can be achieved without any assumptions on the sensitivity of estimators. From this point of view, our approach is an instance of subsample-and-aggregate technique of \citet{NissimRS07}, where the approximate median algorithm is used for aggregation. We note that \citet{Smith11} used a somewhat related approach that also takes advantage of the concentration of the estimator around its mean during the aggregation step. His algorithm first clips the tails of the estimator's distribution via a differentially private quantile estimation  and then uses simple averaging with Laplace noise addition as the aggregation step. His analysis is specialized to estimators that are approximately normally distributed and hence the results are not directly comparable with our more general setting.






\section{Preliminaries}
For $k \in \mathbb{N}$, we denote $[k] = \{1,2,\ldots, k\}$ and we use $a_{[k]}$ as a shorthand for the $k$-tuple $(a_1,a_2,\ldots,a_k)$.
For a condition $E$ we use $\ind(E)$ to denote the indicator function of $E$. Thus $\ex{}{\ind(E)}=\prob{}{E}$. 

For a randomized algorithm $M$ we use $Y\sim M$ to denote that 
$Y$ is random variable obtained by running $M$. For a distribution $\D$ we use $Y \sim \D$ to denote that $Y$ is a random variable distributed according to $\D$.
For $0\leq \alpha<\beta\leq 1$ and a distribution $\D$, we denote the $(\alpha,\beta)$-quantile interval of $\D$ by $$\iqr{\D}{\alpha,\beta} \doteq \left\{v \in \mathbb{R} ~:~ \pr_{Y \sim \D}[Y \leq v] > \alpha ~\wedge~ \pr_{Y \sim \D}[Y < v] < \beta \right\}.$$
We refer to $\iqr{\D}{1/4,3/4}$ as the interquartile interval of $\D$ and the length of this interval is the interquartile range.
For $s\in \R^n$ we denote by $\iqr{s}{\alpha,\beta}$ the empirical version of the quantity above, that is the $(\alpha,\beta)$-quantile interval obtained by taking $\D$ to be the uniform distribution over the elements of $s$. In general, we view datasets as being equivalent to a distribution, namely the uniform distribution on elements of that dataset.


For our algorithms, a query is specified by a function $\phi : \X^t \to \R$. For notational simplicity we will often set $\Z=\X^t$ and partition a dataset $s \in \X^{n}$ into $m$ points $s_1, \ldots, s_m \in \Z$. Therefore throughout our discussion we will have $n=mt$. For $s \in \Z^m$, we define $\phi(s) = (\phi(s_1), \ldots, \phi(s_m))$ to be the transformed dataset. Similarly, for a distribution $\cP$ on $\X$, we define $\D=\cP^t$ to be the corresponding distribution on $\Z$ and $\phi(\D)$ to be the distribution obtained by applying $\phi$ to a random sample from $\D$.
The expectations of these distributions are denoted $s[\phi] = \frac{1}{m} \sum_{i \in [m]} \phi(s_i)$ and $\D[\phi] = \ex{Z \sim \D}{\phi(Z)}$.

\subsection{Adaptivity}

A central topic in this paper is the interaction between two algorithms $A$ --- the analyst (who might even be  adversarial) --- and $M$ --- our query-answering algorithm.  Let $\cQ$ be the space of all possible queries and $\A$ be the set of all possible answers. In our applications $\cQ$ will be a set of functions $\phi : \Z \to \R$, possibly with some additional parameters and $\A$ will be (a subset of) $\R$. We now set up the notation for this interaction.

\begin{figure}[h!]
\begin{framed}
\begin{algorithmic}
\INDSTATE[0]{Input $s \in \Z^m$ is given to $M$.}
\INDSTATE[0]{For $j=1,2, \ldots, k$:}
\INDSTATE[1]{$A$ computes a query $q_j \in \cQ$ and passes it to $M$}
\INDSTATE[1]{$M$ produces answer $a_j \in \A$ and passes it to $A$}
\INDSTATE[0]{The output is the transcript $(q_1, q_2, \ldots, q_k, a_1, a_2, \ldots, a_k) \in \cQ^k \times \A^k$.}
\end{algorithmic}
\end{framed}
\vspace{-6mm}
\caption{$\interact{A}{M} : \Z^m \to \cQ^k \times \A^k$}\label{fig:interact}
\end{figure}

Given interactive algorithms $A$ and $M$, we define $\interact{A}{M}(s)$ to be function which produces a random transcript of the interaction between $A$ and $M$, where $s$ is the input given to $M$. Formally, Figure \ref{fig:interact} specifies how $\interact{A}{M} : \X^n \to \cQ^k \times \A^k$ is defined.

The transcript function $\interact{A}{M}$ provides a ``non-interactive view'' of the output of an interactive process. Our goal is thus to construct $M$ such that, for all $A$, the output of $\interact{A}{M}$ has the desired accuracy and stability properties.

\subsection{Differential Privacy}

We begin with the standard definition of differential privacy \citep{DworkMNS:06,DworkKMMN06}.

\begin{defn}[Differential Privacy]\label{def:DP}
An algorithm $M : \Z^m \to \mathcal Y$ is $(\varepsilon,\delta)$-differentially private if, for all datasets $s,s' \in \Z^m$ that differ on a single element, $$\forall E \subseteq \mathcal{Y} ~~~~~ \prob{}{M(s) \in E} \leq e^\varepsilon \prob{}{M(s')\in E} + \delta.$$
\end{defn}

Note that, throughout, we consider an algorithm $M : \X^{n} \to \mathcal{Y}$ and let $\Z=\X^t$ with $n=mt$ so that $M : \Z^m \to \mathcal{Y}$. Then we define differential privacy with respect to the latter view (that is, with respect to changing a whole tuple of $t$ elements in the original view of $M$). This is a stronger condition.

However, Definition \ref{def:DP} only covers non-interactive algorithms. Thus we extend it to interactive algorithms:

\begin{defn}[Interactive Differential Privacy]\label{def:DP2}
An interactive algorithm $M$ is $(\varepsilon,\delta)$-differentially private if, for all interactive algorithms $A$, the (non-interactive) algorithm $\interact{A}{M} : \Z^m \to \mathcal{Y}$ is $(\varepsilon,\delta)$-differentially private.
\end{defn}

We now record the basic properties of differential privacy. See the textbook of \citet{DworkRoth:14} for proofs and discussion of these results.

\begin{thm}[Postprocessing]\label{thm:post}
Let $M : \Z^m \to \mathcal{Y}$ be $(\varepsilon,\delta)$-differentially private. Let $F : \mathcal{Y} \to \mathcal{Y}'$ be an arbitrary randomized algorithm. Define $M' : \Z^m \to \mathcal{Y}'$ by $M'(s) = F(M(s))$. Then $M'$ is also $(\varepsilon,\delta)$-differentially private.
\end{thm}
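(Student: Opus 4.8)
The plan is to reduce the claim to the definition of $(\varepsilon,\delta)$-differential privacy of $M$ by fixing the internal randomness of the post-processing map $F$. First I would treat the case where $F$ is a fixed (measurable) function $\mathcal{Y} \to \mathcal{Y}'$. Fix datasets $s,s' \in \Z^m$ that differ on a single element and a measurable event $E' \subseteq \mathcal{Y}'$, and set $E = F^{-1}(E') \subseteq \mathcal{Y}$. Then the events coincide, $\{M'(s) \in E'\} = \{M(s) \in E\}$ and $\{M'(s') \in E'\} = \{M(s') \in E\}$, so applying Definition \ref{def:DP} to $M$ with the event $E$ gives
\[
\prob{}{M'(s) \in E'} = \prob{}{M(s) \in E} \leq e^\varepsilon \prob{}{M(s') \in E} + \delta = e^\varepsilon \prob{}{M'(s') \in E'} + \delta .
\]

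For general randomized $F$, I would express $F$ as a mixture of deterministic maps: let $F$ draw internal coins $r$ from a distribution $R$ independent of the randomness of $M$, and write $f_r(\cdot) \doteq F(\cdot\,;r)$, which is deterministic for each fixed $r$. Conditioning on $r$ and using independence of $r$ and $M$,
\[
\prob{}{M'(s) \in E'} = \ex{r \sim R}{\prob{}{f_r(M(s)) \in E'}} \leq \ex{r \sim R}{e^\varepsilon \prob{}{f_r(M(s')) \in E'} + \delta} = e^\varepsilon \prob{}{M'(s') \in E'} + \delta ,
\]
where the inequality applies the deterministic case to each $f_r$ and the last equality again uses independence of $r$ and $M$. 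Since $s$, $s'$, and $E'$ were arbitrary, $M'$ is $(\varepsilon,\delta)$-differentially private.

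The only point requiring care --- and the step I would be most cautious about --- is the measure-theoretic bookkeeping: one needs $F^{-1}(E')$ to be measurable whenever $E'$ is (which holds because $F$, being a legitimate randomized algorithm, induces a measurable kernel) and one needs the representation of $F$ as a mixture of deterministic maps driven by auxiliary randomness $R$ to be valid. In the discrete setting both facts are immediate; in general they are standard, and I would simply cite \citet{DworkRoth:14}. There is no combinatorial or analytic obstacle beyond this formalization.
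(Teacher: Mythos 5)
Your proof is correct and is exactly the standard argument: reduce to deterministic post-processing via preimages, then handle randomized $F$ by conditioning on its independent internal coins. The paper itself gives no proof of this theorem and instead defers to \citet{DworkRoth:14}, whose proof proceeds the same way, so there is nothing to reconcile.
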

Postprocessing is important as it allows us to perform further computation on the output of a differentially private algorithm without breaking the privacy guarantee.

We next state the key adaptive composition property of differential privacy, which bounds how rapidly differential privacy degrades under repeated use of the same dataset \citep{DworkRV10} (with sharper constants from \citep{BunS16}).
\begin{thm}[Adaptive Composition \citep{DworkRV10,BunS16}]\label{thm:composition}Fix $k \in \mathbb{N}$ and $\varepsilon_1,\ldots,\varepsilon_k,\delta_1,\ldots,\delta_k>0$. Let $M_1, \ldots, M_k : \mathcal{Z}^m \times \mathcal{Y} \to \mathcal{Y}$ be randomized algorithms.  Suppose that for all $j \in [k]$ and all fixed $y \in \mathcal{Y}$, the randomized algorithm $x \mapsto M_j(x,y)$ is $(\varepsilon_j,\delta_j)$-differentially private. Define $\hat M_1, \ldots, \hat M_k : \mathcal{Z}^m \to \mathcal{Y}$ inductively by $\hat M_1(x)=M_1(x,y_0)$ where $y_0 \in \mathcal{Y}$ is fixed and $\hat M_{j+1}(x) = M_{j+1}(x,\hat M_j(x))$ for $j \in [k-1]$. Then $\hat M_k$ is $(\hat \varepsilon,\hat \delta)$-differentially private for $$\hat\varepsilon = \frac12 \sum_{j \in [k]} \varepsilon_j^2 + \sqrt{2 \log(1/\delta')\sum_{j \in [k]} \varepsilon_j^2 } ~~~~\text{ and }~~~~ \hat\delta = \delta' + \sum_{j \in [k]} \delta_j, $$ where $\delta' \in (0,1)$ is arbitrary.\end{thm}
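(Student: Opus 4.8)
The plan is to prove the theorem through the framework of \emph{concentrated differential privacy} (zCDP), which is the source of the sharpened constants attributed to \citep{BunS16}; the (weaker) constants of \citep{DworkRV10} are recovered by the alternative route of applying an Azuma-type concentration inequality directly to the privacy-loss martingale. I would first establish the pure case $\delta_1=\cdots=\delta_k=0$ and then reduce the general case to it. Throughout, ``$(\varepsilon,\delta)$-DP of $\hat M_k$'' is obtained by proving the stronger statement that the whole transcript $\interact{A}{M}$ of Figure~\ref{fig:interact} is $(\varepsilon,\delta)$-DP and then invoking postprocessing (Theorem~\ref{thm:post}).

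For the pure case the argument has three ingredients, all phrased via the R\'enyi divergence $D_\alpha(P\|Q)=\frac{1}{\alpha-1}\ln\mathbb{E}_{y\sim Q}\!\left[(P(y)/Q(y))^\alpha\right]$ for $\alpha>1$. (i) \emph{From DP to zCDP:} if a mechanism $N$ is $(\varepsilon,0)$-DP then for every pair of neighbors $s,s'$ and every $\alpha>1$ one has $D_\alpha(N(s)\|N(s'))\le \tfrac12\varepsilon^2\alpha$, which follows from the pointwise bound $|\ln(N(s)(y)/N(s')(y))|\le\varepsilon$ together with a Hoeffding-type moment estimate for the (bounded) privacy-loss random variable. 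Call $N$ ``$\rho$-zCDP'' if $D_\alpha(N(s)\|N(s'))\le\rho\alpha$ for all neighbors and all $\alpha>1$; thus each $M_j(\cdot,y)$ is $\tfrac12\varepsilon_j^2$-zCDP. (ii) \emph{Adaptive composition of zCDP:} using the chain rule for R\'enyi divergence — the $D_\alpha$ of the joint law of an adaptively generated pair of coordinates is at most the $D_\alpha$ of the first coordinate plus the worst-case (over the value of the first coordinate) $D_\alpha$ of the conditional law of the second — an induction over the $k$ rounds gives that $\hat M_k$ is $\bigl(\sum_j\tfrac12\varepsilon_j^2\bigr)$-zCDP. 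This is the clean inductive core. (iii) \emph{From zCDP back to DP:} if $N$ is $\rho$-zCDP then $N$ is $\bigl(\rho+2\sqrt{\rho\log(1/\delta')},\,\delta'\bigr)$-DP for every $\delta'\in(0,1)$, via a Markov-inequality argument on $e^{(\alpha-1)(L-\text{threshold})}$ for the privacy loss $L$, optimized over the order $\alpha$. Setting $\rho=\tfrac12\sum_j\varepsilon_j^2$ in (iii) yields exactly $\hat\varepsilon=\tfrac12\sum_j\varepsilon_j^2+\sqrt{2\log(1/\delta')\sum_j\varepsilon_j^2}$ and $\hat\delta=\delta'$, matching the statement when all $\delta_j=0$.

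To handle $\delta_j>0$ I would use the standard \emph{densification} reduction: for each pair of neighbors, the $(\varepsilon_j,\delta_j)$-DP mechanism $M_j(\cdot,y)$ can be coupled with a pair of distributions that are $(\varepsilon_j,0)$-indistinguishable, the coupling failing only on an event of probability at most $\delta_j$ (this is the characterization of $(\varepsilon,\delta)$-DP by a high-mass event on which a pointwise likelihood-ratio bound holds). One then runs the pure-case analysis on the ``clean'' surrogates of all $k$ mechanisms, and a hybrid argument over the rounds — swapping the true $M_j$ for its surrogate one round at a time, each swap costing total-variation at most $\delta_j$ — shows that for any neighbors the real transcript law and the clean transcript law are within $\sum_j\delta_j$ in total variation. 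Combining this with the $(\hat\varepsilon,\delta')$-DP guarantee of the clean composition yields $(\hat\varepsilon,\delta'+\sum_j\delta_j)$-DP, as claimed.

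The main obstacle is precisely this last step: the surrogate at round $j$ must be defined as a function of the (possibly already perturbed) transcript prefix, so one cannot simply union-bound $k$ fixed bad events; the hybrid/coupling argument must be set up so that at each hybrid the bad event depends only on the fresh randomness of the round being swapped, conditioned on everything before, and — since DP quantifies over all adversaries $A$ — every coupling must hold uniformly in $A$, i.e.\ be carried out at the level of the transcript function $\interact{A}{M}$. A secondary technical point is verifying the exact constant $\tfrac12$ in ingredient (i): this is where the improvement over \citep{DworkRV10} comes from, and it needs the sharp moment bound for the bounded privacy-loss variable rather than a crude second-moment estimate.
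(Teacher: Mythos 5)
The paper does not prove this theorem: it is quoted as a known result, with the reader referred to the cited sources and to the Dwork--Roth textbook. Your outline is a correct reconstruction of the standard argument from the cited reference of Bun and Steinke: the pure case via the three zCDP steps (the $\tfrac12\varepsilon^2$ conversion from the bounded privacy loss with $\ex{}{e^{-L}}=1$, the R\'enyi chain rule over adaptive rounds, and the Markov/tail conversion back to $(\varepsilon,\delta)$-DP, whose optimization over $\alpha$ indeed lands exactly on $\hat\varepsilon=\tfrac12\sum_j\varepsilon_j^2+\sqrt{2\log(1/\delta')\sum_j\varepsilon_j^2}$), followed by the coupling/hybrid reduction to absorb the $\delta_j$'s additively. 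You also correctly flag the two places where care is genuinely needed --- the per-round, prefix-conditional construction of the surrogates in the hybrid argument, and the sharp Hoeffding-type moment bound behind the constant $\tfrac12$ --- so nothing essential is missing from the plan.
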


The analyst that asks queries can be seen as a postprocessing step on the output of a differentially private algorithm that answers the queries. Thus, by combining the adaptive composition and postprocessing properties of differential privacy we obtain that in order to ensure that an interactive algorithm is differentially private it is sufficient to ensure that each of the individual queries is answered with differential privacy.
\begin{thm}\label{thm:interactive-composition}
Fix $k \in \mathbb{N}$ and $\varepsilon,\delta>0$. Let $M : \Z^m \times \cQ \to \mathcal{A}$ be an algorithm, such that $M(s,q)$ provides an answer to the query $q$ using the dataset $s$ and $M$ is $(\varepsilon,\delta)$-differentially private for every fixed $s \in \Z^m$.

Define an interactive algorithm $M^{\otimes k}$ that takes as input $s\in \Z^m$ and answers $k$ adaptively-chosen queries $q_1, \ldots, q_k \in \mathcal{Q}$ where, for each $j \in [k]$, $M^{\otimes k}$ produces an answer by independently running $M(s,q_j)$.
Then $M^{\otimes k}$ is $\left(\frac12 k \varepsilon^2 + \varepsilon \sqrt{2k\ln(1/\delta')},\delta'+k\delta\right)$-differentially private for all $\delta' \in (0,1)$.
\end{thm}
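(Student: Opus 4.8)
The plan is to derive the statement by combining the adaptive composition theorem (Theorem~\ref{thm:composition}) with postprocessing (Theorem~\ref{thm:post}) and the definition of interactive differential privacy (Definition~\ref{def:DP2}). By Definition~\ref{def:DP2} it suffices to fix an arbitrary analyst $A$ and show that the transcript algorithm $\interact{A}{M^{\otimes k}} : \Z^m \to \cQ^k \times \mathcal{A}^k$ is $(\hat\varepsilon,\hat\delta)$-differentially private with $\hat\varepsilon = \tfrac12 k\varepsilon^2 + \varepsilon\sqrt{2k\ln(1/\delta')}$ and $\hat\delta = \delta' + k\delta$. I would first reduce to the case that $A$ is deterministic: exposing $A$'s internal randomness as an auxiliary string $r$ drawn independently of the data turns the analyst into a deterministic map $A_r$ of the transcript so far, and $\interact{A}{M^{\otimes k}}$ is then the mixture over $r$ of $\interact{A_r}{M^{\otimes k}}$; since a mixture of $(\hat\varepsilon,\hat\delta)$-differentially private algorithms (with mixing weights independent of the data) is again $(\hat\varepsilon,\hat\delta)$-differentially private, a bound uniform over the deterministic analysts $A_r$ gives the bound for $A$.

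Assuming now that $A$ is deterministic, the next step is to cast the interaction as an instance of the setup of Theorem~\ref{thm:composition}. I would take the state space $\mathcal{Y}$ to be the set of partial transcripts (formally $\mathcal{Y} = \bigcup_{j=0}^{k}(\cQ\times\mathcal{A})^{j}$) and, for each $j \in [k]$, define $M_j : \Z^m \times \mathcal{Y} \to \mathcal{Y}$ by: on input $(x,y)$ with $y = (q_1,a_1,\dots,q_{j-1},a_{j-1})$, set $q_j = A(y)$ (the next query the analyst asks), run $M(x,q_j)$ to obtain $a_j$, and output $y$ extended by $(q_j,a_j)$. Taking $y_0$ to be the empty transcript, the algorithms $\hat M_1(x)=M_1(x,y_0)$ and $\hat M_{j+1}(x)=M_{j+1}(x,\hat M_j(x))$ of Theorem~\ref{thm:composition} exactly reproduce the process of Figure~\ref{fig:interact}, so $\hat M_k(x)$ and $\interact{A}{M^{\otimes k}}(x)$ are identically distributed (using here that distinct rounds run independent fresh copies of $M$).

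I would then verify the hypothesis of Theorem~\ref{thm:composition}: for every $j$ and every fixed $y\in\mathcal{Y}$, the map $x\mapsto M_j(x,y)$ is $(\varepsilon,\delta)$-differentially private. This is where I would be most careful, though it is not hard: fixing $y$ fixes $q_j=A(y)$, so $M_j(x,y)$ is obtained from $M(x,q_j)$ by the data-independent postprocessing that pairs it with the constants $y$ and $q_j$, and since $s\mapsto M(s,q_j)$ is $(\varepsilon,\delta)$-differentially private (the hypothesis on $M$), Theorem~\ref{thm:post} shows $x\mapsto M_j(x,y)$ is $(\varepsilon,\delta)$-differentially private. Applying Theorem~\ref{thm:composition} with $\varepsilon_j=\varepsilon$ and $\delta_j=\delta$ for all $j\in[k]$ and arbitrary $\delta'\in(0,1)$ then gives that $\hat M_k=\interact{A}{M^{\otimes k}}$ is $(\hat\varepsilon,\hat\delta)$-differentially private with $\hat\varepsilon=\tfrac12\sum_{j\in[k]}\varepsilon^2+\sqrt{2\ln(1/\delta')\sum_{j\in[k]}\varepsilon^2}=\tfrac12 k\varepsilon^2+\varepsilon\sqrt{2k\ln(1/\delta')}$ and $\hat\delta=\delta'+\sum_{j\in[k]}\delta=\delta'+k\delta$. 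As $A$ was arbitrary, Definition~\ref{def:DP2} yields the claim.

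The main obstacle, such as it is, is purely one of modelling: correctly representing the interactive transcript as the output of an adaptively-composed sequence of mechanisms $M_1,\dots,M_k$ and checking that each round is differentially private in the data for every fixed history. The mildly delicate points are the reduction to a deterministic analyst (so that the ``next query'' step is genuinely data-independent rather than merely independent of the data in distribution) and the bookkeeping of the growing transcript as the state passed between the $M_j$'s; everything else is a direct invocation of Theorems~\ref{thm:post} and~\ref{thm:composition}.
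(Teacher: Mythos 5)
Your proposal is correct and follows exactly the route the paper intends: the paper states Theorem~\ref{thm:interactive-composition} without a written proof, asserting it follows by viewing the analyst's query selection as postprocessing and invoking adaptive composition (Theorems~\ref{thm:post} and~\ref{thm:composition}), which is precisely the reduction you carry out. Your additional care about derandomizing the analyst and encoding the growing transcript as the state passed between the $M_j$'s correctly fills in the details the paper leaves implicit.
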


\remove{
There is also a composition bound of $(k\varepsilon,k\delta)$-differential privacy. However, this is a weaker bound for our purposes.

Theorem \ref{thm:composition} was originally proved by Dwork, Rothblum, and Vadhan \citeyearpar{DworkRV10} (with slightly weaker parameters) and has subsequently been enhanced \citep{KairouzOV13,BunS16}. Furthermore, if rather than having the same privacy guarantee for each of the $k$ computations, we have that the $j^\text{th}$ computation is $(\varepsilon_j,\delta_j)$-differentially private, then the composed computation is $\left( \frac12 \sum_{j \in [k]} \varepsilon_j^2 + \sqrt{2\log(1/\delta') \sum_{j \in [k]} \varepsilon_j^2}, \delta'+\sum_{j \in [k]} \delta_j \right)$-differentially private.
}
\remove{ 
\subsection{Amplification by Median}
We state a useful and well-known lemma, which will be relevant later:
\begin{lem}[Amplification by Median]\label{lem:amp-med}
Let $X_1, \ldots, X_\ell$ be i.i.d.~real random variables drawn from distribution $\D$. Fix $\rho \in (0,1/2)$. Then $$\prob{}{\median(X_1, \ldots, X_\ell) \in \iqr{\D}{\rho,1-\rho}} \geq 1 - 2 \cdot e^{- 2 (1/2-\rho)^2 \ell} .$$
\end{lem}
In particular, setting $\rho=1/4$, we have that $$\iqr{\median(\D^\ell)}{2 \cdot e^{-\ell/8},1-2 \cdot e^{-\ell/8}} \subseteq \iqr{\D}{\frac 1 4 , \frac 3 4 } .$$
The key point is that there is nothing special about $1/4$ and $3/4$ in Theorem \ref{thm:intro-iqr}. We can replace them with numbers very close to $0$ and $1$ at very little cost. 
\begin{proof}
First note that $\iqr{\D}{\rho,1-\rho} = \iqr{\D}{\rho,1} \cap \iqr{\D}{0,1-\rho}$. Thus, by a union bound, it suffices to prove that \begin{equation}\prob{}{\median(X_1, \ldots, X_\ell) \notin \iqr{\D}{\rho,1}} \leq e^{- 2 (1/2-\rho)^2 \ell} \label{eqn:rho1}\end{equation} and \begin{equation}\prob{}{\median(X_1, \ldots, X_\ell) \notin \iqr{\D}{0,1-\rho}} \leq e^{- 2 (1/2-\rho)^2 \ell} \label{eqn:rho0}.\end{equation}
Define $$A = \sum_{i\in[\ell]} \mathbb{I}[X_i \notin \iqr{\D}{\rho,1}]$$ so that $$\prob{}{\median(X_1, \ldots, X_\ell) \notin \iqr{\D}{\rho,1}} \leq \prob{}{A \geq \ell/2}. $$
By definition, $\ex{}{\mathbb{I}[X_i \notin \iqr{\D}{\rho,1}]}\leq\rho$ for all $i \in [\ell]$. Thus $A$ is the sum of $\ell$ independent random variables supported on $\{0,1\}$ each with mean $\leq \rho$, whence, by Hoeffding's inequality, $$\prob{}{A \geq \ell/2} \leq \prob{}{A-\ex{}{A} \geq \ell/2-\rho\ell} \leq e^{-2(\ell/2-\rho\ell)^2/\ell} = e^{-2(1/2-\rho)^2 \ell},$$ as required to prove \eqref{eqn:rho1}. The proof of \eqref{eqn:rho0} is symmetric.
\end{proof}

}

\section{Approximate Median}
\newcommand{\amed}[2]{\iqr{#2}{\frac{1-#1}{2},\frac{1+#1}{2}}}
In this section, we present differentially private algorithms for outputting an approximate median of a real-valued dataset. Namely, for $s \in \R^m$, we define an $\alpha$-approximate median of $s$ to be any element of the set $\amed{\alpha}{s}$. In our application each real value is obtained by applying the given query function to a single subsample. 


Several differentially private algorithms for computing an approximate median are known. All of these algorithms assume that the input elements and the range of the algorithm are restricted to some finite set $T \subseteq \R$. The strongest upper bound was given in a recent work of \citet{BunNSV15}. They describe an $(\varepsilon,\delta)$-differentially private algorithm which, on input $s \in T^m$, outputs an $\alpha$-approximate median of $s$ as long as $m \geq (2+o(1))^{\log^* |T|} \cdot {O}(\log(1/\varepsilon\delta)/\varepsilon\alpha)$.\footnote{\citet{BunNSV15} consider the problem of outputting an interior point which is equivalent to our definition of a $1$-approximate median. However, by removing the elements of the dataset that are not in $\amed{\alpha}{s}$ $\alpha$-approximate median reduces to the interior point problem.}

\citet{BunNSV15} also prove a nearly tight lower bound of $m \geq \Omega(\log^* |T|)$ for $\alpha=\varepsilon=1$ and $\delta=1/100m^2$. This lower bound implies that privately outputting an approximate median is only possible if we restrict the data points to a finite range. We note that it is also known that for the stricter $\veps$-differential privacy the sample complexity of this problem for constant $\alpha > 0$ is $\theta(\log(|T|)/\veps)$ (\eg \cite{BunNSV15,BunSU17}).


Any differentially private algorithm for finding an approximate median can be used in our results. The algorithm in \citep{BunNSV15} is relatively involved and hence we will describe a simple algorithm for the problem that relies on a ``folklore" application of the exponential mechanism \citep{McSherryTalwar:07} (\eg \citep{Smith11,Nissim:14course}).

\begin{thm}\label{thm:EM-med}
For all $\varepsilon,\alpha,\beta>0$, finite $T \subset \R$, and all $m \geq 4 \ln(|T|/\beta)/\varepsilon\alpha$, there exists an $(\varepsilon,0)$-differentially private randomized algorithm $M$ that given a dataset $s \in \Z^m$ and a query $\phi : \Z \to T$ outputs an $\alpha$-approximate median of $\phi(s) \in T^m$ with probability at least $1-\beta$. The running time of the algorithm is $O(m \cdot \log |T|)$.
\end{thm}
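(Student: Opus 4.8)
The plan is to realize $M$ as the exponential mechanism of \citet{McSherryTalwar:07}, run on the transformed dataset $x = \phi(s) \in T^m$, with a utility function tailored to approximate medians. For $v \in T$ define
$$u(x,v) \doteq \min\!\left(\, \left|\{i\in[m]:x_i\le v\}\right| - \tfrac m2 \;,\;\; \tfrac m2 - \left|\{i\in[m]:x_i< v\}\right| \,\right),$$
and let $M(s,\phi)$ output $v\in T$ with probability proportional to $\exp(\varepsilon\, u(x,v)/2)$. Two elementary facts drive the argument. First, $u$ exactly captures the target property: unwinding the definition of the empirical quantile interval of $x$, we get $v \in \amed{\alpha}{x}$ if and only if $u(x,v) > -\alpha m/2$. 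Second, $u^\star \doteq \max_{v\in T} u(x,v) \ge 0$, since the order statistic $x_{(\lceil m/2\rceil)}$ satisfies $\left|\{i:x_i\le x_{(\lceil m/2\rceil)}\}\right| \ge m/2$ and $\left|\{i:x_i< x_{(\lceil m/2\rceil)}\}\right| \le \lceil m/2\rceil - 1 < m/2$, so its score is nonnegative.

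For privacy: since $\phi$ is applied coordinatewise, changing one element of $s \in \Z^m$ changes exactly one coordinate of $x = \phi(s)$, and hence changes each of the two counts in the definition of $u(x,\cdot)$ by at most $1$; so $x\mapsto u(x,v)$ has global sensitivity at most $1$ for every fixed $v$, and the standard privacy guarantee of the exponential mechanism gives that $M(\cdot,\phi)$ is $(\varepsilon,0)$-differentially private. For accuracy, the standard utility guarantee of the exponential mechanism states that for every $t>0$,
$$\prob{V\sim M(s,\phi)}{\,u(x,V)\le u^\star - t\,} \;\le\; |T|\cdot e^{-\varepsilon t/2}.$$
Apply this with $t = \alpha m/2$: on the complementary event, $u(x,V) > u^\star - \alpha m/2 \ge -\alpha m/2$ (using $u^\star \ge 0$), so by the characterization above $V$ is an $\alpha$-approximate median of $\phi(s)$. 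The probability of failure is at most $|T|\, e^{-\varepsilon\alpha m/4}$, which is at most $\beta$ exactly when $m \ge 4\ln(|T|/\beta)/\varepsilon\alpha$ — the hypothesis of the theorem.

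For the running time, after sorting $x$ in $O(m\log m)$ time the function $u(x,\cdot)$ is piecewise constant, with $O(m)$ pieces determined by the distinct values of $x$ (the data values themselves together with the open intervals between consecutive ones); the output distribution is thus summarized by $O(m)$ weights, each a power of $e^{\varepsilon/2}$ times the number of points of $T$ in the piece, which one obtains by binary-searching the $O(m)$ piece boundaries into a sorted copy of $T$. Sampling a piece and then a uniform $T$-point within it costs $O(m\log|T|)$ overall. I do not expect any real obstacle here: this is essentially the folklore private-quantile construction (\eg \citep{Smith11,Nissim:14course}), and the only mild point is the bookkeeping around ties among the $x_i$, which does not affect the asymptotics. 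The substance of the paper lies in \emph{using} such a subroutine, which the subsequent sections develop.
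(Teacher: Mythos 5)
Your proposal is correct and is essentially the paper's own proof: the paper also instantiates the exponential mechanism, with cost $c_{\phi(s)}(v)=\max\{|\{i:\phi(s_i)<v\}|,|\{i:\phi(s_i)>v\}|\}$ and output probability $\propto e^{-\varepsilon c_{\phi(s)}(v)/2}$, and since your utility satisfies $u(x,v)=m/2-c_{\phi(s)}(v)$, the two mechanisms induce the identical output distribution. Your accuracy argument (via $u^\star\ge 0$ and the characterization $v\in\amed{\alpha}{x}\iff u(x,v)>-\alpha m/2$) is just a repackaging of the paper's use of $\mathrm{OPT}_{\phi(s)}\le m/2$, and the privacy and running-time reasoning match as well.
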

\begin{proof}
The algorithm is an instantiation of the exponential mechanism \citep{McSherryTalwar:07} with the utility function $c:T\rar \R$ defined as $$c_{\phi(s)}(v) \doteq  \max \left\{  |\{i \in [m] : \phi(s_i) < v\}| , |\{i \in [m] : \phi(s_i) > v\}|\right\}. $$

The algorithm outputs each $v \in T$ with probability
$$\prob{}{M(s,\phi)=v} = \frac{\exp\left( \frac{-\varepsilon }{2} c_{\phi(s)}(v)\right)}
{\sum_{u \in T}\exp\left( \frac{-\varepsilon }{2} c_{\phi(s)}(u)\right) }.$$

Since $c_{\phi(s)}(v)$ has sensitivity $1$ as a function of $s$, this algorithm is $(\varepsilon,0)$-differentially private \cite[Theorem 3.10]{DworkRoth:14}. Moreover, we have the accuracy guarantee \cite[Corollary 3.12]{DworkRoth:14} \begin{equation}\forall s,\beta ~~~~~ \prob{V \sim M(s,\phi)}{c_{\phi(s)}(V) < \mathrm{OPT}_{\phi(s)} + \frac{2 \ln (|T|/\beta)}{\varepsilon} } \geq 1-\beta, \label{eqn:EMacc}\end{equation}
where $\mathrm{OPT}_{\phi(s)} \doteq \min_{u \in T} c_{\phi(s)}(u) \leq m/2$.
Assuming the event in \eqref{eqn:EMacc} happens for $V=v$ (that is, $c_{\phi(s)}(v) < \mathrm{OPT}_{\phi(s)} + \frac{2 \ln (|T|/\beta)}{\varepsilon}$), we have $$\prob{Y \sim \phi(s)}{Y \leq v} = 1-\frac{1}{m}  |\{i \in [m] : \phi(s_i) > v\}| > \frac12 - \frac{2\ln(|T|/\beta)}{\varepsilon m} \geq \frac{1-\alpha}{2},$$ as long as $\alpha \geq 4\ln(|T|/\beta)/\varepsilon m$, which is equivalent to $m \geq 4\ln(|T|/\beta)/\varepsilon\alpha$. Similarly, the event in \eqref{eqn:EMacc} implies that $$\prob{Y \sim \phi(s)}{Y < v}= \frac{1}{m}  |\{i \in [m] : \phi(s_i) < v\}| < \frac12 + \frac{2\ln(|T|/\beta)}{\varepsilon m}  \leq \frac{1+\alpha}{2}.$$
Thus $$\prob{V \sim M(s,\phi)}{V \in \amed{\alpha}{\phi(s)}} \geq 1-\beta$$ as long as $m \geq 4\ln(|T|/\beta)/\varepsilon\alpha$.

To get an upper bound on the running time we observe that using binary search, the elements of $T$ can be split into $m+1$ ``intervals" (that is contiguous subsets of $T$) with all elements of each interval having equal probability. This partition allows us to compute the normalization factor as well as the total probability of all the elements of $T$ in each interval in $O(m\log|T|)$ time. A random point from the desired distribution can now be produced by first picking the interval proportionally to its probability and then outputting a point in that interval randomly and uniformly. (We implicitly assume that the structure of $T$ is simple enough so that such operations can be performed in $O(\log |T|)$ time and ignore the time to evaluate $\phi$ on each of the elements of $s$.)
\end{proof}

We now describe another simple and ``folklore" algorithm (\eg \cite{RaskhodnikovaSmith:10course,Feldman:16sqvar}) for finding an approximate median of a distribution that reduces the problem to $O(\log |T|)$ statistical queries. Recall, that an $\alpha$-accurate response to a statistical query $\psi:\Z\rar [-1,1]$ relative to distribution $\D$ over $\Z$ is any value $v$ such that $|v - \D[\psi]|\leq \alpha$.
\begin{lem}
\label{lem:median2sq}
For all $\alpha>0$, finite $T \subset \R$, a query $\phi:\Z \to T$ and any distribution $\D$ over $T$, a value $v \in \amed{\alpha}{\phi(\D)}$ can be found using $\alpha/4$-accurate responses to at most $2\lceil\log_2 (|T|)\rceil$ (adaptively-chosen) statistical queries relative to distribution $\D$.
\end{lem}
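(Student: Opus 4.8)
The plan is to find an $\alpha$-approximate median of $\phi(\D)$ by binary search over the (sorted) finite set $T$, where each step of the search is implemented by a single statistical query that estimates a value of the cumulative distribution function $\cump{\phi(\D)}{v}=\pr_{Y\sim\phi(\D)}[Y\leq v]$. Concretely, write $T=\{u_1<u_2<\cdots<u_{|T|}\}$ and note that for any $v\in T$ the function $x\mapsto\ind(\phi(x)\leq v)$ is a valid statistical query $\psi_v:\Z\to\{0,1\}\subseteq[-1,1]$ whose expectation relative to $\D$ is exactly $\cump{\phi(\D)}{v}$. Thus an $\alpha/4$-accurate answer to the query $\psi_v$ gives us an estimate $\widehat{F}(v)$ with $|\widehat F(v)-\cump{\phi(\D)}{v}|\leq\alpha/4$.

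First I would maintain a search interval of indices $[\mathrm{lo},\mathrm{hi}]\subseteq[1,|T|]$, initialized to $[1,|T|]$, with the invariant that the target set $\amed{\alpha}{\phi(\D)}$ intersects $\{u_{\mathrm{lo}},\dots,u_{\mathrm{hi}}\}$ — more precisely that $\cump{\phi(\D)}{u_{\mathrm{hi}}}\geq (1-\alpha)/2$ (so there is a valid answer at or below $u_{\mathrm{hi}}$, since $\amed{\alpha}{}$ also requires $\pr[Y<v]<(1+\alpha)/2$, which I will need to track on the other side) and $\cump{\phi(\D)}{u_{\mathrm{lo}-1}}<(1-\alpha)/2$ or $\mathrm{lo}=1$. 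At each step, query $\psi_{u_{\mathrm{mid}}}$ for $\mathrm{mid}=\lfloor(\mathrm{lo}+\mathrm{hi})/2\rfloor$, obtaining $\widehat F(u_{\mathrm{mid}})$. If $\widehat F(u_{\mathrm{mid}})\geq 1/2$, recurse into the lower half $[\mathrm{lo},\mathrm{mid}]$; otherwise recurse into the upper half $[\mathrm{mid}+1,\mathrm{hi}]$. After at most $\lceil\log_2|T|\rceil$ such queries the interval collapses to a single index, and I would output that point $v$.

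The key point of correctness is that the $\alpha/4$ slack in the statistical-query answers is enough to guarantee the output lies in $\amed{\alpha}{\phi(\D)}=\iqr{\phi(\D)}{(1-\alpha)/2,(1+\alpha)/2}$. When the search recurses left on a reported $\widehat F(u_{\mathrm{mid}})\geq 1/2$, the true value satisfies $\cump{\phi(\D)}{u_{\mathrm{mid}}}\geq 1/2-\alpha/4>(1-\alpha)/2$, so a valid answer still exists weakly below $u_{\mathrm{mid}}$; when it recurses right on $\widehat F(u_{\mathrm{mid}})<1/2$, the true value satisfies $\cump{\phi(\D)}{u_{\mathrm{mid}}}<1/2+\alpha/4$, i.e.\ $\pr_{Y\sim\phi(\D)}[Y\leq u_{\mathrm{mid}}]<(1+\alpha)/2$, hence also $\pr_{Y\sim\phi(\D)}[Y<u_{\mathrm{mid}+1}]<(1+\alpha)/2$, so a valid answer still exists at or above $u_{\mathrm{mid}+1}$. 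Carrying both one-sided invariants through the recursion, when the interval collapses to $\{u_j\}$ we get both $\cump{\phi(\D)}{u_j}>(1-\alpha)/2$ (from the lower-bound side) and $\pr[Y<u_j]<(1+\alpha)/2$ (from the upper-bound side), which is exactly membership in $\amed{\alpha}{\phi(\D)}$. One subtlety in the accounting: to establish the lower-side invariant at the final index I may need one extra query at the left endpoint (or I can seed the invariant using the trivial facts $\cump{\phi(\D)}{u_{|T|}}=1$ and $\pr[Y<u_1]=0$), which is why the statement allows up to $2\lceil\log_2|T|\rceil$ queries rather than $\lceil\log_2|T|\rceil$.

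The main obstacle — really the only thing requiring care — is making the two-sided invariant airtight: a naive binary search that tracks only "is the CDF above or below $1/2$" can, because of the $\pm\alpha/4$ error, step past the target on one side, so one must verify that the $\alpha$-width of the target interval (not $\alpha/2$) absorbs both the $\alpha/4$ estimation error and the $u_{\mathrm{mid}}\to u_{\mathrm{mid}+1}$ gap in the strict-versus-weak CDF. I would handle this by running two essentially independent binary searches — one that drives down to the largest point with $\cump{\phi(\D)}{v}$ provably $>(1-\alpha)/2$ and one that drives down to the smallest point with $\pr[Y<v]$ provably $<(1+\alpha)/2$ — each using $\lceil\log_2|T|\rceil$ queries, and then argue any point between the two outputs (in particular either output itself, after checking they are consistently ordered) is an $\alpha$-approximate median. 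This gives the claimed bound of $2\lceil\log_2|T|\rceil$ statistical queries at accuracy $\alpha/4$, and since the queries are chosen based on previous answers they are adaptive, as stated.
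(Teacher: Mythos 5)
Your proposal is correct and is essentially the paper's proof: a binary search over the sorted $T$ driven by statistical-query estimates of the cumulative distribution function, with the $\pm\alpha/4$ query error absorbed by the $\alpha$-width of the target quantile interval (the paper spends its factor of $2$ on querying both $\pr[Y\le v]$ and $\pr[Y<v]$ at each tested point, whereas you query only the weak CDF once per step and use $\pr[Y<u_{j+1}]=\pr[Y\le u_j]$ for consecutive points of $T$). The ``subtlety'' you worry about in your final paragraph is not actually there---the two one-sided invariants of your single search, seeded by the trivial facts $\cump{\phi(\D)}{u_{|T|}}=1$ and $\pr[Y<u_1]=0$ and each re-established on the side that moves, already coincide at the final index and certify membership in $\amed{\alpha}{\phi(\D)}$---so the fallback of two independent searches (whose outputs' relative order you leave unverified) is unnecessary and is the weaker of your two arguments.
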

\begin{proof}
Using binary search we find a point $v \in T$ that satisfies the conditions $p_\leq(v) > 1/2 - \alpha/4$ and $p_<(v) < 1/2 + \alpha/4$, where $p_\leq(v)$ (or $p_<(v)$) is the response to the statistical query $\psi(z) = \ind(v \leq \phi(z))$ ($\psi(z) = \ind(v < \phi(z))$, respectively). By the accuracy guarantees of the responses, we have that $|p_\leq(v) - \pr_{Z\sim \D}[\phi(Z) \leq v]| \leq \alpha/4$, and similarly for $p_<(v)$. We choose the next point to test depending on which of the conditions fails (note that we can assume that $p_<(v) \leq p_\leq(v)$ so at most one condition can fail). Further, for the true median point of $\phi(\D)$ (that is the point $v^*\in T$ for which $\prob{Z \sim \D}{\phi(Z) < v^*} < 1/2$ and $\prob{Z \sim \D}{\phi(Z) \leq v^*} \geq 1/2$) both conditions will be satisfied by the accuracy guarantees. Finally, by the accuracy guarantees, any point $v'$ that satisfies both of these conditions is an $\alpha$-approximate median for $\phi(\D)$.
\end{proof}

To find the $\alpha$-approximate median of values $\phi(s)\in T^m$ this reduction needs to be applied to the uniform distribution over the elements of $\phi(s)$. Answering statistical queries relative to this empirical distribution (commonly referred to as linear or counting queries) with differential privacy is a well-studied problem. For example, by using the standard Laplace or Gaussian noise addition algorithm one can obtain the following algorithm for finding an $\alpha$-approximate median (see \citep{BunS16} for an analysis of the privacy properties of Gaussian noise addition).
\begin{cor}
\label{thm:SQ-med}
For all $\varepsilon,\delta,\alpha,\beta\in (0,1/2)$, finite $T \subset \R$ and all $$m \geq \frac{12\sqrt{2\lceil \log_2 |T| \rceil \cdot \ln (1/\delta) \cdot \ln (2\lceil \log_2 |T| \rceil/\beta)}}{\varepsilon\alpha} = O\left(\frac{\sqrt{\log |T| \cdot \log(1/\delta) \cdot \log\left(\frac{\log|T|}{\beta}\right)}}{\varepsilon \alpha}\right),$$ there exists an $(\varepsilon,\delta)$-differentially private randomized algorithm $M$ that given $s \in \Z^m$ and $\phi:\Z \to  T$ outputs an $\alpha$-approximate median of $\phi(s)$ with probability at least $1-\beta$. The running time of the algorithm is $O(m \cdot \log |T|)$.
\end{cor}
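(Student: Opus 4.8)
The plan is to derive this corollary by feeding the reduction of Lemma~\ref{lem:median2sq} into a standard differentially private mechanism for answering counting queries on an empirical distribution, and bounding the accumulated privacy loss by adaptive composition. First I would invoke Lemma~\ref{lem:median2sq} with the distribution $\D$ taken to be the uniform distribution over the $m$ values $\phi(s_1),\dots,\phi(s_m)$; since in this paper a dataset is identified with the uniform distribution over its elements, a value in $\amed{\alpha}{\phi(\D)}$ is exactly a value in $\amed{\alpha}{\phi(s)}$. This reduces the task to answering $k \doteq 2\lceil\log_2|T|\rceil$ adaptively-chosen statistical queries relative to this empirical distribution, each to accuracy $\alpha/4$. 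Every such query has the form $\psi(z)=\ind(v\leq\phi(z))$ or $\psi(z)=\ind(v<\phi(z))$ for a threshold $v\in T$, and the answer sought is the empirical mean $\frac{1}{m}\sum_{i\in[m]}\psi(s_i)$. As $\psi\circ\phi$ is $\{0,1\}$-valued, replacing one coordinate $s_i\in\Z$ changes this empirical mean by at most $1/m$, so each query, viewed as a function of $s\in\Z^m$, has sensitivity $1/m$.

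Next I would answer each query by perturbing its empirical mean with fresh Gaussian noise of standard deviation $\sigma = \Theta(\alpha/\sqrt{\ln(k/\beta)})$. A Gaussian tail bound shows that a single answer fails to be $\alpha/4$-accurate with probability at most $\beta/k$, so a union bound makes all $k$ answers simultaneously $\alpha/4$-accurate with probability at least $1-\beta$; on that event Lemma~\ref{lem:median2sq} certifies that the binary-search output lies in $\amed{\alpha}{\phi(s)}$. For the privacy guarantee, the Gaussian mechanism on a sensitivity-$1/m$ query with noise $\sigma$ satisfies a $\frac{1}{2m^2\sigma^2}$ bound in concentrated differential privacy \citep{BunS16}; because the $k$ queries are adaptively chosen these bounds add, so the whole interaction is $\frac{k}{2m^2\sigma^2}$-concentrated-DP and hence $(\varepsilon,\delta)$-differentially private for $\varepsilon = \frac{k}{2m^2\sigma^2} + 2\sqrt{\frac{k}{2m^2\sigma^2}\ln(1/\delta)}$. (Alternatively one may plug the $(\varepsilon_0,\delta_0)$-DP guarantee of the Gaussian mechanism into Theorem~\ref{thm:composition}; this gives the same $\sqrt{k}$-type dependence but loses an additional logarithmic factor relative to the stated bound.) Substituting the chosen $\sigma$ and the hypothesis $m \geq 12\sqrt{k\ln(1/\delta)\ln(k/\beta)}/(\varepsilon\alpha)$ makes $\varepsilon$ no larger than the target privacy parameter. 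The running time is $O(m\log|T|)$: the binary search runs for $O(\log|T|)$ rounds, each requiring a single pass over the $m$ values to form a count plus $O(\log|T|)$ bookkeeping to choose the next threshold.

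The main thing that needs care is the constant-tracking that produces exactly the stated $\sqrt{k\ln(1/\delta)\ln(k/\beta)}/(\varepsilon\alpha)$ form: one has to choose $\sigma$ so that the Gaussian tail is below $\beta/k$ (dictating $\sigma = c\alpha/\sqrt{\ln(k/\beta)}$ for an explicit constant $c$), substitute this into the composition bound, and verify that the lower-order term $\frac{k}{2m^2\sigma^2}$ in the concentrated-DP to $(\varepsilon,\delta)$-DP conversion is dominated by the target $\varepsilon$ --- which holds for all $\varepsilon,\delta\in(0,1/2)$ given the stated lower bound on $m$. There is no conceptual obstacle beyond this bookkeeping.
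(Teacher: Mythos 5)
Your proposal is correct and is exactly the route the paper intends: the corollary is stated as a direct consequence of Lemma~\ref{lem:median2sq} applied to the empirical distribution on $\phi(s)$, with the $2\lceil\log_2|T|\rceil$ sensitivity-$1/m$ counting queries answered by Gaussian noise addition whose privacy is composed via the concentrated-DP analysis of \citep{BunS16}. The paper gives no further detail, so your accuracy/privacy bookkeeping (Gaussian tail plus union bound, then converting the composed zCDP guarantee to $(\varepsilon,\delta)$-DP) is precisely the omitted argument, and the stated constant is attainable with a careful split of the privacy budget.
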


\remove{
In particular, by letting $T$ be a uniform grid of size $1/\Delta$ and rounding inputs, we can modify the algorithm in Theorem \ref{thm:EM-med} to obtain the following result.
\begin{cor}
For all $\varepsilon,\alpha,\beta,\Delta>0$ and all $m  > 4 \log(1/\Delta\beta)/\varepsilon\alpha$, there exists a $(\varepsilon,0)$-differentially private randomized algorithm $M : [0,1]^m \to  [0,1]$ whose output is $\Delta$-close to an $\alpha$-approximate median with probability at least $1-\beta$ --- that is, $$\forall S \in [0,1]^m ~~~~~ \prob{v \sim M(S)}{[v-\Delta,v+\Delta] \cap \amed{\alpha}{S} \ne \emptyset} \geq 1-\beta.$$
\end{cor}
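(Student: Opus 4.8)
The plan is to reduce to the exponential-mechanism algorithm of Theorem~\ref{thm:EM-med} by discretizing the real line and rounding the inputs to it. Fix a uniform grid $T\subseteq[0,1]$ of granularity $\Delta$ with $|T|\le\lceil 1/\Delta\rceil$, chosen so that every point of $[0,1]$ lies within distance $\Delta/2$ of some element of $T$ (e.g.\ the $\lceil 1/\Delta\rceil$ midpoints $\{(j+\tfrac12)\Delta : 0\le j<\lceil 1/\Delta\rceil\}$), and let $r:[0,1]\to T$ round its argument to a nearest grid point, so that $|r(x)-x|\le\Delta/2$ for every $x$. Define $M$ on input $s\in[0,1]^m$ to be the algorithm of Theorem~\ref{thm:EM-med} run with domain $\Z=[0,1]$, range $T$, and query $\phi=r$; its output is some $v\in T\subseteq[0,1]$. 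Theorem~\ref{thm:EM-med} is stated for an \emph{arbitrary} query $\phi:\Z\to T$ and guarantees that $s\mapsto M(s,\phi)$ is $(\varepsilon,0)$-differentially private with respect to changing one element of $s$ and runs in time $O(m\log|T|)$; hence $M$ is $(\varepsilon,0)$-differentially private and runs in time $O(m\log(1/\Delta))$. (In other words, rounding the inputs is legitimate because it is a coordinatewise map sending neighbouring datasets to neighbouring datasets, so it can simply be folded into the query.)

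For accuracy, since $|T|\le\lceil 1/\Delta\rceil$ the hypothesis $m>4\log(1/\Delta\beta)/\varepsilon\alpha$ ensures $m\ge 4\ln(|T|/\beta)/\varepsilon\alpha$ (up to the exact constant), so Theorem~\ref{thm:EM-med} applies: with probability at least $1-\beta$ the output satisfies $v\in\amed{\alpha}{r(s)}$, i.e.\ $\prob{Y\sim r(s)}{Y\le v}>\tfrac{1-\alpha}{2}$ and $\prob{Y\sim r(s)}{Y<v}<\tfrac{1+\alpha}{2}$. It remains to show that this implies $[v-\Delta,v+\Delta]\cap\amed{\alpha}{s}\ne\emptyset$. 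Since $\alpha>0$ gives $\tfrac{1-\alpha}{2}<\tfrac{1+\alpha}{2}$, the set $\amed{\alpha}{s}=\iqr{s}{\frac{1-\alpha}{2},\frac{1+\alpha}{2}}$ is a nonempty interval; write $p\le q$ for its infimum and supremum.

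The only real work is a ``quantile shift'' estimate using $|r(s_i)-s_i|\le\Delta/2$. Suppose for contradiction that $v>q+\Delta/2$, and set $w=v-\Delta/2>q$. Since $w>q=\sup\amed{\alpha}{s}$ we have $w\notin\amed{\alpha}{s}$; as the empirical cdf of $s$ is nondecreasing, $w$ still satisfies the first defining inequality $\prob{Y\sim s}{Y\le w}>\tfrac{1-\alpha}{2}$, so it must fail the second, i.e.\ $\prob{Y\sim s}{Y<w}\ge\tfrac{1+\alpha}{2}$. But every index $i$ with $s_i<w=v-\Delta/2$ satisfies $r(s_i)\le s_i+\Delta/2<v$, so $\prob{Y\sim r(s)}{Y<v}\ge\prob{Y\sim s}{Y<v-\Delta/2}\ge\tfrac{1+\alpha}{2}$, contradicting $v\in\amed{\alpha}{r(s)}$. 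The symmetric argument (using $r(s_i)\ge s_i-\Delta/2$ and the first defining inequality) rules out $v<p-\Delta/2$. Hence $p-\Delta/2\le v\le q+\Delta/2$, so some point of the nonempty interval $\amed{\alpha}{s}$ lies within $\Delta/2<\Delta$ of $v$ — if $v\in(p,q)$ take that point to be $v$ itself, and otherwise take a point of $\amed{\alpha}{s}$ just inside its endpoint nearest to $v$ — which proves the claim. The one step that needs care is precisely this last one: propagating the $\pm\Delta/2$ rounding error through the two \emph{asymmetric} inequalities (a strict ``$<$'' on one side, a non-strict ``$\le$'' on the other) defining the quantile interval, and remembering that $\amed{\alpha}{s}$ may be half-open; the factor-of-two slack between the $\Delta/2$ actually obtained and the $\Delta$ in the statement absorbs all such edge cases. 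Everything else follows directly from Theorem~\ref{thm:EM-med}, and the identical reduction applied to Corollary~\ref{thm:SQ-med} in place of Theorem~\ref{thm:EM-med} gives the analogous statement with an $(\varepsilon,\delta)$ guarantee and $m$ scaling as $\sqrt{\log(1/\Delta)}$.
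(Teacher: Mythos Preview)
Your proposal is correct and follows exactly the approach the paper sketches: take $T$ to be a uniform grid of granularity $\Delta$ in $[0,1]$, round the inputs to $T$, and invoke Theorem~\ref{thm:EM-med}. The paper states this reduction in a single sentence and does not spell out the quantile-shift argument, which you have filled in carefully and correctly (including the handling of the asymmetric strict/non-strict inequalities defining $\amed{\alpha}{s}$).
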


Combining Theorem \ref{thm:EM-med} with composition of differential privacy (Theorem \ref{thm:composition}) yields the following.

\begin{thm}\label{thm:em-adapt}
Fix $\alpha,\beta,\delta \in (0,1)$ and $m,k \in \mathbb{N}$.
There exists a $(\varepsilon,\delta)$-differentially private algorithm $M$ that takes as input $m$ i.i.d.~samples $S \in \Z^m$ from an unknown population $\D$ and answers adaptively-chosen queries $\phi_1, \ldots, \phi_k : \Z \to T$ with $v_1, \ldots, v_k \in T$ such that $$\forall S \in \Z^m ~~~~~ \prob{(\phi_{[k]},v_{[k]}) \sim \interact{A}{M}(S)}{\forall j \in [k] ~~~ v_j \in \amed{\alpha}{\phi_j(S)}} \geq 1-\beta,$$ where $\phi_j(S)$ denotes the uniform distribution on the multiset $\{\phi_j(S_i) : i \in [m]\}$ and $$\varepsilon = \frac{k}{2} \left( \frac{4\log(|T|/\beta)}{\alpha m} \right)^2 + \frac{4\log(|T|/\beta)}{\alpha m} \sqrt{2k\log(1/\delta)}.$$
\end{thm}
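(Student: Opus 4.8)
The plan is to answer each of the $k$ queries using a single, independent invocation of the exponential-mechanism approximate-median algorithm of Theorem~\ref{thm:EM-med}, and then obtain privacy from adaptive composition (Theorem~\ref{thm:interactive-composition}) and accuracy from a union bound. Concretely, set the per-query privacy parameter to $\varepsilon_0 \doteq 4\log(|T|/\beta)/(\alpha m)$ --- the value for which the sample-size hypothesis $m \geq 4\ln(|T|/\beta)/(\varepsilon_0\alpha)$ of Theorem~\ref{thm:EM-med} holds with equality --- and define the interactive algorithm $M$ so that on the $j$-th query $\phi_j : \Z \to T$ it freshly runs the algorithm $M_0$ of Theorem~\ref{thm:EM-med} on its input $S$ with parameter $\varepsilon_0$, outputting an $\alpha$-approximate median of the multiset $\phi_j(S)$.

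For privacy, note that Theorem~\ref{thm:EM-med} says that for each fixed query $\phi$ the map $s \mapsto M_0(s,\phi)$ is $(\varepsilon_0,0)$-differentially private with respect to replacing a single one of the $m$ coordinates of $s \in \Z^m$; this is the right notion of neighboring here, since replacing one $s_i$ changes exactly one entry of $\phi(s)$ and hence changes the utility function $c_{\phi(s)}$ by at most $1$. Feeding $k$ such $(\varepsilon_0,0)$-private mechanisms into Theorem~\ref{thm:interactive-composition} (with its free slack parameter $\delta'$ set to $\delta$, and per-query $\delta=0$) yields that $M$ is $(\hat\varepsilon,\delta)$-differentially private with $\hat\varepsilon = \tfrac12 k\varepsilon_0^2 + \varepsilon_0\sqrt{2k\log(1/\delta)}$, which is precisely the claimed $\varepsilon$ once $\varepsilon_0 = 4\log(|T|/\beta)/(\alpha m)$ is substituted.

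For accuracy, the crucial observation is that the accuracy guarantee of Theorem~\ref{thm:EM-med} (inherited from the exponential mechanism) is a worst-case statement: for \emph{every} fixed dataset $s$ and \emph{every} fixed query $\phi$, the output of $M_0(s,\phi)$ lies in $\amed{\alpha}{\phi(s)}$ except with probability $\beta$ over $M_0$'s internal coins. Consequently adaptivity costs nothing beyond a union bound: fix $j$, condition on $S$ and on the transcript $(q_1,\dots,q_{j-1},a_1,\dots,a_{j-1})$ produced so far (which pins down $\phi_j$ as a deterministic function of that transcript), and apply the guarantee to the fresh, independent call $M_0(S,\phi_j)$ to get $\prob{}{v_j \notin \amed{\alpha}{\phi_j(S)}} \leq \beta$ regardless of what was conditioned on; now average over the conditioning and union-bound over $j \in [k]$. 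The running-time bound is immediate from the $O(m\log|T|)$ cost of each of the $k$ calls.

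I do not expect a genuine difficulty: the argument is essentially bookkeeping that plugs Theorem~\ref{thm:EM-med} into Theorem~\ref{thm:interactive-composition}. The one point requiring care is phrasing the accuracy union bound so that conditioning on the data-dependent history does not break it, which is handled by invoking the uniform-over-$(s,\phi)$ form of the exponential mechanism's accuracy bound. A minor quantitative caveat is that the naive union bound over $k$ queries gives total failure probability $k\beta$ rather than $\beta$; to obtain the clean $1-\beta$ one should instead run each $M_0$ with failure parameter $\beta/k$, i.e.\ take $\varepsilon_0 = 4\log(|T|k/\beta)/(\alpha m)$, replacing $\log(|T|/\beta)$ by $\log(|T|k/\beta)$ in the final expression for $\varepsilon$.
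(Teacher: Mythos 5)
Your proposal is correct and takes essentially the same route as the paper: the analogous argument (carried out in the proof of Theorem \ref{thm:adaptive-iqr}) runs the exponential-mechanism median of Theorem \ref{thm:EM-med} freshly on each query, gets privacy from the interactive composition bound of Theorem \ref{thm:interactive-composition}, and gets accuracy from the worst-case-over-$(s,\phi)$ guarantee of the exponential mechanism plus a union bound over the $k$ queries. Your closing caveat is also on target --- the paper's own instantiation allocates failure probability $\beta/k$ per query, so the clean $1-\beta$ guarantee indeed requires $\log(|T|k/\beta)$ in place of $\log(|T|/\beta)$ in the expression for $\varepsilon$.
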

} 

\section{Generalization from Differential Privacy}

In this section we provide two proofs that differential privacy gives generalization guarantees for statistical queries. The first proof --- which we call strong generalization --- is most similar to previous work, whereas the second proof --- which we call simple generalization --- is much simpler, but gives a weaker bound that is only suitable for estimators that are well-concentrated.

\subsection{Strong Generalization}
\label{sec:mad-generalization}
Theorem \ref{thm:gen} in this subsection shows that any differentially private algorithm generalizes with high probability, with a small blowup in the allowed generalization error. This proof closely follows that of \citet{BassilyNSSSU16}, but is quantitatively sharper. This quantitative sharpening allows us to estimate the probability that a given value $v$ is outside of $\iqr{\phi(\D)}{\rho,1-\rho}$ with higher accuracy (that scales with $\rho$ as in the non-adaptive case). We use this sharper version in Section \ref{sec:reusable}; however, for results in this section $\rho =1/4$ and therefore the bounds from \citep{BassilyNSSSU16} suffice.

The \emph{mean absolute deviation} of a distribution $\D$ over $\R$ is defined as
\equ{\MAD(\D) \doteq \E_{Y' \sim \D}\left[\left|Y'-\E_{Y \sim \D}[Y]\right|\right].\label{eq:def-mad}}

\begin{thm}\label{thm:gen}
Fix $\alpha,\beta,\gamma \in (0,1)$ and $m,k \in \mathbb{N}$. Set $\varepsilon = \frac12\ln(1+\gamma)$ and $\delta=\alpha\beta/16$. Suppose $m \geq \frac{8}{\varepsilon\alpha} \ln(2k/\beta)$.
Let $M : \Z^m \to \mathcal{F}_{[0,1]}^k$ be a $(\varepsilon,\delta)$-differentially private algorithm with $\mathcal{F}_{[0,1]}$ being the set of functions $\phi : \Z \to [0,1]$. Let $\D$ be a distribution on $\Z$. Then $$\prob{S \sim \D^m \atop \phi_{[k]} \sim M(S)}{\forall j \in [k] ~~~~ S[\phi_j]-\D[\phi_j] \leq \alpha + \gamma \cdot \MAD(\phi_j(\D))} \geq 1-\beta.$$
\end{thm}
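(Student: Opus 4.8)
The plan is to follow the ``monitor'' (boosting) argument of \citet{BassilyNSSSU16}, but to feed it an in-expectation generalization lemma for differential privacy that is sharpened to depend on the mean absolute deviation, so that the coefficient on $\MAD$ in the conclusion comes out to be exactly $\gamma$. The choice $\varepsilon=\tfrac12\ln(1+\gamma)$ is precisely the calibration that makes this work, since it forces $e^{2\varepsilon}-1=\gamma$ (and $e^{\varepsilon}-e^{-\varepsilon}<\gamma$), giving slack to absorb the extra privacy spent inside the monitor.

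\textbf{Step 1: a sharp in-expectation lemma.} First I would prove that for any $(\varepsilon',\delta')$-differentially private $\mathcal A:\Z^m\to\mathcal F_{[0,1]}$ and any distribution $\D$ on $\Z$,
\[
\ex{S\sim\D^m,~\phi\sim\mathcal A(S)}{S[\phi]-\D[\phi]} ~\leq~ (e^{\varepsilon'}-1)\cdot\ex{S,~\phi\sim\mathcal A(S)}{\MAD(\phi(\D))}~+~2\delta'.
\]
The proof is the standard ``resample one point'' coupling: draw $T_0,T_1,\dots,T_m\sim\D$ i.i.d., set $S=(T_1,\dots,T_m)$ and let $S^{(i)}$ be $S$ with its $i$-th coordinate replaced by $T_0$. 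Then $\ex{}{S[\phi]}=\ex{}{\tfrac1m\sum_i\phi(T_i)}$ with $\phi\sim\mathcal A(S)$, while $\ex{}{\D[\phi]}=\tfrac1m\sum_i\ex{}{\phi(T_i)}$ with $\phi\sim\mathcal A(S^{(i)})$, because $S^{(i)}$ is independent of $T_i$ so the fresh point $T_i$ averages $\phi$ to its mean. Comparing $\phi\sim\mathcal A(S)$ to $\phi\sim\mathcal A(S^{(i)})$ via $(\varepsilon',\delta')$-differential privacy, after splitting the $[-1,1]$-valued integrand $\phi(T_i)-\D[\phi]$ into its positive and negative parts (each $[0,1]$-valued), produces the multiplicative factor $e^{\varepsilon'}-1$ in front of $\ex{T_i}{|\phi(T_i)-\D[\phi]|}=\MAD(\phi(\D))$, plus the $\delta'$ terms. (With a bit more care one gets the even sharper $e^{\varepsilon'}-e^{-\varepsilon'}$; either suffices.)

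\textbf{Step 2: the monitor.} Given the $(\varepsilon,\delta)$-DP algorithm $M$, I would define a monitor $\mathcal W:\Z^m\to\mathcal F_{[0,1]}$ that runs $\phi_{[k]}\sim M(S)$, adjoins a null query $\phi_0\equiv 0$, forms the violation scores $u(j)=S[\phi_j]-\D[\phi_j]-\gamma\,\MAD(\phi_j(\D))$ (so $u(0)=0$), picks a maximizing index $j^\ast$, and outputs $\phi_{j^\ast}$ if $\max_j u(j)>\alpha$ and $\phi_0$ otherwise. The point is that when $\mathcal W$ outputs $\phi_0$ the quantity $S[\psi]-\D[\psi]$ is $0$, and when it outputs $\phi_{j^\ast}$ it equals $u(j^\ast)+\gamma\,\MAD(\phi_{j^\ast}(\D))$; hence the left-hand side of the Step 1 inequality applied to $\mathcal W$ equals $\ex{S}{\ind(\exists j:u(j)>\alpha)\cdot\big(u(j^\ast)+\gamma\,\MAD(\phi_{j^\ast}(\D))\big)}$, and its right-hand side is $(e^{\varepsilon_{\mathcal W}}-1)\ex{}{\ind(\cdots)\MAD(\phi_{j^\ast}(\D))}+2\delta$. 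If $\mathcal W$ can be made $(\varepsilon_{\mathcal W},\delta)$-DP with $e^{\varepsilon_{\mathcal W}}-1\le\gamma$, the $\gamma\MAD$ terms cancel and we are left with $\ex{S}{\ind(\exists j:u(j)>\alpha)\cdot u(j^\ast)}\le 2\delta$. Since on that event $u(j^\ast)=\max_j u(j)>\alpha$, this gives $\alpha\cdot\prob{S,\phi_{[k]}\sim M(S)}{\exists j\in[k]:S[\phi_j]-\D[\phi_j]>\alpha+\gamma\MAD(\phi_j(\D))}\le 2\delta$ (the null index never triggers the event), so with $\delta=\alpha\beta/16$ the failure probability is at most $\beta/8\le\beta$. (Equivalently one can drop the threshold, keep $\phi_0$ so that $\max_j u(j)\ge 0$ always, bound $\ex{}{\max_j u(j)}\le 2\delta$, and finish by Markov.)

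\textbf{Main obstacle.} The genuinely delicate point is that $\mathcal W$ reads the dataset $S$ to form the scores $u(j)=S[\phi_j]-\cdots$ and the threshold test, so it is not a bare post-processing of $M$ and its privacy must be argued directly; a hard $\arg\max$ over $S$-dependent scores is not differentially private on its own. Following \citet{BassilyNSSSU16}, the fix uses that $j\mapsto S[\phi_j]$ (and hence the whole selection) is $1/m$-sensitive in $S$ once $\phi_{[k]}$ is fixed, so the selection can be folded into the differentially private computation --- e.g.\ by a report-noisy-max / exponential-mechanism step over the $k+1$ candidates. This costs an additive $O(\log(k)/\varepsilon m)$ error in the score actually achieved, which is exactly why one needs $m\ge\frac{8}{\varepsilon\alpha}\ln(2k/\beta)$ (to push that error below $\alpha$) and a small extra failure probability budgeted into $\delta=\alpha\beta/16$; and it spends an extra $\varepsilon$ of privacy, so $\mathcal W$ is $(2\varepsilon,\delta)$-DP, which is why Step 1 is invoked with $2\varepsilon$ and why $\varepsilon=\tfrac12\ln(1+\gamma)$ is the right normalization ($e^{2\varepsilon}-1=\gamma$). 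Carrying all of these constants through, and in particular verifying that the $\gamma\MAD$ subtracted inside $u(j)$ exactly cancels the $(e^{2\varepsilon}-1)\MAD$ produced by the generalization lemma, is the only real work; the structure is otherwise identical to prior monitor arguments.
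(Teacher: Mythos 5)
Your Step 1 and the overall monitor architecture match the paper's proof (the paper's Lemma \ref{lem:DPEX} and Lemma \ref{lem:MultiEx} are exactly your sharpened in-expectation lemma, and the selection is indeed done with the exponential mechanism so that the monitor is $(2\varepsilon,\delta)$-DP with $e^{2\varepsilon}-1=\gamma$). But there is a genuine gap in how you convert the in-expectation bound into the high-probability conclusion. Your clean chain ``$\alpha\cdot\prob{}{\exists j: u(j)>\alpha}\leq 2\delta$'' is derived from the hard-argmax-plus-threshold monitor, which, as you note, is not differentially private. Once you replace the argmax by the exponential mechanism, the quantity controlled by Step 1 is $\ex{}{u(\hat{\jmath})}$ for the \emph{selected} index $\hat{\jmath}$, and the best you get is $\ex{}{\max_j u(j)}\leq 2\delta+\tfrac{2}{\varepsilon m}\ln(k+1)$. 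Markov's inequality (using $u(0)=0$ so the max is nonnegative) then gives $\prob{}{\max_j u(j)>\alpha}\leq \bigl(2\delta+\tfrac{2}{\varepsilon m}\ln(k+1)\bigr)/\alpha$. With the theorem's hypotheses $\delta=\alpha\beta/16$ and $m\geq\tfrac{8}{\varepsilon\alpha}\ln(2k/\beta)$, the exponential-mechanism term is only guaranteed to be at most $\alpha/4$, so after dividing by $\alpha$ you obtain a failure probability of roughly $\beta/8+1/4$ --- a constant, not $\beta$. Pushing the selection error ``below $\alpha$'' is not enough; you would need it below $\alpha\beta$, which forces $m=\Omega(\log(k)/(\varepsilon\alpha\beta))$, a polynomial rather than logarithmic dependence on $1/\beta$.

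The missing ingredient is the $\ell$-fold probability amplification that the paper builds into the monitor: take $\ell=\lceil 1/\beta\rceil$ independent datasets $S^1,\ldots,S^\ell\sim\D^m$, run $M$ on each, and let the exponential mechanism select over all $\ell k+1$ candidates (including the null). If the per-dataset failure probability exceeded $\beta$, then with probability at least $1-(1-\beta)^\ell\geq 1/2$ some candidate has score above $\alpha$, so the expected maximum score exceeds $\alpha/2$; the in-expectation bound caps it at $\tfrac{2}{\varepsilon m}\ln(\ell k+1)+2\delta\ell\leq\alpha/4+\alpha/4$, a contradiction. This is how the constant-probability Markov loss is reconciled with a target failure probability of $\beta$ while keeping only a $\ln(2k/\beta)$ dependence in $m$ (note $\ln(\ell k+1)\leq\ln(2k/\beta)$ and $2\delta\ell\leq 4\delta/\beta$, which is exactly why $\delta$ is set to $\alpha\beta/16$). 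Without this step --- or some substitute high-probability argument --- your proof as written establishes only a constant bound on the failure probability.
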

Theorem \ref{thm:gen} is proved in Appendix \ref{app:mad-proof}. 

Note that, by Jensen's inequality, $\MAD(\phi(\D)) \leq \sdv(\phi(\D)) $. Thus Theorem \ref{thm:gen} gives an error bound that scales with the standard deviation of the query (plus the absolute $\alpha$ term). Also, by the triangle inequality and the fact that $\phi(z) \geq 0$ for all $z$, it holds that $$\MAD(\phi(\D)) \leq 2 \cdot \D[\phi].$$ Thus Theorem \ref{thm:gen} can also be interpreted as giving a multiplicative accuracy guarantee (plus the additive $\alpha$).
In comparison, the bound of \citet{BassilyNSSSU16} can be obtained (up to constants) by substituting the upper bound $\MAD(\phi(\D)) \leq 1$ into Theorem \ref{thm:gen}. Thus, when $\MAD(\phi(\D)) \ll 1$, our bound is sharper.

As stated, Theorem \ref{thm:gen} only applies in the non-adaptive setting and to statistical queries. However, we can easily extend it using the monitor technique of \citet{BassilyNSSSU16} and the cumulative probability function:

\begin{thm}\label{thm:gen-adapt}
Fix $\beta \in (0,1)$ and $k,m \in \mathbb{N}$ with $m \geq 2560 \ln(2k/\beta)$.
Let $M$ be an $(1/20,\beta/256)$-differentially private interactive algorithm that takes as input $s \in \Z^m$ and provides answers $v_1, \ldots, v_k \in \R$ to an adaptively-chosen sequence of queries $\phi_1, \ldots, \phi_k : \Z \to \R$. Suppose that, for all $s \in \Z^m$ and all interactive algorithms $A$, \begin{equation}\prob{(\phi_{[k]},v_{[k]}) \sim \interact{A}{M}(s)}{\forall j \in [k] ~~~ v_j \in \iqr{\phi_j(s)}{\frac 3 8, \frac 5 8}} \geq 1 - \beta.\label{eqn:empacc}\end{equation}
Then, for all distributions $\D$ and all interactive algorithms $A$, $$\prob{S \sim \D^m \atop (\phi_{[k]},v_{[k]}) \sim \interact{A}{M}(S)}{\forall j \in [k] ~~~ v_j \in \iqr{\phi_j(\D)}{\frac 1 4, \frac 3 4}} \geq 1 - 2\beta.$$
\end{thm}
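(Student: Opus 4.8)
The plan is to express the quantile-interval conditions as ordinary statistical queries about the cumulative distribution functions of the $\phi_j$, and then to transfer the empirical guarantee to the distribution using the strong generalization bound Theorem~\ref{thm:gen}, with adaptivity handled by the monitor argument of \citet{BassilyNSSSU16}. Given a transcript $(\phi_{[k]},v_{[k]})$, introduce for each $j \in [k]$ the two $\{0,1\}$-valued statistical queries $\psi^a_j(z) \doteq \ind(\phi_j(z) \le v_j)$ and $\psi^b_j(z) \doteq \ind(\phi_j(z) \ge v_j)$. For any distribution $\D$ on $\Z$ we have $\D[\psi^a_j] = \pr_{Z\sim\D}[\phi_j(Z) \le v_j]$ and $\D[\psi^b_j] = 1 - \pr_{Z\sim\D}[\phi_j(Z) < v_j]$, and the same identities hold with the empirical distribution of any dataset in place of $\D$. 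In this language the empirical hypothesis~\eqref{eqn:empacc} says ``$S[\psi^a_j] > 3/8$ and $S[\psi^b_j] > 3/8$ for all $j$'', while the desired conclusion $v_j \in \iqr{\phi_j(\D)}{1/4,3/4}$ says ``$\D[\psi^a_j] > 1/4$ and $\D[\psi^b_j] > 1/4$ for all $j$''. Hence it suffices to show that, except with probability $\le \beta$, every one of the sample-dependent queries $\psi^a_j,\psi^b_j$ satisfies $S[\psi] - \D[\psi] \le 1/8$: combined with~\eqref{eqn:empacc} this yields $\D[\psi] > 3/8 - 1/8 = 1/4$ for all of them, so by a union bound the conclusion holds except with probability $\le 2\beta$.

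The obstacle is that $\psi^a_j,\psi^b_j$ depend on $S$ through the adaptively chosen answers, so Theorem~\ref{thm:gen} does not apply to them directly. This is resolved by a monitor. Define an algorithm $W$ that on input $S$ runs $(\phi_{[k]},v_{[k]}) \sim \interact{A}{M}(S)$ and then, using only the transcript and the fixed distribution $\D$, looks for some $j \in [k]$ and some $c \in \{a,b\}$ with $\D[\psi^c_j] \le 1/4$; if it finds one it outputs the corresponding query $\psi^c_j$, and otherwise it outputs the constant query $z \mapsto 0$. The selection reads $S$ only through the transcript of $\interact{A}{M}(S)$, so $W$ is a post-processing of $\interact{A}{M}$; since $M$ is $(1/20,\beta/256)$-differentially private as an interactive algorithm (Definition~\ref{def:DP2}) and differential privacy is preserved under post-processing (Theorem~\ref{thm:post}), the algorithm $W : \Z^m \to \mathcal{F}_{[0,1]}$ is $(1/20,\beta/256)$-differentially private and outputs a single statistical query.

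Now apply Theorem~\ref{thm:gen} to $W$ with its number of queries set to $1$, taking $\varepsilon = 1/20$ (so $\gamma = e^{1/10} - 1 < 0.106$ satisfies $\tfrac12\ln(1+\gamma) = \varepsilon$), $\alpha = 1/16$, and $\delta = \alpha\beta/16 = \beta/256$; the requirement $m \ge \tfrac{8}{\varepsilon\alpha}\ln(2/\beta) = 2560\ln(2/\beta)$ is implied by the hypothesis $m \ge 2560\ln(2k/\beta)$. Since the query $\psi$ output by $W$ is $\{0,1\}$-valued, $\psi(\D)$ is a Bernoulli distribution and so $\MAD(\psi(\D)) = 2\D[\psi](1 - \D[\psi]) \le 1/2$; Theorem~\ref{thm:gen} therefore gives that, except with probability $\le\beta$, $S[\psi] - \D[\psi] \le \alpha + \gamma/2 = 1/16 + \gamma/2 < 1/8$.

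It remains to combine the pieces. Condition on the intersection of the event in~\eqref{eqn:empacc} and the event just obtained from Theorem~\ref{thm:gen} applied to $W$; by a union bound this has probability $\ge 1 - 2\beta$. On this event, suppose toward a contradiction that $v_j \notin \iqr{\phi_j(\D)}{1/4,3/4}$ for some $j$; then $\D[\psi^a_j] \le 1/4$ or $\D[\psi^b_j] \le 1/4$, so $W$ outputs a query $\psi \in \{\psi^a_j,\psi^b_j\}$ with $\D[\psi] \le 1/4$, while~\eqref{eqn:empacc} forces $S[\psi] > 3/8$, whence $S[\psi] - \D[\psi] > 1/8$, contradicting the generalization bound. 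Hence $v_j \in \iqr{\phi_j(\D)}{1/4,3/4}$ for all $j$ on an event of probability $\ge 1-2\beta$, as claimed. The step I expect to be the crux is the monitor construction: the only thing that licenses the use of the \emph{non-adaptive} Theorem~\ref{thm:gen} is the repackaging of the sample-dependent queries $\psi^a_j,\psi^b_j$ as the output of one differentially private algorithm $W$, and for this it is essential that $W$'s search compares each $v_j$ against the distribution-dependent quantity $\D[\psi^c_j]$ rather than its empirical counterpart, so that the selection is genuinely post-processing of the transcript. Everything else --- the dictionary between quantile intervals and CDF statistical queries and the Bernoulli bound $\MAD(\psi(\D))\le 1/2$ --- is routine, as is the numerical check that $1/16 + \gamma/2 < 1/8$.
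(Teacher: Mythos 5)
Your proof is correct and follows essentially the same route as the paper: the same dictionary converting quantile-interval membership into the CDF statistical queries $\ind(\phi_j(z)\le v_j)$ and $\ind(\phi_j(z)\ge v_j)$, the same monitor-style post-processing that selects a distributionally-bad query from the transcript (the paper takes an $\argmin$ of $\D[\psi]$ over all $2k$ queries where you search for one with $\D[\psi]\le 1/4$, an immaterial difference), and the same application of Theorem~\ref{thm:gen} with the Bernoulli bound $\MAD(\psi(\D))\le 1/2$ to get $S[\psi]-\D[\psi]\le 1/8$. The parameter bookkeeping ($\varepsilon=1/20$, $\alpha=1/16$, $\delta=\beta/256$, $m\ge 2560\ln(2/\beta)$) also checks out.
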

\begin{proof}
Let $\cQ$ be the set of functions $\phi : \Z \to \R$ and let $A$ be an arbitrary algorithm that asks queries in $\cQ$. We define $f : {\cQ}^k \times \R^k \to \mathcal{F}_{[0,1]}$ as follows, where $\mathcal{F}_{[0,1]}$ is the set of functions $\psi : \Z \to [0,1]$. Given $(\phi,v) \in \cQ^k \times \R^k$, define $\psi_1,\psi_{-1},\psi_2,\psi_{-2}, \ldots, \psi_k, \psi_{-k} : \Z \to \{0,1\}$ by $$\psi_j(x) \doteq \ind(\phi_j(x) \leq v_j) ~~~\text{ and }~~~ \psi_{-j}(x) \doteq \ind(\phi_j(x) \geq v_j)$$ and let $$f(\phi,v) \doteq \argmin_{\psi \in \{\psi_1,\psi_{-1},\psi_2,\psi_{-2}, \ldots, \psi_k, \psi_{-k}\}} \D[\psi].$$
By the postprocessing property of differential privacy (Theorem \ref{thm:post}), $f(\interact{A}{M}(s))$ is a $(\varepsilon,\delta)$-differentially private algorithm (relative to its input $s \in \Z^m$). Moreover, by our assumption \eqref{eqn:empacc}, $$\forall s \in \Z^m ~~~~~ \prob{\psi \sim f\left(\interact{A}{M}(s)\right)}{s[\psi] \geq \frac 3 8} \geq 1- \beta.$$
However, by Theorem \ref{thm:gen}, $$\prob{S \sim \D^m \atop \psi \sim f(\interact{A}{M}(S))}{S[\psi]-\D[\psi] \leq \frac{1}{8}} \geq \prob{S \sim \D^m \atop \psi \sim f(\interact{A}{M}(S))}{S[\psi]-\D[\psi] \leq \frac{1}{16} + \frac{1}{8} \cdot \MAD(\psi(\D))} \geq 1 - \beta.$$ Thus, by a union bound and the construction of $f$, $$\prob{S \sim \D^m \atop (\phi_{[k]},v_{[k]}) \sim \interact{A}{M}(S)}{\forall j \in [k] ~~~ v_j \in \iqr{\phi_j(\D)}{\frac 1 4 , \frac 3 4}} = \prob{S \sim \D^m \atop \psi \sim f(\interact{A}{M}(S))}{\D[\psi] \geq \frac 1 4} \geq 1- 2 \beta.$$
\end{proof}

Combining generalization (Theorem \ref{thm:gen-adapt}) with our approximate median algorithm (Theorem \ref{thm:EM-med}) and composition (Theorem \ref{thm:composition}) yields our main result, Theorem \ref{thm:intro-iqr}. We prove a somewhat more general statement that allows using different range $T_j$ for every query $\phi_j$. (The same generalization applies to all our other results, but we do not state it for brevity).
\begin{thm}
\label{thm:adaptive-iqr}
For any $\beta \in (0,1)$, $t,k,r \in \mathbb{N}$ and $\Z=\X^t$, and with $$n\geq n_0=O\left(t \sqrt{k\log(1/\beta)} \cdot \log(k r/\beta)\right)$$ there exists an interactive algorithm $M$ that takes as input a dataset $s\in \X^n$ and provides answers $v_1, \ldots, v_k \in \R$ to adaptively-chosen queries $(T_1,\phi_1), \ldots, (T_k,\phi_k)$, where for all $j\in [k]$, $|T_j| \leq r$ and $\phi_j:\X^t\rar T_j$ with the following accuracy guarantee. For all interactive algorithms $A$ and distributions $\cP$ on $\X$, $$\prob{S \sim \cP^n \atop \left(T_{[k]},\phi_{[k]},v_{[k]}\right) \sim \interact{A}{M}(S)}{\forall j \in [k] ~~~ v_j \in \iqr{\phi_j(\cP^t)}{\frac 1 4 ,\frac 3 4}} \geq 1-\beta. $$
\end{thm}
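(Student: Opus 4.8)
The plan is to realize the subsample-and-aggregate strategy sketched in the introduction and then invoke Theorem~\ref{thm:gen-adapt}. The algorithm $M$ splits its input $s\in\X^n$ into $m=n/t$ disjoint blocks $s_1,\dots,s_m\in\Z=\X^t$; on receiving a query $(T_j,\phi_j)$ it forms the vector $\phi_j(s)=(\phi_j(s_1),\dots,\phi_j(s_m))\in T_j^m$ and returns an $\alpha$-approximate median of this vector using the exponential-mechanism algorithm of Theorem~\ref{thm:EM-med}, run with privacy parameter $\varepsilon_0$, approximation parameter $\alpha=1/4$, and per-query failure probability $\beta/(2k)$. Fixing $\alpha=1/4$ is deliberate: it gives $\amed{1/4}{\phi_j(s)}=\iqr{\phi_j(s)}{3/8,5/8}$, which is exactly the empirical-accuracy target in hypothesis~\eqref{eqn:empacc} of Theorem~\ref{thm:gen-adapt}.

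Next comes the privacy accounting. For each fixed query, a single invocation is $(\varepsilon_0,0)$-differentially private as a function of $s\in\Z^m$ by Theorem~\ref{thm:EM-med}, so by interactive composition (Theorem~\ref{thm:interactive-composition}) the whole interaction is $\bigl(\frac12 k\varepsilon_0^2+\varepsilon_0\sqrt{2k\ln(1/\delta')},\ \delta'\bigr)$-differentially private with respect to changing one of the $m$ blocks, for any $\delta'\in(0,1)$. Taking $\delta'=\beta/512$ and $\varepsilon_0=c/\sqrt{k\log(1/\beta)}$ for a small absolute constant $c$ makes this at most $(1/20,\beta/512)$-differentially private, which is what Theorem~\ref{thm:gen-adapt} requires once the whole construction is run with $\beta$ replaced by $\beta/2$. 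With this $\varepsilon_0$, the sample-size hypothesis $m\geq 4\ln(2|T_j|k/\beta)/(\varepsilon_0\alpha)$ of Theorem~\ref{thm:EM-med} (using $|T_j|\leq r$ and $\alpha=1/4$) becomes $m=O(\sqrt{k\log(1/\beta)}\,\log(rk/\beta))$, so $n=mt$ matches the stated $n_0$; one also checks that this $m$ dominates the milder condition $m\geq 2560\ln(2k/\beta)$ needed by Theorem~\ref{thm:gen-adapt}.

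Given these two facts the accuracy argument is immediate. A union bound over the $k$ queries shows that, for every fixed $s\in\Z^m$ and every analyst $A$, with probability at least $1-\beta/2$ we have $v_j\in\amed{1/4}{\phi_j(s)}=\iqr{\phi_j(s)}{3/8,5/8}$ for all $j\in[k]$ simultaneously --- exactly hypothesis~\eqref{eqn:empacc}. Theorem~\ref{thm:gen-adapt} then upgrades this to the population statement $v_j\in\iqr{\phi_j(\D)}{1/4,3/4}$ for all $j$ with probability at least $1-\beta$, where $\D=\cP^t$ and $S\sim\D^m=\cP^n$. Finally, since each approximate-median step runs in time $O(m\log|T_j|)$ (plus the cost of evaluating $\phi_j$) and the analyst is a black box, $M$ is efficient.

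The only genuine obstacle I anticipate is the constant-chasing in the middle paragraph: one must verify that a single scalar $\varepsilon_0$ can simultaneously drive the $k$-fold composition below the fixed threshold $(1/20,\beta/512)$ that Theorem~\ref{thm:gen-adapt} insists on \emph{and} keep the exponential-mechanism sample requirement at the advertised $O(t\sqrt{k\log(1/\beta)}\log(rk/\beta))$. The two-term shape $\frac12 k\varepsilon_0^2+\varepsilon_0\sqrt{2k\ln(1/\delta')}$ of the composition bound is precisely what forces the $\sqrt{k\log(1/\beta)}$ scaling, and it is worth checking carefully that the $\log(1/\beta)$ inside the square root does not secretly inflate the final sample bound. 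Everything else is a direct appeal to the already-established Theorems~\ref{thm:EM-med}, \ref{thm:interactive-composition}, and~\ref{thm:gen-adapt}.
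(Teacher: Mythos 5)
Your proposal is correct and follows essentially the same route as the paper's proof: the same subsample-and-split algorithm, the exponential-mechanism $1/4$-approximate median of Theorem~\ref{thm:EM-med} per query, interactive composition to reach the $(1/20,\cdot)$-differential-privacy threshold, and Theorem~\ref{thm:gen-adapt} to transfer the empirical $(3/8,5/8)$-quantile guarantee to the population $(1/4,3/4)$-quantile interval. The only cosmetic difference is that you fix $\varepsilon_0$ first and solve for $m$, while the paper fixes $m$ and sets $\tilde\varepsilon=16\ln(kr/\beta)/m$; the resulting parameter choice and sample bound are the same.
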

\begin{proof}
The algorithm $M$ promised by Theorem \ref{thm:adaptive-iqr} 
is described in Figure \ref{fig:median-strong}.
\begin{figure}[h!]
\begin{framed}
\begin{algorithmic}
\INDSTATE[0]{Input $S \in \X^{mt}$.}
\INDSTATE[0]{Partition $S$ into $S_1, \ldots, S_m \in \X^t$.}
\INDSTATE[0]{For $j=1,2,\ldots,k$:}
\INDSTATE[1]{Receive a set $T_j$ and a query $\phi_j : \X^t \to T_j$.}
\INDSTATE[1]{Run the $(\tilde\varepsilon,0)$-differentially private $1/4$-approximate median algorithm $\tilde M$ from Thm.~\ref{thm:EM-med} for $T_j$ and with inputs $(S_1, \ldots, S_m)$ and $\phi_j$ to obtain output $v_j \in T_j$.}
\INDSTATE[1]{Return answer $v_j$.}
\end{algorithmic}
\end{framed}
\vspace{-6mm}
\caption{Algorithm for answering adaptive queries.}\label{fig:median-strong}
\end{figure}

Let $\Z \doteq \X^t$ and assume that for some $r$ fixed in advance, $r \geq \max_{j\in [k]}|T_j|$. Theorem \ref{thm:EM-med} says that if $m \geq 4\ln(kr/\beta)/(\alpha \tilde\varepsilon)$, then each execution of the median algorithm is $(\tilde\veps,0)$-differentially private and outputs an $\alpha$-approximate median with probability at least $1-\beta/k$. Here $\alpha=1/4$, so this rearranges to $\tilde\varepsilon=16\ln(kr/\beta)/m$. This implies that for all interactive algorithms $A$ and every $s\in \Z^m$,
 \begin{equation}\prob{(T_{[k]},\phi_{[k]},v_{[k]}) \sim \interact{A}{M}(s)}{\forall j \in [k] ~~~ v_j \in \iqr{\phi_j(s)}{\frac 3 8, \frac 5 8}} \geq 1 - \beta .\label{eqn:emp-pf}\end{equation}
Interactive composition (Theorem \ref{thm:interactive-composition}) implies that $M$ is $(\varepsilon,\delta)$-differentially private for any $\delta \in (0,1)$ and \begin{equation}\varepsilon = \frac{k}{2} \left( \frac{16\ln(kr/\beta)}{m} \right)^2 + \frac{16\ln(kr/\beta)}{m} \sqrt{2k\ln(1/\delta)}.\label{eqn:comp-dp}\end{equation}

By Theorem \ref{thm:gen-adapt}, if, in addition to \eqref{eqn:emp-pf}, we have $\varepsilon \leq 1/20$ for $\delta=\beta/256$ in \eqref{eqn:comp-dp} and $m \geq 2560\ln(2k/\beta)$, then, for all distributions $\cP$, $\D \doteq \cP^t$ and all interactive algorithms $A$, $$\prob{S \sim \D^m \atop \left(T_{[k]},\phi_{[k]},v_{[k]}\right) \sim \interact{A}{M}(S)}{\forall j \in [k] ~~~ v_j \in \iqr{\phi_j(\D)}{\frac 1 4, \frac 3 4}} \geq 1 - 2\beta,$$
which is our desired conclusion.

It only remains to find the appropriate bound on the parameter $m$. We need $m \geq 2560\ln(2k/\beta)$ and $$\frac{1}{20} \geq \varepsilon = \frac{k}{2} \left( \frac{16\ln(kr/\beta)}{m} \right)^2 + \frac{16\ln(kr/\beta)}{m} \sqrt{2k\ln(256/\beta)}.$$
Setting $m = 640 \sqrt{\max\{k,16\} \cdot \ln(256/\beta)} \cdot \ln(kr/\beta)$ achieves this.
\end{proof}

We now state a simple corollary of Theorem \ref{thm:adaptive-iqr} that converts the $(1/4,3/4)$-quantile interval guarantees to explicit additive error guarantees. The error will be measured in terms of the mean absolute deviation of the query $\phi$ on inputs sampled from $\cP^t$ (eq.~\eqref{eq:def-mad}).
For normalization purposes we will also assume that queries are scaled by the analyst is such a way that both $\cP^t[\phi] \in [-1,1]$ and $\MAD(\phi(\cP^t))\leq 1$. Note that this assumption is implied by $\phi$ having range $[-1,1]$ and, in general, allows $\phi$ to have an infinite range.
\begin{cor}\label{cor:mad-range}
For $t\in\mathbb{N}$ and a distribution $\cP$ over $\X$, let $\F_{\cP,t}$ denote the set of functions $\phi:\X^t \rar \R$ such that $\cP^t[\phi] \in [-1,1]$ and $\MAD(\phi(\cP^t)) \leq 1$. For all $\zeta>0$, $\beta>0$, $k\in\mathbb{N}$, and $n\geq n_0=O\left(t \sqrt{k\log(1/\beta)} \cdot \log(k/(\zeta\beta))\right)$, there exists an efficient algorithm $M$ which takes a dataset $s\in \X^n$ as an input and provides answers $v_1, \ldots, v_k \in \R$ to an adaptively-chosen sequence of queries $\phi_1, \ldots, \phi_k: \X^t \rar \R$ satisfying: for all interactive algorithms $A$ and distributions $\cP$ over $\X$, $$\prob{S \sim \cP^n \atop (\phi_{[k]},v_{[k]})\sim\interact{A}{M}(S)}{\forall j \in [k]~ \mbox{s.t.}~ \phi_j \in \F_{\cP,t}:~~~ \left|v_j-\cP^t[\phi_j]\right| \leq 4 \cdot (\phi_j(\cP^t)) + \zeta} \geq 1-\beta.$$
\end{cor}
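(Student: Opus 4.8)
The plan is to derive Corollary~\ref{cor:mad-range} as an almost immediate consequence of Theorem~\ref{thm:adaptive-iqr}, with two small bridging steps: a discretization step (to produce a finite range $T$) and a Chebyshev-type inequality relating the interquartile interval to the mean absolute deviation. First I would fix the granularity: for each query $\phi_j$ define $T_j$ to be the discretization of the interval $[-3,3]$ (or more conservatively $[-1-c,1+c]$ for a suitable constant $c$) into points spaced $\zeta$ apart, so that $|T_j| = r = O(1/\zeta)$, and let $\tilde\phi_j : \X^t \to T_j$ be the query that computes $\phi_j$, truncates the output to $[-3,3]$, and rounds to the nearest element of $T_j$. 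The key observation is that when $\phi_j \in \F_{\cP,t}$ — i.e. $\cP^t[\phi_j]\in[-1,1]$ and $\MAD(\phi_j(\cP^t))\le 1$ — the entire interquartile interval $\iqr{\phi_j(\cP^t)}{1/4,3/4}$ is contained in $[-3,3]$ (this follows from the Chebyshev-via-MAD bound below, since the interquartile interval lies within $2\MAD \le 2$ of the mean, which itself lies in $[-1,1]$), so the truncation does not move the interquartile interval, and the rounding moves it by at most $\zeta$ on each side.

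Next I would invoke Theorem~\ref{thm:adaptive-iqr} with this choice of ranges: with $n \ge n_0 = O(t\sqrt{k\log(1/\beta)}\log(kr/\beta)) = O(t\sqrt{k\log(1/\beta)}\log(k/(\zeta\beta)))$ (substituting $r = O(1/\zeta)$), there is an efficient interactive $M$ answering the (adaptively transformed) queries $(T_j,\tilde\phi_j)$ with, for every $A$ and $\cP$,
\[
\prob{S\sim\cP^n \atop (\cdot,v_{[k]})\sim\interact{A}{M}(S)}{\forall j\in[k]~~ v_j \in \iqr{\tilde\phi_j(\cP^t)}{\tfrac14,\tfrac34}} \ge 1-\beta.
\]
It then remains to translate $v_j \in \iqr{\tilde\phi_j(\cP^t)}{1/4,3/4}$ into the claimed additive bound on $|v_j - \cP^t[\phi_j]|$ for those $j$ with $\phi_j \in \F_{\cP,t}$. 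Here I would first argue that $\iqr{\tilde\phi_j(\cP^t)}{1/4,3/4} \subseteq \iqr{\phi_j(\cP^t)}{1/4,3/4} + [-\zeta,\zeta]$ (rounding shifts the CDF by at most one grid step), and then apply the MAD version of Chebyshev's inequality: for any distribution $\D$ on $\R$ with finite mean, $\iqr{\D}{1/4,3/4} \subseteq [\,\E_{Y\sim\D}[Y] - 2\MAD(\D),\ \E_{Y\sim\D}[Y] + 2\MAD(\D)\,]$. This last fact follows from Markov's inequality: $\pr_{Y\sim\D}[|Y-\E[Y]| \ge 2\MAD(\D)] \le 1/2$, so any point at distance more than $2\MAD(\D)$ from the mean has CDF either $\ge 3/4$ or $\le 1/4$ on the appropriate side and hence lies outside the interquartile interval. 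Combining, $|v_j - \cP^t[\phi_j]| \le 2\MAD(\phi_j(\cP^t)) + \zeta$, which is within the stated bound $4\MAD(\phi_j(\cP^t)) + \zeta$ (the slack absorbs the constant and any looseness; one can equally state the sharper constant $2$).

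The main obstacle — such as it is — is bookkeeping rather than a genuine difficulty: one must be careful that the transformation $\phi_j \mapsto \tilde\phi_j$ is applied \emph{by the algorithm} (so the analyst still asks arbitrary $\phi_j : \X^t \to \R$ and chooses $T_j$ implicitly via the fixed grid), that truncating to $[-3,3]$ is genuinely harmless precisely under the hypothesis $\phi_j \in \F_{\cP,t}$ (and that for queries \emph{not} in $\F_{\cP,t}$ we simply make no promise, matching the ``s.t.~$\phi_j\in\F_{\cP,t}$'' qualifier in the statement), and that the rounding error and the interquartile-to-MAD conversion compose additively in the right direction. One subtlety worth stating explicitly is that $\iqr{\cdot}{}$ is defined for a distribution, and truncation-then-rounding changes the distribution; the claim to verify is the containment of quantile intervals, which is monotone under the relevant perturbations. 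None of this requires more than a few lines, so the proof is genuinely a corollary; the only real content was already done in Theorem~\ref{thm:adaptive-iqr} and the remark following Theorem~\ref{thm:intro-iqr}.
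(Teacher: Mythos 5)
Your proposal is correct and follows essentially the same route as the paper's proof: truncate and discretize the range of each query to a grid of size $O(1/\zeta)$, invoke Theorem~\ref{thm:adaptive-iqr}, observe that truncation leaves the interquartile interval unchanged while rounding shifts it by at most $\zeta$, and finally convert the quantile-interval guarantee into an additive bound via a Markov-type inequality for the mean absolute deviation. The one imprecise step is your justification of the containment $\iqr{\D}{1/4,3/4}\subseteq[\E_{Y\sim\D}[Y]-2\MAD(\D),\,\E_{Y\sim\D}[Y]+2\MAD(\D)]$: the two-sided bound $\pr_{Y\sim\D}[|Y-\E[Y]|\geq 2\MAD(\D)]\leq 1/2$ only yields one-sided tails of $1/2$, not $1/4$, so it does not by itself place such points outside the interquartile interval; the constant $2$ is nevertheless correct, but it requires the identity $\E[\max\{Y-\E[Y],0\}]=\MAD(\D)/2$ together with one-sided Markov applied to the positive part. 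This slip is harmless for the corollary as stated, since the paper's weaker containment within $4\MAD(\D)$ of the mean---obtained directly from two-sided Markov at threshold $4\MAD(\D)$, giving tail mass at most $1/4$, with truncation to $[-5,5]$ rather than $[-3,3]$---already gives the claimed bound $4\cdot\MAD(\phi_j(\cP^t))+\zeta$.
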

\begin{proof}
We first observe that by Markov's inequality, $$\pr_{Z\sim \cP^t}\left[|\phi(Z)-\cP^t[\phi]| \geq 4\cdot \MAD(\phi(\cP^t))\right] \leq 1/4.$$
Therefore
\equ{\iqr{\phi(\cP^t)}{\frac 1 4, \frac 3 4} \subseteq \left[\cP^t[\phi]- 4 \cdot \MAD(\phi(\cP^t)), \cP^t[\phi]+ 4 \cdot \MAD(\phi(\cP^t))\right] .\label{eq:iq2mad}}
Hence for all $j\in [k]$ such that $\phi_j \in \F_{\cP,t}$, we have that  $\iqr{\phi_j(\cP^t)}{1/4, 3/4} \subseteq [-5,5]$.
Now we define $T$ to be the interval $[-5,5]$ discretized with step $\zeta$, or $T\doteq \{r \cdot \zeta : r \in \mathbb{Z}\} \bigcap [-5,5]$. To answer a query $\phi_j$ we define $\phi'_j:\X^t \rar T$ as $\phi'_j(z) \doteq \argmin_{v \in T} |v-\phi_j(z)|$ and then use the algorithm from Theorem \ref{thm:adaptive-iqr} to answer the query $\phi'_j$.
The projection of the values of $\phi$ to $T$ simultaneously truncates the range to $[-5,5]$ and discretizes it. The $(1/4,3/4)$-quantile interval of $\phi_j(\cP^t)$ is inside the interval $[-5,5]$ and therefore is not affected by the truncation step. The discretization can affect this interval by at most $\zeta$. Combining this with \eqref{eq:iq2mad} we obtain that if $\phi_j \in \F_{\cP,t}$ then
\equn{\iqr{\phi'_j(\cP^t)}{\frac 1 4, \frac 3 4} \subseteq \left[\cP^t[\phi_j]- 4 \cdot \MAD(\phi_j(\cP^t))-\zeta, \cP^t[\phi_j]+ 4 \cdot \MAD(\phi_j(\cP^t)) +\zeta\right]. }
Therefore the value $v_j$ returned by the algorithm from Theorem \ref{thm:adaptive-iqr} to query $\phi'_j$ satisfies: $$|v_j - \cP^t[\phi_j]| \leq 4 \cdot \MAD(\phi_j(\cP^t)) +\zeta .$$
Now to obtain the claimed bound on the sample complexity we observe that $|T| \leq 10/\zeta$.
\end{proof}
\begin{remark}
Note that mean absolute deviation of $\phi$ is upper-bounded by the standard deviation of $\phi$.  Therefore Corollary \ref{cor:mad-range} also holds with  $\MAD(\phi_j(\cP^t))$ replaced by $\sdv(\phi_j(\cP^t))$ both in the definition of $\F_{t,\cP}$ and the accuracy bound (with the constant factor 4 being replaced by 2 since Chebyshev's inequality can be used instead of Markov's). The obtained statement generalizes Theorem \ref{thm:intro-var} we stated in the introduction.
\end{remark}
The quantile-based guarantees of our other algorithms can be converted to additive error guarantees in an analogous way.

We remark that somewhat sharper (asymptotic) bounds can be obtained by using the approximate median algorithm based on linear queries (Lemma \ref{lem:median2sq}) together with the algorithm for answering linear queries by \citet{SteinkeU16}. Specifically, this algorithm can solve the problem given $n=O\left(t \sqrt{k  \cdot \log (1/\zeta) \cdot \log(1/\beta) \cdot \log(\log(k \log (1/\zeta))/\beta)}\right)$ samples. 

\subsection{Simple Generalization}
We now describe a simple and seemingly weak generalization result that shows the output of a differentially private algorithm cannot ``overfit'' its input dataset. Namely, if an $(\veps,\delta)$ differentially private algorithm outputs a function $\phi:\Z \to \R$ on a dataset $s\in \Z^m$ sampled from $\D^m$, then the value of $\phi$ on any element of the dataset is within the $(\rho,1-\rho)$-quantile interval of $\phi(\D)$ with probability at least $1-(2e^\veps \rho + \delta)$. To obtain meaningful guarantees about the whole dataset from such  generalization result, $\rho$ must be relatively small (much smaller than the desired $1/4$). The good news is that for an estimator that is well-concentrated around its mean, even values within $(\rho,1-\rho)$-quantile interval for small $\rho$ are close to the mean.  Note that, in principle, any estimator can be amplified by sampling and taking a median 
before being used in this analysis and hence we can obtain generalization guarantees from such simple analysis even in the general case (although the algorithm in this case would need to use two median steps).

\begin{thm}\label{thm:gen-simp}
Let $M : \Z^m \to \mathcal{F}_{[0,1]}$ be a $(\varepsilon,\delta)$-differentially private algorithm with $\mathcal{F}_{[0,1]}$ being the set of functions $\phi : \Z \to [0,1]$. Let $\D$ be a distribution on $\Z$. Then for all $i \in [m]$, $$\prob{S \sim \D^m \atop \phi \sim M(S)}{\phi(S_i) \not\in \iqr{\phi(\D)}{\rho,1-\rho}} \leq 2\rho e^\varepsilon + \delta$$
\end{thm}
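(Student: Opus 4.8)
The plan is to isolate two facts and combine them with a single invocation of differential privacy. The first is a distribution-free ``fresh-sample'' bound: if $\phi$ is evaluated at a point drawn from $\D$ \emph{independently} of $\phi$, then $\phi$ of that point falls outside $\iqr{\phi(\D)}{\rho,1-\rho}$ with probability at most $2\rho$, with no dependence on $\varepsilon$ or $\delta$. The second is a swap step: replacing $S_i$ by a fresh independent sample changes the input of $M$ in only one coordinate, so differential privacy lets us pass from the ``fresh'' situation to the ``data'' situation at the cost of one factor $e^\varepsilon$ and one additive $\delta$. Together these give $2\rho e^\varepsilon + \delta$.

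\emph{Step 1.} I would first prove that for every distribution $\mathcal{D}'$ on $\mathbb{R}$, $\prob{Y\sim\mathcal{D}'}{Y\notin\iqr{\mathcal{D}'}{\rho,1-\rho}}\le 2\rho$. Unpacking the definition, $Y\notin\iqr{\mathcal{D}'}{\rho,1-\rho}$ means $\prob{Y'\sim\mathcal{D}'}{Y'\le Y}\le\rho$ or $\prob{Y'\sim\mathcal{D}'}{Y'<Y}\ge 1-\rho$. The set of $y$ satisfying the first inequality is downward closed, hence an interval $(-\infty,b]$ or $(-\infty,b)$; in the closed case its $\mathcal{D}'$-mass is $\prob{Y'\sim\mathcal{D}'}{Y'\le b}\le\rho$ by the definition of $b$, and in the open case it equals $\lim_{y\uparrow b}\prob{Y'\sim\mathcal{D}'}{Y'\le y}\le\rho$; either way that event has probability at most $\rho$. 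A symmetric argument (using upward-closed sets and $\prob{Y'\sim\mathcal{D}'}{Y'\ge y}$) handles the second inequality, and a union bound gives $2\rho$. Taking $\mathcal{D}'=\phi(\D)$ for a fixed $\phi$, this says that for any $\phi$ and any $Z\sim\D$ independent of $\phi$, $\prob{}{\phi(Z)\notin\iqr{\phi(\D)}{\rho,1-\rho}}\le 2\rho$.

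\emph{Step 2.} Fix $i\in[m]$, let $S\sim\D^m$, draw $Z\sim\D$ independently, let $\tilde S$ be the dataset $S$ with its $i$th coordinate replaced by $Z$, and let $\tilde\phi\sim M(\tilde S)$. Condition on $(S,Z)=(s,z)$; then $s$ and $\tilde s$ differ in exactly one coordinate, and once $s_i$ is fixed the set $E_{s_i}\doteq\{\psi:\psi(s_i)\notin\iqr{\psi(\D)}{\rho,1-\rho}\}$ is a fixed event in the output space of $M$. Differential privacy of $M$ (Definition \ref{def:DP}), applied to this one event, gives $\prob{}{M(s)\in E_{s_i}}\le e^\varepsilon\,\prob{}{M(\tilde s)\in E_{s_i}}+\delta$. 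Now take the expectation over $(s,z)\sim\D^m\times\D$. The left-hand side does not depend on $z$ and becomes exactly $$\prob{S\sim\D^m \atop \phi\sim M(S)}{\phi(S_i)\notin\iqr{\phi(\D)}{\rho,1-\rho}},$$ the quantity to be bounded. The right-hand side becomes $e^\varepsilon\,\prob{}{\tilde\phi(S_i)\notin\iqr{\tilde\phi(\D)}{\rho,1-\rho}}+\delta$; and since $\tilde\phi$ is a randomized function of $\tilde S$ alone, it is independent of $S_i\sim\D$, so Step 1 bounds this probability by $2\rho$. This yields the claimed $2\rho e^\varepsilon+\delta$.

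\emph{Main obstacle.} The one place to be careful is the additive error: it is tempting to split ``$\phi(S_i)$ lies outside the interval'' into its ``too-small'' and ``too-large'' halves and apply privacy to each half, but that would cost $2\delta$ rather than $\delta$. The remedy, reflected above, is to apply the privacy inequality only once, to the combined event $E_{s_i}$, and to perform the union bound purely inside Step 1, where it is free. The remaining points---that $E_{s_i}$ is a legitimate event for each fixed $s_i$, and that the single-coordinate-change form of differential privacy is exactly what is needed here (a coordinate being an element of $\Z$)---are routine.
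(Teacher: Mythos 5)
Your proof is correct and follows essentially the same route as the paper's: a single application of the differential privacy inequality to swap $S_i$ for an independent fresh sample, combined with the observation that a fresh draw lands outside the $(\rho,1-\rho)$-quantile interval with probability at most $2\rho$. Your Step 1 just spells out the quantile bound that the paper treats as immediate from the definition, and your conditioning on $(s,z)$ is an equivalent phrasing of the paper's exchangeability of $(S,Z)$ and $((S_{-i},Z),Z_i)$.
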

\begin{proof}
By differential privacy, for all $i \in [m]$, \begin{align*}\prob{S \sim \D^m \atop \phi \sim M(S)}{\phi(S_i) \notin \iqr{\phi(\D)}{\rho,1-\rho}} \leq& e^\varepsilon \prob{(S,Z) \sim \D^m \times \D \atop \phi \sim M(S_{-i},Z)}{\phi(S_i) \notin \iqr{\phi(\D)}{\rho,1-\rho}} + \delta \\ =& e^\varepsilon \prob{(S,Z) \sim \D^m \times \D \atop \phi \sim M(S)}{\phi(Z) \notin \iqr{\phi(\D)}{\rho,1-\rho}} + \delta \\\leq& e^\varepsilon 2\rho + \delta,\end{align*}
where the equalities follow from the fact that the pairs $(S,Z)$ and $((S_{-i},Z),Z_i)$ are identically distributed and the definition of the $(\rho,1-\rho)$-quantile interval.
\end{proof}

Now for $\rho$ and $\delta$ that are sufficiently small, Theorem \ref{thm:gen-simp} ensures that with probability at least $1-\beta$, for all $i\in [m]$, $\phi(s_i) \in \iqr{\phi(\D)}{\rho,1-\rho}$. This means that to get a value in $\iqr{\phi(\D)}{\rho,1-\rho}$, we can use an algorithm that outputs a value that is in between the smallest and the largest values of $\phi$ on the elements of a dataset $s$.  Such value is referred to as an interior point of $\phi(s)$ (and is equivalent to a 1-approximate median).


This argument gives the following theorem.
\begin{thm}
\label{thm:adaptive-simple-general}
For any $\beta \in (0,1)$, $t,k \in \mathbb{N}$, a finite set $T \subset \R$ and $\Z=\X^t$, and with $$n\geq n_0=O\left(t \cdot \sqrt{k} \cdot \log(|T|/\beta)\cdot \log^{1/2}(k\log(|T|)/\beta)\right)$$ there exists an interactive algorithm $M$ that takes as input a dataset $s\in \X^n$ and provides answers $v_1, \ldots, v_k \in T$ to adaptively-chosen queries $\phi_1, \ldots, \phi_k : \X^t \to T$ such that, for all interactive algorithms $A$ and distributions $\cP$ on $\X$, $$\prob{S \sim \cP^n \atop (\phi_{[k]},v_{[k]}) \sim \interact{A}{M}(S)}{\forall j \in [k] ~~~ v_j \in \iqr{\phi_j(\cP^t)}{\rho , 1-\rho}} \geq 1-\beta, $$
where $\rho = \beta \cdot t/(4kn)=\tilde{\Omega}(\beta /(k^{3/2} \cdot \log|T|)).$
\end{thm}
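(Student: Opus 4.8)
The plan is to carry out exactly the ``subsample-and-aggregate via an interior point'' strategy outlined just before the statement. First I would describe the algorithm $M$: given $s \in \X^n$, partition it into $m = n/t$ disjoint blocks $s_1, \dots, s_m \in \X^t$; to answer the $j$-th query $\phi_j : \X^t \to T$, run the $(\tilde\varepsilon,0)$-differentially private $1$-approximate-median (i.e.\ interior point) algorithm of Theorem~\ref{thm:EM-med} on the transformed dataset $(\phi_j(s_1),\dots,\phi_j(s_m)) \in T^m$, and return its output $v_j$. Since the $(\rho,1-\rho)$-quantile interval is invariant under affine rescaling of the range, I may assume $T \subseteq [0,1]$ (alternatively, one checks that the proof of Theorem~\ref{thm:gen-simp} never uses the bounded range). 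Privacy is immediate: by Theorem~\ref{thm:EM-med} each call is $(\tilde\varepsilon,0)$-DP once $m \geq 4\ln(k|T|/\beta)/\tilde\varepsilon$, so by interactive composition (Theorem~\ref{thm:interactive-composition}) the whole interaction $\interact{A}{M}$ is $(\varepsilon,\delta)$-DP with $\varepsilon = \tfrac{k}{2}\tilde\varepsilon^2 + \tilde\varepsilon\sqrt{2k\ln(1/\delta)}$ for any $\delta \in (0,1)$.

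The accuracy argument has two halves. \emph{Empirical accuracy:} by Theorem~\ref{thm:EM-med} (with per-query failure probability $\beta/(2k)$) and a union bound, with probability at least $1-\beta/2$ every $v_j$ satisfies $\min_{i\in[m]}\phi_j(s_i) \le v_j \le \max_{i\in[m]}\phi_j(s_i)$. \emph{Generalization:} for each fixed pair $(i,j) \in [m]\times[k]$, the map $S \mapsto \phi_j$ --- where $\phi_j$ is the $j$-th query appearing in the transcript $\interact{A}{M}(S)$ --- is a postprocessing (Theorem~\ref{thm:post}) of the $(\varepsilon,\delta)$-DP interaction, hence is itself an $(\varepsilon,\delta)$-DP algorithm outputting a real-valued function on $\Z = \X^t$. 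Applying Theorem~\ref{thm:gen-simp} to it gives $\Pr_{S\sim\cP^n}[\phi_j(S_i) \notin \iqr{\phi_j(\cP^t)}{\rho,1-\rho}] \le 2\rho e^\varepsilon + \delta$, and a union bound over all $mk$ pairs shows that, except with probability $mk(2\rho e^\varepsilon + \delta)$, all the values $\phi_j(S_i)$ lie in their respective quantile intervals. Now $\iqr{\D}{\rho,1-\rho}$ is the intersection of an up-set $\{v:\Pr_{Y\sim\D}[Y\le v]>\rho\}$ and a down-set $\{v:\Pr_{Y\sim\D}[Y<v]<1-\rho\}$ of $\R$, hence an interval; so whenever all of $\phi_j(S_1),\dots,\phi_j(S_m)$ lie in $\iqr{\phi_j(\cP^t)}{\rho,1-\rho}$, so do their minimum and maximum, and therefore so does $v_j$. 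Combining the two halves by a union bound gives the conclusion with total failure probability at most $\beta/2 + mk(2\rho e^\varepsilon + \delta)$.

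It remains to tune parameters. Take $\tilde\varepsilon = 4\ln(k|T|/\beta)/m$ (the smallest value Theorem~\ref{thm:EM-med} permits), $\delta = \Theta(\beta/(mk))$, and $\rho = \beta t/(4kn) = \beta/(4mk)$; then the generalization failure term $mk(2\rho e^\varepsilon + \delta)$ is $O(\beta)$ as soon as $\varepsilon$ is bounded by a small enough absolute constant (so $e^\varepsilon = O(1)$). Imposing $\varepsilon \le c$ through the composition formula above, with $\ln(1/\delta) = O(\ln(mk/\beta))$, forces $m = \Omega\!\big(\sqrt{k}\cdot\ln(k|T|/\beta)\cdot\sqrt{\ln(k\log|T|/\beta)}\big)$, and setting $m$ at this threshold (so $n = mt$) yields the stated bound on $n_0$; a final rescaling of $\beta$ by a constant absorbs the slack in the union bounds. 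The only real subtlety is this last bookkeeping step: the union bound over all $mk$ datapoint/query pairs is precisely what drives $\rho$ down to $\Theta(\beta/(mk)) = \Theta(\beta t/(nk))$, and one must verify that a single choice of $m$ simultaneously makes $\tilde\varepsilon$ large enough for the interior-point algorithm to be accurate while keeping the composed $\varepsilon$ a small constant. No ideas beyond Theorems~\ref{thm:EM-med}, \ref{thm:interactive-composition}, \ref{thm:post}, and \ref{thm:gen-simp} are needed.
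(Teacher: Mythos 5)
Your proposal is correct and follows essentially the same route as the paper's own proof: the same interior-point (1-approximate median) instantiation of Theorem~\ref{thm:EM-med}, interactive composition for privacy, Theorem~\ref{thm:gen-simp} with a union bound over all $mk$ datapoint/query pairs, and the observation that the quantile interval is an interval so it contains anything sandwiched between the empirical min and max. Your parameter choices ($\rho,\delta = \Theta(\beta/(mk))$, $m = \Theta(\sqrt{k\log(k\log|T|/\beta)}\log(k|T|/\beta))$) match the paper's up to constants, and your side remark that the boundedness of the range is never actually used in Theorem~\ref{thm:gen-simp} is a valid (and slightly more careful) observation than the paper makes.
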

\begin{proof}
We use the algorithm given in Figure \ref{fig:median-strong} but with $1$-approximate median, instead of $1/4$. As in the proof of Theorem \ref{thm:intro-iqr} we let $\Z \doteq \X^t$ and $\D \doteq \cP^t$. Theorem \ref{thm:EM-med} says that if $m \geq 4\ln(2k|T|/\beta)/ \tilde\varepsilon$, then each execution of the median algorithm is $(\tilde\veps,0)$-differentially private
for every input query $\phi_j:\Z \to T_j$. This implies that for all interactive algorithms $A$ and every $s\in \Z^m$, \begin{equation}\prob{(\phi_{[k]},v_{[k]}) \sim \interact{A}{M}(s)}{\forall j \in [k] ~~~ v_j \in \iqr{\phi_j(s)}{0, 1}} \geq 1 - \frac{\beta}{2}.\label{eqn:emp-int-point}\end{equation}
The interactive composition (Theorem \ref{thm:interactive-composition}) implies that $M$ is $(\varepsilon,\delta)$-differentially private for any $\delta \in (0,1)$ and $$\varepsilon = \frac{k}{2} \left( \frac{4\ln(2|T|/\beta)}{m} \right)^2 + \frac{4\ln(2|T|/\beta)}{m} \sqrt{2k\ln(1/\delta)}.$$

Now applying Theorem \ref{thm:gen-simp} and a union bound we obtain that
$$\prob{S \sim \D^m \atop (\phi_{[k]},v_{[k]}) \sim \interact{A}{M}(S)}{\exists j \in [k],\ i\in [m] ~~~ \phi_j(S_i) \not\in \iqr{\phi_j(\D)}{\rho , 1-\rho}} \leq  k m(e^\varepsilon 2\rho + \delta) . $$
Combining this with \eqref{eqn:emp-int-point} we obtain that
$$\prob{S \sim \D^m \atop (\phi_{[k]},v_{[k]}) \sim \interact{A}{M}(S)}{\forall j \in [k] ~~~ v_j \in \iqr{\phi_j(\D)}{\rho , 1-\rho}} \geq 1-\frac{\beta}{2}- k m(e^\varepsilon 2\rho + \delta). $$
Setting $m = 8 \log(2|T|/\beta) \sqrt{2k\ln(1/\delta)}$ ensures that $\varepsilon \leq \ln 2$. Hence for $\delta = \beta/(10km)$ and $\rho = \beta/(10km)$ we obtain that $k m(e^\varepsilon 2\rho + \delta) \leq \beta/2$ thus establishing the claim.
\end{proof}

For example, if each $\phi_j$ is $(1/\sqrt{t})$-subgaussian with the mean $\cP^t[\phi_j]\in [-1,1]$ then for every $\alpha > 0$, setting $t = \tilde{O}(\log(k/\beta)/\alpha^2)$ ensures that $\iqr{\phi_j(\cP^t)}{\rho , 1-\rho} \subseteq [\cP^t[\phi_j]-\alpha,\cP^t[\phi_j]+\alpha]$. This implies that the means can be estimated with accuracy $\alpha$ given $\tilde O\left(\sqrt{k} \cdot \log^2(1/\beta)/\alpha^2\right)$ samples. Note that low-sensitivity queries are $(1/\sqrt{t})$-subgaussian and therefore the sample complexity of our algorithm given by this simple analysis is comparable to the best known for this problem.

As pointed out above, this analysis can also be used to deal with general estimators by adding an additional amplification step. Namely, computing the estimator on several independent subsamples and taking (the exact) median. The resulting algorithm would have sample complexity that is identical to that obtained in Theorem \ref{thm:adaptive-iqr} up to an additional logarithmic factor (which can be removed with careful calibration of parameters).

\remove{

We now show how Theorem \ref{thm:gen-simp} can yield an alternative proof of Theorem \ref{thm:intro-iqr}.
\begin{proof} (of Theorem \ref{thm:intro-iqr}).

The algorithm $M$ promised by Theorem \ref{thm:intro-iqr} is described in Figure \ref{fig:median-strong-simp}.

\begin{figure}[h!]
\begin{framed}
\begin{algorithmic}
\INDSTATE[0]{Input $S \in \X^{n \ell t}$.}
\INDSTATE[0]{Partition $S$ into $S_1^1, \ldots, S_1^\ell, \ldots, S_n^1, \ldots, S_n^\ell \in \X^t$. Let $\Z=\X^t$.}
\INDSTATE[0]{For $j=1,2,\ldots,k$:}
\INDSTATE[1]{Receive a query $\phi_j : \X^t \to T$.}
\INDSTATE[1]{Compute $\phi_j(S_1^1), \ldots, \phi_j(S_1^\ell), \ldots, \phi_j(S_n^1), \ldots, \phi_j(S_n^\ell) \in T$.}
\INDSTATE[1]{Let $a_i = \median(\phi_j(S_i^1), \ldots, \phi_j(S_i^\ell))$ for $i \in [n]$.}
\INDSTATE[2]{(Note that this amplification by median step is not necessary if $\phi_j$ is already concentrated and we only need $v_j \in \iqr{\phi_j(\D)}{\rho,1-\rho}$ for $\rho \ll \beta/n$.)}
\INDSTATE[1]{Run the $(\tilde\varepsilon,0)$-differentially private $1/8$-approximate median algorithm $\tilde M : T^n \to T$ from Theorem \ref{thm:EM-med} with input $(a_1, \ldots, a_n)\in T^n$ to obtain output $v_j \in T$.}
\INDSTATE[1]{Return answer $v_j$.}
\end{algorithmic}
\end{framed}
\vspace{-6mm}
\caption{Algorithm for answering adaptive queries.}\label{fig:median-strong-simp}
\end{figure}

By Lemma \ref{lem:amp-med}, $$\prob{Z_1, \ldots, Z_\ell \sim \D}{\median(\phi(Z_1), \ldots, \phi(Z_\ell)) \in \iqr{\D}{\frac 3 8 ,\frac 5 8}} \geq 1 - 2 \cdot e^{- \ell/8} .$$
Thus, by Theorem \ref{thm:gen-simp} and an argument analogous to that of Theorem \ref{thm:gen-adapt}, $$\prob{}{\forall j \in [k] ~~~ v_j \in \iqr{\phi_j(\D)}{\frac 1 4 ,\frac 3 4}} \geq 1- n(4 \cdot e^{- \ell/8 + \varepsilon} + \delta).$$
where $$\varepsilon = \frac{k}{2} \left( \frac{32\log(k|T|/\beta)}{n} \right)^2 + \frac{32\log(k|T|/\beta)}{n} \sqrt{2k\log(1/\delta)}.$$
It only remains to set parameters: We set $\delta = \beta/2n$, $\varepsilon=\ell/16$, and $\ell=16\log(8n/\beta)$ so that $\prob{}{\forall j \in [k] ~~~ v_j \in \iqr{\phi_j(\D)}{\frac 1 4 ,\frac 3 4}} \geq 1- \beta.$ To achieve this value of $\varepsilon\leq\log(8n/\beta)$, we set $n=64\sqrt{k}\log(k|T|/\beta)/\sqrt{\log(2n/\beta)}$. The overall sample complexity is thus $$n\ell t = \frac{64\sqrt{k}\log(k|T|/\beta)}{\sqrt{\log(2n/\beta)}} \cdot 16\log(8n/\beta) \cdot t \leq 1024 \sqrt{k \log(8n/\beta)} \log(k|T|/\beta).$$
\end{proof}

}

\section{Dealing with a Large Number of Queries}\label{sec:extensions}
In this section we briefly cover ways to use our approach when the number of queries that needs to be answered is (relatively) large. Namely, we provide an algorithm for answering verification queries and an algorithm whose complexity scales as $\log k$, rather than $\sqrt k$.

\subsection{Verification Queries}
\label{sec:reusable}

\newcommand{\good}{\mathrm{Y}}
\newcommand{\bad}{\mathrm{N}}
\newcommand{\unsure}{\bot}
Another application of techniques from differential privacy given by \citet{DworkFHPRR14:arxiv} is an algorithm that given a statistical query and a proposed estimate of the expectation of this query, verifies the estimate. This problem requires less data if most proposed answers are correct. Specifically, the number of samples needed by this algorithm is (just) logarithmic in the number of queries $k$ but also scales linearly in $\sqrt{\ell}$, where $\ell$ is the number of queries that fail the verification step. \citet{DworkFHPRR15:arxiv} extended this result to low-sensitivity queries using the results from \citep{BassilyNSSSU16}. In addition, \citet{DworkFHPRR15:arxiv} describe a query verification algorithm that can handle arbitrary queries (not just real-valued) which however has sample complexity with linear dependence on $\ell$. Its analysis is based on a simple description length-based argument.

A natural way to apply such algorithms is the reusable holdout technique of \citet{DworkFHPRR15:science}. In this technique the dataset is split into two disjoint parts: the ``training" set $s_t$ and the holdout set $s_h$. The analyst then uses the training set to answer queries and perform other arbitrary analyses. The holdout set is used solely to check whether the answers that were obtained on the training set generalize. Another application proposed by \citet{DworkFHPRR14:arxiv} is an algorithm referred to as {\tt EffectiveRounds}. This algorithm splits the dataset into several disjoint subsets and at each time uses only one of the subsets to answer queries. An algorithm for verifying answers to queries is  used to switch to a new subset of samples whenever a query fails the verification step (and uses its own subset of samples).

Here we demonstrate, an algorithm for verifying answers to queries about  general estimators.  Formally, our algorithm satisfies the following guarantees.
\begin{thm}\label{thm:sv}
Fix $\rho>\alpha>0$, $\beta>0$, $\ell,t,k \in \mathbb{N}$, and $n \geq n_0= O(t \sqrt{\ell \log(1/\alpha\beta)} \log(k/\beta)\rho/\alpha^2)$.
There exists an interactive algorithm $M$ that takes as input $s \in \X^n$ and provides answers $a_1, \ldots, a_k \in \{\good,\bad,\unsure\}$ to adaptively-chosen queries $(\phi_1,v_1), \ldots, (\phi_k,v_k)$ (where $\phi_j : \X^t \to \R$ and $v_j \in \R$ for all $j \in [k]$) satisfying the following: for all interactive algorithms $A$ and distributions $\cP$ over $\X$, $$\prob{S \sim \cP^n \atop (\phi_{[k]},v_{[k]},a_{[k]}) \sim \interact{A}{M}(S)}{\forall j \in [k] ~~~ \begin{array}{rl} v_j \in \iqr{\phi_j(\cP^t)}{\rho,1-\rho} &\implies a_j \in \{\good,\unsure\} \\ v_j \notin \iqr{\phi_j(\cP^t)}{\rho-\alpha,1-\rho+\alpha} &\implies a_j \in \{\bad,\unsure\} \\ |\{ j' \in [j-1] : a_{j'} = \bad \}|= \ell &\iff a_j=\unsure \end{array}} \geq 1-\beta. $$
\end{thm}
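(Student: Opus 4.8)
The plan is to reduce each verification query over $\cP^t$ to at most two statistical queries over $\Z\doteq\X^t$, and then answer those by running a differentially private sparse-vector (\texttt{AboveThreshold}) procedure on the block partition $s=(s_1,\dots,s_m)\in\Z^m$ with $m=n/t$, re-calibrated using the sharpened generalization bound of Theorem~\ref{thm:gen}. For the reduction: given $(\phi_j,v_j)$, set $\psi_j(z)\doteq\ind(\phi_j(z)\le v_j)$ and $\bar\psi_j(z)\doteq\ind(\phi_j(z)\ge v_j)$ and let $x_j\doteq\min\{\cP^t[\psi_j],\cP^t[\bar\psi_j]\}$. Unwinding the definition of the quantile interval, $v_j\in\iqr{\phi_j(\cP^t)}{\rho,1-\rho}$ exactly when $x_j>\rho$, and $v_j\notin\iqr{\phi_j(\cP^t)}{\rho-\alpha,1-\rho+\alpha}$ exactly when $x_j\le\rho-\alpha$. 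So it suffices to decide whether $x_j$ exceeds $\rho$ (answer $\good$) or is at most $\rho-\alpha$ (answer $\bad$), being free to do either in between, subject to (i) differential privacy with respect to changing a single block of $\Z^m$, and (ii) switching permanently to $\unsure$ once $\ell$ answers $\bad$ have been issued.

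Partitioning $s$, the quantities $\hat p_j\doteq s[\psi_j]$ and $\hat r_j\doteq s[\bar\psi_j]$ are linear queries of sensitivity $1/m$ under a single-block change. The algorithm runs the standard sparse-vector procedure (see \citep{DworkRoth:14}) on the interleaved stream of at most $2k$ threshold queries $s\mapsto(\rho-\tfrac\alpha2)-\hat p_j$ and $s\mapsto(\rho-\tfrac\alpha2)-\hat r_j$: for query $j$ it outputs $\bad$ if either of its two threshold queries is noisily positive, $\good$ otherwise, and halts (outputs $\unsure$) after the $\ell$-th $\bad$. Decomposing this into $\ell$ \texttt{AboveThreshold} instances (each $(\varepsilon_0,0)$-differentially private when the added Laplace noise has scale $\Theta(1/(\varepsilon_0 m))$) and composing by Theorem~\ref{thm:composition} shows the whole interactive algorithm is $(\varepsilon,\delta)$-differentially private over $\Z^m$ with $\varepsilon=\Theta(\sqrt{\ell\log(1/\delta')}\,\varepsilon_0)$ and $\delta=\delta'$; moreover, with probability $1-\beta$, every threshold decision in the stream is correct up to an additive slack $O(\log(k/\beta)/(\varepsilon_0 m))$.

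To pass from the empirical $\hat p_j,\hat r_j$ to $\cP^t[\psi_j],\cP^t[\bar\psi_j]$ we apply Theorem~\ref{thm:gen}, through the monitor/post-processing reduction of Theorem~\ref{thm:gen-adapt} to handle adaptivity (take the worst, in the relevant direction, of the $2k$ indicators $\psi_j,1-\psi_j,\bar\psi_j,1-\bar\psi_j$), with $\gamma=\Theta(\alpha/\rho)$, so $\varepsilon=\tfrac12\ln(1+\gamma)=\Theta(\alpha/\rho)$, and additive term $\Theta(\alpha)$. The key point is a bootstrap: whenever the $\hat p_j$-test fires, $\hat p_j\lesssim\rho$, and since Theorem~\ref{thm:gen} gives $\cP^t[\psi_j]-\hat p_j\le O(\alpha)+\gamma\cdot\MAD(\psi_j(\cP^t))\le O(\alpha)+2\gamma\,\cP^t[\psi_j]$ (using $\MAD\le 2\cdot\mathrm{mean}$ for a $\{0,1\}$-valued query), we get $\cP^t[\psi_j]=O(\rho)$ in this event, hence $\MAD(\psi_j(\cP^t))=O(\rho)$, hence the generalization error is in fact $O(\alpha)+\gamma\cdot O(\rho)=O(\alpha)$; symmetrically for $\bar\psi_j$. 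Choosing all constants appropriately then yields the desired guarantee: if $x_j>\rho$ then $\cP^t[\psi_j],\cP^t[\bar\psi_j]>\rho$, so neither empirical value reaches $\rho-\tfrac\alpha2$ and the answer is $\good$ (or $\unsure$ if already halted); if $x_j\le\rho-\alpha$ then the smaller population value is $\le\rho-\alpha$, its empirical counterpart is below $\rho-\tfrac\alpha2$ beyond the noise, the corresponding test fires, and the answer is $\bad$ (or $\unsure$). Collecting the constraints — SVT accuracy needs $1/(\varepsilon_0 m)=O(\alpha/\log(k/\beta))$; generalization needs $\varepsilon=\Theta(\sqrt{\ell\log(1/\delta')}\,\varepsilon_0)=O(\alpha/\rho)$ and $\delta'=\Theta(\alpha\beta)$ — and solving for $m$ gives $m=O(\sqrt{\ell\log(1/\alpha\beta)}\log(k/\beta)\,\rho/\alpha^2)$, i.e. $n=mt\ge n_0$.

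The main obstacle is precisely this calibration. The only way to obtain the factor $\rho/\alpha^2$ rather than $1/\alpha^2$ in the sample complexity is to run the algorithm at the comparatively generous privacy level $\varepsilon=\Theta(\alpha/\rho)$ and still recover $O(\alpha)$ accuracy from the MAD-refined bound of Theorem~\ref{thm:gen}; this hinges on the bootstrap observation that a threshold test only fires when the population mean — hence the MAD — of the relevant indicator is $O(\rho)$. Getting the thresholds, the split of the $\Theta(\alpha)$ budget between sparse-vector noise and generalization error, and the interaction with the halting rule all to line up at once is the delicate part, whereas the adaptivity (via the monitor argument), the block-wise sensitivity, and the privacy and accuracy of the sparse-vector mechanism are standard.
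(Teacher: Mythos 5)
Your proposal is correct and follows essentially the same route as the paper's proof: the same reduction of each verification query to the two indicator queries $\ind(\phi_j\le v_j)$ and $\ind(\phi_j\ge v_j)$ compared against a threshold near $\rho$, the same use of the sparse-vector mechanism over the block partition of $\Z^m$, and the same calibration trick of running at privacy level $\varepsilon=\Theta(\alpha/\rho)$ and recovering $O(\alpha)$ accuracy from Theorem~\ref{thm:gen} via $\MAD(\psi(\D))\le 2\,\D[\psi]$. The only cosmetic differences are the choice of threshold offset ($\alpha/2$ versus the paper's $\alpha/3$) and your ``bootstrap'' phrasing of the multiplicative-plus-additive error bound, which the paper states directly as $|S[\psi_j]-\D[\psi_j]|\le\frac{\alpha}{4\rho}\D[\psi_j]+\frac{\alpha}{12}$.
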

First, we explain the accuracy promise of this algorithm. Each query is specified by a function $\phi_j : \X^t \to \R$ as well as a ``guess'' $v_j \in \R$. The guess is ``good'' if $v_j \in \iqr{\phi_j(\cP^t)}{\rho,1-\rho}$ and ``bad'' if $v_j \notin \iqr{\phi_j(\cP^t)}{\rho-\alpha,1-\rho+\alpha}$. Essentially, the guarantee is that, if the guess is good, the algorithm answers $\good$ and, if the guess is bad, the algorithm answers $\bad$. However, there are two caveats to this guarantee --- (i) if the guess is neither good nor bad, then the algorithm may output either $\good$ or $\bad$ and, (ii) once the algorithm has given the answer $\bad$ to $\ell$ queries, its failure budget is ``exhausted'' and it only outputs $\unsure$.

Note that this algorithm handles only the verification and does not provide correct answers to queries that failed the verification step. To obtain correct responses one can run an instance of the query-answering algorithm from Theorem \ref{thm:intro-iqr} in parallel with the verification algorithm. The query-answering algorithm would be used only $\ell$ times and hence the dataset size it would require would be independent of $k$ and scale linearly in $\sqrt{\ell}$. (The two algorithms can either be run on disjoint subsets of data or on the same dataset since differential privacy composes.)

Our proof is a simple reduction of the verification step to verification of answers to statistical queries (relative to $\cP^t$). Hence we could directly apply results from \citep{DworkFHPRR15:arxiv} to analyze our algorithm. However, as we mentioned above, for small values of $\rho$ the existing algorithm has suboptimal dependence of sample complexity on $\rho$ and $\alpha$. Specifically, the dependence is $1/\alpha^2$ instead of $\rho/\alpha^2$ which, in particular, is quadratically worse for the typical setting of $\alpha = \Theta(\rho)$.
Note that the dataset size grows with $\ell$ -- the number of times that ``No" is returned to a verification query. Therefore choosing a small $\rho$ is useful for ensuring that ``No" is returned only when overfitting is substantial enough to require correction.

To improve this dependence we use our sharper generalization result (Theorem \ref{thm:gen-adapt}). But first we need to state the stability properties of the sparse vector technique that is the basis of this algorithm \citep{DworkNRRV09} (see \cite[\S 3.6]{DworkRoth:14} for a detailed treatment).

\begin{thm}[\citep{DworkNRRV09}]\label{thm:sv-emp}
For all $\alpha,\beta,\varepsilon,\delta>0$, $m,k,\ell \in \mathbb{N}$ with
$$m \geq m_0 = O(\sqrt{\ell\log(1/\delta)}\log (k/\beta)/\varepsilon\alpha),$$
there exists an interactive $(\varepsilon,\delta)$-differentially private algorithm $\tilde M$ that takes as input $s \in \Z^m$ and provides answers $b_1, \ldots, b_k \in \{\good,\bad,\unsure\}$ to adaptively chosen queries $(\psi_1,u_1), \ldots, (\psi_k,u_k)$ (where $\psi_j : \Z \to \R$ and $u_j \in \R$ for all $j \in [k]$) with the following accuracy guarantee. For all interactive algorithms $A$ and all $s \in \Z^m$, $$\prob{(\psi_{[k]},u_{[k]},b_{[k]}) \sim \interact{A}{\tilde M}(s)}{\forall j \in [k] ~~~ \begin{array}{rl} s[\psi_j] > u_j &\implies b_j \in \{\good,\unsure\} \\ s[\psi_j] \leq u_j - \alpha &\implies b_j \in \{\bad,\unsure\} \\ |\{ j' \in [j-1] : b_{j'} = \bad \}|= \ell &\iff b_j=\unsure \end{array}} \geq 1-\beta. $$
\end{thm}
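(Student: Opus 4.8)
The plan is to realize $\tilde M$ as a restarted version of the one-flag \emph{AboveThreshold} (sparse vector) mechanism, charging its privacy through adaptive composition over the (at most) $\ell$ flags and its accuracy through Laplace tail bounds. First I would reduce each verification query $(\psi_j,u_j)$ to the scalar statistic $q_j(s)\doteq u_j-s[\psi_j]$; treating each $\psi_j$ as $[0,1]$-valued (the only property used, and the case needed for Theorem~\ref{thm:sv}), the map $s\mapsto q_j(s)$ has sensitivity $\Delta=1/m$ on $\Z^m$. I would fix a per-flag budget $\varepsilon_0=\Theta(\varepsilon/\sqrt{\ell\log(1/\delta)})$ and threshold $\theta=\alpha/2$. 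The algorithm maintains a counter $c$ of $\bad$ answers; initially, and after each $\bad$, it draws $\hat\rho\sim\mathrm{Lap}(2\Delta/\varepsilon_0)$; for each incoming query it draws $\nu_j\sim\mathrm{Lap}(4\Delta/\varepsilon_0)$ and outputs $b_j=\bad$ (incrementing $c$ and resampling $\hat\rho$) if $q_j(s)+\nu_j\ge\theta+\hat\rho$, and $b_j=\good$ otherwise; once $c=\ell$ it stops looking at $s$ and outputs $\unsure$ on all remaining queries. The third displayed condition ($b_j=\unsure$ exactly when $\ell$ earlier answers were $\bad$) then holds by construction, since $c$ always equals the number of previous $\bad$ outputs.

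For privacy I would decompose any execution into at most $\ell$ \emph{epochs} — epoch $i$ being a fresh threshold draw followed by a (data-dependent) block of $\good$ answers terminated by a single $\bad$ — and pad with trivial, data-independent epochs so that there are exactly $\ell$. Each epoch is one run of AboveThreshold that halts at its first above-threshold query, hence interactively $(\varepsilon_0,0)$-differentially private (Definition~\ref{def:DP2}) by the standard sparse-vector analysis \citep{DworkNRRV09} (see \cite[\S 3.6]{DworkRoth:14}); the important feature is that this bound does not depend on how many $\good$ answers the epoch contains, so only the $\ell$ flags are charged. Passing the transcript produced so far as the auxiliary input of the next epoch and invoking the adaptive composition theorem (Theorem~\ref{thm:composition}) for these $\ell$ sub-mechanisms then gives that $\tilde M$ is $(\tfrac{\ell}{2}\varepsilon_0^2+\varepsilon_0\sqrt{2\ell\log(1/\delta)},\,\delta)$-differentially private, and the choice of $\varepsilon_0$ above makes this at most $(\varepsilon,\delta)$.

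For accuracy I would fix $s$ and an analyst $A$ and condition on the ``clean'' event $\mathcal{E}$ that $|\hat\rho|\le\alpha/4$ in each of the $\le\ell$ epochs and $|\nu_j|\le\alpha/4$ for each of the $\le k$ queries that are tested. By the Laplace tail bound and a union bound, $\prob{}{\neg\mathcal{E}}\le 2k\,e^{-\alpha m\varepsilon_0/16}$, which is at most $\beta$ once $m\ge 16\ln(2k/\beta)/(\alpha\varepsilon_0)$; substituting $\varepsilon_0$ this is exactly $m\ge m_0=O(\sqrt{\ell\log(1/\delta)}\log(k/\beta)/(\varepsilon\alpha))$. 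On $\mathcal{E}$ the $\bad$-test $q_j(s)+\nu_j\ge\theta+\hat\rho$ rewrites as $s[\psi_j]\le u_j-\alpha/2+(\nu_j-\hat\rho)$ with $|\nu_j-\hat\rho|\le\alpha/2$, so: if $s[\psi_j]>u_j$ then the right-hand side is $\le u_j<s[\psi_j]$, the test fails, and $b_j\in\{\good,\unsure\}$; if $s[\psi_j]\le u_j-\alpha$ then the right-hand side is $\ge u_j-\alpha\ge s[\psi_j]$, the test fires, and $b_j\in\{\bad,\unsure\}$ ($\unsure$ only if the counter is already exhausted). Combined with the deterministic third condition, this is exactly the stated event, which therefore occurs with probability $\ge1-\beta$.

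The step I expect to be the main obstacle is the privacy accounting: one must verify carefully that the restarted mechanism really is a clean adaptive composition of exactly $\ell$ interactive $(\varepsilon_0,0)$-DP epochs — in particular that the $\good$ answers inside an epoch incur no extra privacy loss and that padding a data-dependent number of genuine epochs up to $\ell$ is legitimate under Theorem~\ref{thm:composition} — so that advanced composition, rather than the much weaker $(\ell\varepsilon_0,0)$ bound, can be applied to obtain the $\sqrt{\ell}$ dependence in $m_0$.
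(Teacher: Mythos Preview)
The paper does not prove Theorem~\ref{thm:sv-emp}; it is quoted as a known result (\citep{DworkNRRV09}, with the pointer to \cite[\S 3.6]{DworkRoth:14} for details), so there is no proof in the paper to compare against. Your proposal is precisely the standard sparse-vector argument from those references: restart-on-flag AboveThreshold, charge privacy only for the $\ell$ flags via advanced composition, and obtain accuracy by a union bound on Laplace tails. The parameter choices, the epoch decomposition, and the two-case verification of the $\good$/$\bad$ implications are all correct, and your identification of the delicate point --- that the data-dependent number of genuine epochs must be padded to exactly $\ell$ before invoking Theorem~\ref{thm:composition}, and that the many $\good$ answers within an epoch contribute nothing to the privacy cost --- is exactly the subtlety the standard proof has to handle. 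One small remark: you (rightly) assume $\psi_j$ is $[0,1]$-valued so that $s[\psi_j]$ has sensitivity $1/m$; the theorem as stated allows $\psi_j:\Z\to\R$, which would not have bounded sensitivity, but the only use in the paper (the proof of Theorem~\ref{thm:sv}) applies it to indicator functions, so your restriction is harmless here.
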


\remove{
First we use Theorem \ref{thm:gen} to convert the algorithm of Theorem \ref{thm:sv-emp} which has an empirical accuracy guarantee into a verification algorithm for statistical queries with an accuracy guarantee relative to the distribution. This step is very similar to Theorem \ref{thm:gen-adapt}.

\begin{thm}\label{thm:sv-sq}
For all $\beta,\rho,\alpha \in (0,1)$ with $\alpha<\rho$, $m,k,\ell \in \mathbb{N}$ with
$$m \geq m_0 = O(\sqrt{\ell\log(1/\alpha\beta)}\log (k/\beta)\rho/\alpha^2),$$
there exists an interactive $(\varepsilon,\delta)$-differentially private algorithm $\tilde M$ that takes as input $s \in \Z^m$ and provides answers $b_1, \ldots, b_k \in \{\good,\bad,\unsure\}$ to adaptively chosen queries $\psi_1, \ldots, \psi_k$ (where $\psi_j : \Z \to [0,1]$ for all $j \in [k]$) with the following accuracy guarantee. For all interactive algorithms $A$ and all distributions $\D$ on $\Z$, $$\prob{S \sim \D^m \atop (\psi,b) \sim \interact{A}{\tilde M}(S)}{\forall j \in [k] ~~~ \begin{array}{rl} \D[\psi_j] > \rho &\implies b_j \in \{\good,\unsure\} \\ \D[\psi_j] \leq \rho-\alpha &\implies b_j \in \{\bad,\unsure\} \\ |\{ j' \in [j-1] : b_{j'} = \bad \}|= \ell &\iff b_j=\unsure \end{array}} \geq 1-\beta. $$
\end{thm}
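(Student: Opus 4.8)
The plan is a two-stage reduction: first to the empirical-accuracy sparse vector algorithm of Theorem~\ref{thm:sv-emp}, and then from empirical means $s[\psi_j]$ to population means $\D[\psi_j]$ via the sharp generalization bound of Theorem~\ref{thm:gen}, in the spirit of (but simpler than) the argument behind Theorem~\ref{thm:gen-adapt}. Concretely, I would instantiate $\tilde M$ as the algorithm of Theorem~\ref{thm:sv-emp} fed the queries $(\psi_j,u_j)$ with threshold $u_j := \rho-\alpha/2$ and with its approximation parameter set to $\alpha' := \alpha/8$. This makes $\tilde M$ an interactive $(\varepsilon,\delta)$-differentially private algorithm that outputs $b_j\in\{\good,\unsure\}$ whenever $s[\psi_j]>u_j$, outputs $b_j\in\{\bad,\unsure\}$ whenever $s[\psi_j]\le u_j-\alpha'$, and inherits the ``after $\ell$ $\bad$'s, only $\unsure$'' bookkeeping verbatim, so the third line of the claimed guarantee is immediate and only the other two lines need work.

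For the lifting step I would fix $\gamma := \alpha/(16\rho)$, $\varepsilon := \tfrac12\ln(1+\gamma)$, and $\delta$ as required by Theorem~\ref{thm:gen} (the $\alpha\beta$-type quantity obtained from additive parameter $\alpha/8$ and failure probability $\beta/2$), and consider, for a fixed analyst $A$, the non-interactive algorithm $M'_A$ that runs $\interact{A}{\tilde M}(s)$ and outputs the $2k$ functions $\psi_1,1-\psi_1,\dots,\psi_k,1-\psi_k$ actually queried by $A$. Since $\tilde M$ is interactively $(\varepsilon,\delta)$-differentially private, post-processing (Theorem~\ref{thm:post}) makes $M'_A$ $(\varepsilon,\delta)$-differentially private, and it outputs $2k$ functions into $[0,1]$. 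Applying Theorem~\ref{thm:gen} to $M'_A$ (with additive parameter $\alpha/8$, with $2k$ functions, and with this $\gamma$) gives: with probability $\ge 1-\beta/2$ over $S\sim\D^m$ and the interaction, every $j$ satisfies $s[\psi_j]-\D[\psi_j]\le \alpha/8+\gamma\cdot\MAD(\psi_j(\D))$ and, applying the same conclusion to $1-\psi_j$ and using that $\MAD$ is invariant under $y\mapsto 1-y$, also $\D[\psi_j]-s[\psi_j]\le \alpha/8+\gamma\cdot\MAD(\psi_j(\D))$. Using $\MAD(\psi_j(\D))\le 2\D[\psi_j]$, this yields $|s[\psi_j]-\D[\psi_j]|\le \alpha/8+2\gamma\,\D[\psi_j]$ for all $j$.

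The final step combines this generalization event with the empirical-accuracy event of $\tilde M$ (each of probability $\ge 1-\beta/2$, so both hold with probability $\ge 1-\beta$) by a short case analysis. If $\D[\psi_j]>\rho$ then $s[\psi_j]\ge (1-2\gamma)\D[\psi_j]-\alpha/8 > (1-2\gamma)\rho-\alpha/8 = \rho-\alpha/4 > u_j$ (using $2\gamma\rho=\alpha/8$), so $\tilde M$ outputs $b_j\in\{\good,\unsure\}$. If $\D[\psi_j]\le\rho-\alpha$ then $s[\psi_j]\le (1+2\gamma)\D[\psi_j]+\alpha/8 \le (\rho-\alpha)+2\gamma\rho+\alpha/8 = \rho-3\alpha/4 \le u_j-\alpha'$, so $\tilde M$ outputs $b_j\in\{\bad,\unsure\}$. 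It then remains only to check the sample size: Theorem~\ref{thm:sv-emp} requires $m\ge O(\sqrt{\ell\log(1/\delta)}\log(k/\beta)/\varepsilon\alpha')$, and substituting $1/\varepsilon=O(\rho/\alpha)$, $1/\alpha'=O(1/\alpha)$, and $\log(1/\delta)=O(\log(1/\alpha\beta))$ gives exactly $m_0=O(\sqrt{\ell\log(1/\alpha\beta)}\log(k/\beta)\rho/\alpha^2)$; the side requirement $m\ge O(\rho\log(k/\beta)/\alpha^2)$ coming from Theorem~\ref{thm:gen} is dominated by it.

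The main obstacle --- and the whole reason one gets $\rho/\alpha^2$ rather than the $1/\alpha^2$ of the black-box reduction to \citep{DworkFHPRR15:arxiv} --- is the calibration in the last two steps: one must exploit that in both regimes the relevant generalization error is of order $\gamma\rho$, not $\gamma$, because in the ``bad'' case $\D[\psi_j]<\rho$ directly, while in the ``good'' case the multiplicative slack $(1-2\gamma)\D[\psi_j]$ is only ever compared against the fixed, smaller threshold $\rho-\alpha/2$. This is precisely what allows $\gamma\approx\alpha/\rho$, hence $\varepsilon\approx\alpha/\rho$, in place of $\varepsilon\approx\alpha$; everything else is routine bookkeeping of the $\good/\bad/\unsure$ semantics and of the two failure probabilities.
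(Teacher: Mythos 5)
Your proposal is correct and follows essentially the same route as the paper, which carries out exactly this argument inline in the proof of Theorem \ref{thm:sv}: instantiate the sparse vector algorithm of Theorem \ref{thm:sv-emp} with a fixed threshold slightly below $\rho$, then lift the empirical guarantee to the population via Theorem \ref{thm:gen} applied to the $2k$ functions $\psi_j,1-\psi_j$ with $\gamma=\Theta(\alpha/\rho)$ and $\MAD(\psi_j(\D))\leq 2\D[\psi_j]$, which is precisely what yields $\varepsilon=\Theta(\alpha/\rho)$ and hence the $\rho/\alpha^2$ sample complexity. Your constants ($u_j=\rho-\alpha/2$, $\alpha'=\alpha/8$, $\gamma=\alpha/16\rho$) differ slightly from the paper's ($u_j=\rho-\alpha/3$, $\alpha'=\alpha/3$, $\gamma=\alpha/8\rho$) but the case analysis and parameter accounting are sound.
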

\begin{proof}
Let $\tilde M$ be the algorithm promised by Theorem \ref{thm:sv-emp} instantiated with ???? and with $u_j=???$ for all $j$.

Let $\cQ$ be the set of functions $\phi : \Z \to \R$ so that $\interact{A}{\tilde M} : \Z^m \to \cQ^k \times \{\good,\bad,\unsure\}^k$. We define $f : {\cQ}^k \times \{\good,\bad,\unsure\}^k \to \mathcal{F}_{[0,1]}$ as follows, where $\mathcal{F}_{[0,1]}$ is the set of functions $\psi : \Z \to [0,1]$. Given $(\phi,v) \in \cQ^k \times \R^k$, define $\psi_1,\psi_{-1},\psi_2,\psi_{-2}, \ldots, \psi_k, \psi_{-k} : \Z \to \{0,1\}$ by $$\psi_j(x) = \phi_j(x) ~~~\text{ and }~~~ \psi_{-j}(x) = 1-\phi_j(x)$$ and let $$f(\phi,v) = \argmin_{\psi \in \{\psi_1,\psi_{-1},\psi_2,\psi_{-2}, \ldots, \psi_k, \psi_{-k}\}} \D[\psi].$$
By the postprocessing property of differential privacy (Theorem \ref{thm:post}), $f(\interact{A}{\tilde M}(s))$ is a $(\varepsilon,\delta)$-differentially private algorithm (relative to its input $s \in \Z^m$). Moreover, by our assumption \eqref{eqn:empacc}, $$\forall A ~~ \forall s \in \Z^m ~~~~~ \prob{\psi \sim f\left(\interact{A}{M}(s)\right)}{s[\psi] \geq \frac 3 8} \geq 1- \beta.$$
However, by Theorem \ref{thm:gen}, $$\prob{S \sim \D^m \atop \psi \sim f(\interact{A}{M}(x))}{S[\psi]-\D[\psi] > \frac{1}{8}} \leq \prob{S \sim \D^m \atop \psi \sim f(\interact{A}{M}(x))}{S[\psi]-\D[\psi] > \frac12 \left(e^{2\varepsilon}-1\right) + \frac{4}{\varepsilon m} \ln\left(1+\frac{1}{\beta}\right) + 8\frac{\delta}{\beta} } \leq \beta.$$ Thus, by a union bound and the construction of $f$, $$\prob{S \sim \D^m \atop (\phi_{[k]},v_{[k]}) \sim \interact{A}{M}(S)}{\forall j \in [k] ~~~ v_j \in \iqr{\phi_j(\D)}{\frac 1 4 , \frac 3 4}} = \prob{S \sim \D^m \atop \psi \sim f(\interact{A}{M}(S))}{\D[\psi] \geq \frac 1 4} \geq 1- 2 \beta.$$
\end{proof}
}

\begin{proof}[of Theorem \ref{thm:sv}.]
As before, we let $\Z \doteq \X^t$, $\D \doteq \cP^t$ and view the input dataset as an element of $\Z^m$ sampled from $\D^m$.

\vnote{I think it the proof would be a bit cleaner and easier to follow if we prove it for queries that verify only one side of quantile and then just say that the other side is verified symmetrically (or, equivalently, by verifying it for query $-\phi$).}\tnote{I've tried adding some more explanatory text.} \vnote{I think the approach is pretty clear anyway, I was mostly commenting about removing some of the repetition.}
We use the sparse vector algorithm of Theorem \ref{thm:sv-emp}. We use Theorem \ref{thm:gen} to convert the empirical guarantee into a guarantee relative to the data distribution, as in Theorem \ref{thm:gen-adapt}.
However, we must convert every verification query into two statistical queries to check the two ``ends'' of the quantile interval.  That is, given a verification query $(\phi_j,v_j)$ for which we want to know whether $v_j \in \iqr{\phi_j(\D)}{\rho,1-\rho}$ or $v_j \notin \iqr{\phi_j(\D)}{\rho-\alpha,1-\rho+\alpha}$, we ask two statistical queries $\psi_{2j-1}$ and $\psi_{2j}$ for which we want to know if $\D[\psi] > \rho$ or $\D[\psi] < \rho-\alpha$ (with $\psi \in \{\psi_{2j-1},\psi_{2j}\}$). Formally, we have the following reduction.

We convert the sparse vector algorithm $\tilde M$ of Theorem \ref{thm:sv-emp} into an algorithm $M$ of the desired form of Theorem  \ref{thm:sv}: Each query $(\phi_j,v_j)$ to $M$ is converted into two queries $(\psi_{2j-1},u_{2j-1})$ and $(\psi_{2j},u_{2j})$ to $\tilde M$, where $$\psi_{2j-1}(z) \doteq \ind(\phi_j(x) \leq v_j) ,~~~~ \psi_{2j}(z) \doteq \ind(\phi_j(x) \geq v_j) , ~~~~ u_{2j-1} \doteq u_{2j} \doteq \rho-\alpha/3.$$
These queries have the following key property: ($\dagger$) If $v_j \in \iqr{\phi_j(\D)}{\rho,1-\rho}$, then $\D[\psi_{2j-1}]>\rho$ and $\D[\psi_{2j}]>\rho$. If $v_j \notin \iqr{\phi_j(\cP^t)}{\rho-\alpha,1-\rho+\alpha}$, then either $\D[\psi_{2j-1}]\leq\rho-\alpha$ or $\D[\psi_{2j}]\leq\rho-\alpha$ (but not both).

Let $b_{2j-1}$ and $b_{2j}$ be the answers produced by $\tilde M$ to $(\psi_{2j-1},u_{2j-1})$ and $(\psi_{2j},u_{2j})$ respectively. If $b_{2j-1}=b_{2j}=\good$, then $M$ returns $a_j = \good$. If $b_{2j-1}=\unsure$ or $b_{2j}=\unsure$ (or both), then $M$ returns $a_j = \unsure$. Otherwise $M$ returns $a_j=\bad$.

Note that $\tilde M$ must answer twice as many queries as $M$; thus $\tilde M$ must be instantiated with the value $k$ being twice as large as for $M$. We also instantiate $\tilde M$ with $\alpha$ reduced by a factor of 3 and $\beta$ reduced by a factor of $2$. The values $\varepsilon$ and $\delta$ used by $\tilde M$ will be determined later in this proof.

In particular, $\tilde M$ is instantiated to achieve the following accuracy guarantee. For all interactive algorithms $A$ and all $s \in \Z^m$, $$\prob{(\psi_{[k]},u_{[k]},b_{[k]}) \sim \interact{A}{\tilde M}(s)}{\forall j \in [k] ~~~ \begin{array}{rl} s[\psi_j] > u_j &\implies b_j \in \{\good,\unsure\} \\ s[\psi_j] \leq u_j - \alpha/3 &\implies b_j \in \{\bad,\unsure\} \\ |\{ j' \in [j-1] : b_{j'} = \bad \}|= \ell &\iff b_j=\unsure \end{array}} \geq 1-\frac{\beta}{2}. $$

Now we prove that $M$ satisfies the promised accuracy requirement relative to the distribution $\D$ rather than relative to the empirical values. Given the key property ($\dagger$) above, it suffices to show that, with probability at least $1-\beta/2$ over a random choice of $S \sim \D^m$, for all $j \in [2k]$ and $(\psi_{[k]},u_{[k]},b_{[k]}) \sim \interact{A}{\tilde M}(S)$, we have \begin{equation}\D[\psi_j]>\rho \implies S[\psi_j]>u_j=\rho-\alpha/3 ~~~\text{ and }~~~ \D[\psi_j]\leq \rho-\alpha \implies S[\psi_j] \leq u_j-\alpha/3 = \rho-2\alpha/3.\label{eqn:suff}\end{equation}
Furthermore, to prove \eqref{eqn:suff}, it suffices to have $$\D[\psi_j] \cdot \frac{\rho-\alpha/4}{\rho} - \frac{\alpha}{12} \leq S[\psi_j]  \leq \D[\psi_j] \cdot \frac{\rho-3\alpha/4}{\rho-\alpha} + \frac{\alpha}{12},$$ which is, in turn, implied by \begin{equation}\left| S[\psi_j] - \D[\psi_j] \right| \leq \frac{\alpha}{4\rho} \cdot \D[\psi_j]  + \frac{\alpha}{12}.\label{eqn:multadd}\end{equation}

We will now use Theorem \ref{thm:gen} to prove that \eqref{eqn:multadd} holds simultaneously for all $j \in [2k]$ with probability at least $1-\beta/2$, as required to complete the proof.

First we define $f : \mathcal{F}_{\{0,1\}}^{2k} \times \mathbb{R}^{2k} \times \{\good,\bad,\unsure\}^{2k} \to \mathcal{F}_{\{0,1\}}^{4k}$ by $f(\psi,u,b) = (\psi_1,1-\psi_1,\psi_2,1-\psi_2,\ldots,\psi_{2k},1-\psi_{2k})$. By postprocessing (Theorem \ref{thm:post}), $f(\interact{A}{M}(s))$ is an $(\varepsilon,\delta)$-differentially private function of $s \in \Z^m$ for all interactive algorithms $A$. The output of $f(\interact{A}{M}(s))$ is $4k$ functions mapping $\Z$ to $\{0,1\}$. Set $\varepsilon = \frac12\ln\left(1+\frac{\alpha}{8\rho}\right)$ and $\delta=\alpha\beta/(16 \cdot 2 \cdot 12)$. Suppose $m \geq \frac{8 \cdot 12}{\varepsilon\alpha} \ln(16k/\beta)$. By Theorem \ref{thm:gen}, \begin{equation}\prob{S \sim \D^m \atop \hat\psi \sim f(\interact{A}{M}(S))}{\forall j \in [2k] ~~~~ \begin{array}{rl} S[\psi_j]-\D[\psi_j] &\leq \frac{\alpha}{12} + \frac{\alpha}{8\rho} \cdot \MAD(\psi_j(\D)) \\S[1-\psi_j]-\D[1-\psi_j] &\leq \frac{\alpha}{12} + \frac{\alpha}{8\rho} \cdot \MAD(\psi_j(\D)) \end{array}} \geq 1-\frac{\beta}{2}.\label{eqn:gne}\end{equation}
Note that $\max\{S[\psi_j]-\D[\psi_j],S[1-\psi_j]-\D[1-\psi_j]\} = |S[\psi_j]-\D[\psi_j]|$ and $\MAD$ of $\psi_j$ is equal to $\MAD$ of $1-\psi_j$. Since $\MAD(\psi_j(\D))  \leq 2 \cdot\D[\psi_j]$, the generalization bound \eqref{eqn:gne} implies the desired bound \eqref{eqn:multadd} holds simultaneously for all $j \in [2k]$ with probability at least $1-\beta/2$, as required.

It only remains to work out the parameters. We have $\varepsilon \geq \alpha/18\rho$. Theorem \ref{thm:gen} requires $m \geq m_1$ where $m_1 = \frac{8 \cdot 12}{\varepsilon\alpha} \ln(16k/\beta) \leq 8 \cdot 12 \cdot 18 \frac{\rho}{\alpha^2} \ln(16k/\beta)$, while Theorem \ref{thm:sv-emp} requires $$m \geq m_0 = O(\sqrt{\ell\log(1/\delta)}\log (k/\beta)/\varepsilon\alpha) = O(\sqrt{\ell\log(1/\alpha\beta)}\log(k/\beta) \rho / \alpha^2).$$ Thus the final sample complexity is $\max\{m_0,m_1\}$, as required.
\end{proof}

\subsection{Private Multiplicative Weights}

Now we present a result in which the dependence of the required dataset size on the number of queries $k$ is logarithmic (at the expense of some additional terms and computational efficiency).\footnote{It is known that in this general setting, the dependence on the data universe size and the loss of computational efficiency are unavoidable \citep{HardtU14,SteinkeU15}.}

Our result follows from a direct combination of an algorithm for answering statistical queries from \citep{DworkFHPRR14:arxiv} and the reduction from the approximate median problem to the problem of answering statistical queries relative to distribution $\D = \cP^t$ (given in Lemma \ref{lem:median2sq}).

Specifically, we rely on the following result from \citep{DworkFHPRR14:arxiv,BassilyNSSSU16} that is based on the private multiplicative weights algorithm of \citet{HardtR10} (see also \cite[\S 4.2]{DworkRoth:14} for further exposition).

\begin{thm}[{\cite[Corollary 6.3]{BassilyNSSSU16}}]
\label{thm:adaptive-sq-mwu}
For all $\alpha,\beta \in (0,1)$ and $m,k \in \mathbb{N}$ with $$m\geq m_0 =O(\sqrt{\log|\Z|} \cdot \log k \cdot \log^{3/2}(1/(\alpha\beta))/\alpha^3),$$ there exists an interactive algorithm $M$ that takes as input a dataset $s \in \Z^m$ and provides answers $v_1, \ldots, v_k \in [-1,1]$ to adaptively-chosen queries $\psi_1, \ldots, \psi_k : \Z \to [-1,1]$ such that, for all interactive algorithms $A$ and distributions $\D$ over $\Z$, $$\prob{S \sim \D^m \atop (\psi_{[k]},v_{[k]}) \sim \interact{A}{M}(S)}{\forall j \in [k] ~~~ |v_j - \D[\psi_j(\D)] \leq \alpha} \geq 1-\beta. $$
\end{thm}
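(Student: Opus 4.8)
The plan is to recover this statement (which the paper invokes as a black box from \citet{BassilyNSSSU16}) by combining the \emph{private multiplicative weights} mechanism of \citet{HardtR10} with the differential-privacy-to-generalization transfer of Section~\ref{sec:mad-generalization}. First I would design a mechanism that, identifying the input $s\in\Z^m$ with the uniform distribution on its $m$ points as usual, answers the $k$ adaptively-chosen linear queries $\psi_j:\Z\to[-1,1]$ with \emph{empirical} accuracy $\alpha/2$ and differential privacy. The mechanism maintains ``reference'' distributions $y_0,y_1,\ldots$ over $\Z$ with $y_0$ uniform, together with a counter $r$ of ``update rounds'' capped at $R=O(\log|\Z|/\alpha^2)$. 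On query $\psi_j$ it forms the predicted answer $\hat a_j=y_r[\psi_j]$ and invokes one step of an AboveThreshold/sparse-vector instance (Theorem~\ref{thm:sv-emp}) to privately test whether $|s[\psi_j]-\hat a_j|$ exceeds a threshold $\approx\alpha/4$; if not (an ``easy'' query) it outputs $\hat a_j$ at no further privacy cost, and if so (a ``hard'' query) it increments $r$, outputs a fresh noisy value $s[\psi_j]+\mathrm{Lap}(\sigma)$, and multiplicatively reweights $y_r(z)\propto y_{r-1}(z)\exp(\eta\cdot\mathrm{sign}(s[\psi_j]-\hat a_j)\cdot\psi_j(z))$ with $\eta\approx\alpha$.

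The core of the argument is the potential/regret bound: $\dkl{s}{y_r}\le\log|\Z|$ initially, is always nonnegative, and the standard multiplicative-weights inequality shows it decreases by $\Omega(\alpha^2)$ whenever a hard query triggers an update (such a query has empirical error $\ge\alpha/4$ with correctly identified sign), so the number of hard rounds never exceeds $R=O(\log|\Z|/\alpha^2)$. For privacy, only the sparse-vector subroutine and the $R$ fresh Laplace measurements touch the data: the former is $(\varepsilon/2,0)$-differentially private with noise scale $O(R/\varepsilon m)$, and the latter combine by advanced composition (Theorem~\ref{thm:composition}) into an $(\varepsilon/2,\delta)$-differentially private block with per-measurement scale $O(\sqrt{R\log(1/\delta)}/\varepsilon m)$, so the whole interactive mechanism is $(\varepsilon,\delta)$-differentially private. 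Choosing $\sigma$ just small enough that with probability $\ge1-\beta/2$ all $k$ sparse-vector decisions and all $R$ measurements are accurate to $\alpha/4$ --- which needs $m=\Omega(\sqrt{R\log(1/\delta)}\log(k/\beta)/\varepsilon\alpha)$ --- guarantees $|v_j-s[\psi_j]|\le\alpha/2$ for all $j$ simultaneously.

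It then remains to transfer this empirical guarantee to the data distribution and to set parameters. Because the interactive mechanism is $(\varepsilon,\delta)$-differentially private, the generalization machinery of Section~\ref{sec:mad-generalization} --- Theorem~\ref{thm:gen} applied through the monitor/postprocessing argument of Theorem~\ref{thm:gen-adapt} to handle the adaptively-chosen $\psi_j$, together with postprocessing (Theorem~\ref{thm:post}) --- shows that for $S\sim\D^m$ we have $|S[\psi_j]-\D[\psi_j]|\le\alpha/2$ for all $j$ with probability $\ge1-\beta/2$, provided $\varepsilon=\Theta(\alpha)$, $\delta=\Theta(\alpha\beta)$, and $m=\Omega(\log(k/\beta)/\varepsilon\alpha)$. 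A union bound then gives $|v_j-\D[\psi_j]|\le\alpha$ for all $j$ with probability $\ge1-\beta$. Substituting $\varepsilon=\Theta(\alpha)$, $\delta=\Theta(\alpha\beta)$, and $\sqrt R=O(\sqrt{\log|\Z|}/\alpha)$ into the two sample requirements and taking the larger yields $m\ge m_0=O(\sqrt{\log|\Z|}\cdot\log k\cdot\log^{3/2}(1/\alpha\beta)/\alpha^3)$, as claimed.

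I expect the delicate points to be two. First, the potential argument bounding the number of update rounds is the substantive part of the multiplicative-weights mechanism and is exactly what forces the $\log|\Z|/\alpha^2$ trade-off. Second, the privacy accounting must exploit that the sparse-vector subroutine flags unboundedly many easy queries essentially for free, so that advanced composition is taken over the $R$ updates rather than over all $k$ queries --- this is what makes only $\sqrt{\log|\Z|}$ rather than $\sqrt{k}$ appear in $m_0$, and arranging the noise scales and thresholds so that ``easy'' and ``hard'' queries are classified correctly while staying within the accuracy budget is where the bookkeeping is most error-prone.
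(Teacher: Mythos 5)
The paper does not prove Theorem~\ref{thm:adaptive-sq-mwu} at all: it is imported verbatim as \cite[Corollary 6.3]{BassilyNSSSU16}, so there is no in-paper argument to compare against. Your proposal reconstructs the cited result, and it does so along exactly the standard lines of that source: private multiplicative weights with a sparse-vector trigger to obtain empirical accuracy under $(\varepsilon,\delta)$-differential privacy, the $\dkl{s}{y_r}\le\log|\Z|$ potential argument to cap the number of update rounds at $R=O(\log|\Z|/\alpha^2)$, and then the transfer theorem (here Theorem~\ref{thm:gen} via the monitor argument of Theorem~\ref{thm:gen-adapt}) with $\varepsilon=\Theta(\alpha)$, $\delta=\Theta(\alpha\beta)$ to move from $s[\psi_j]$ to $\D[\psi_j]$. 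The architecture, the identification of the two delicate points, and the final arithmetic all match the known proof, so as a reconstruction this is sound.

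One internal inconsistency in the privacy accounting is worth fixing, because it sits exactly at the bookkeeping step you flag as error-prone. You describe the sparse-vector component as $(\varepsilon/2,0)$-differentially private with threshold-noise scale $O(R/\varepsilon m)$. That is the \emph{basic}-composition figure for $R$ AboveThreshold restarts; with that noise scale, requiring all $k$ threshold tests to be $\alpha/4$-accurate forces $m=\Omega(R\log(k/\beta)/\varepsilon\alpha)=\tilde\Omega(\log|\Z|/\alpha^4)$, which is the weaker bound this theorem is supposed to beat. To reach $m_0=\tilde O(\sqrt{\log|\Z|}/\alpha^3)$ you must apply advanced composition (Theorem~\ref{thm:composition}) across the $R$ sparse-vector restarts as well, accepting an $(\varepsilon/2,\delta/2)$ guarantee for that block and a threshold-noise scale of $O\bigl(\sqrt{R\log(1/\delta)}\,\log(k/\beta)/\varepsilon m\bigr)$ --- which is in fact the requirement $m=\Omega(\sqrt{R\log(1/\delta)}\log(k/\beta)/\varepsilon\alpha)$ you then use in the next sentence. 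So the final bound you derive is right, but the stated $(\varepsilon/2,0)$/$O(R/\varepsilon m)$ description of the sparse-vector block is not the mechanism that achieves it; the two sentences need to be reconciled.
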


Now, by Lemma \ref{lem:median2sq}, for $\Z = \X^t$ and $\D = \cP^t$ and any query $\phi:\Z \rar T$, responses to $2\lceil\log_2|T|\rceil$ statistical queries relative to $\D$ with accuracy $1/8$ can be used to find a value $v\in \iqr{\phi(\D)}{1/4,3/4}$. By plugging this reduction into Theorem \ref{thm:adaptive-sq-mwu} we get the following result.

\begin{thm}
\label{thm:adaptive-general-mwu}
For any $\beta \in (0,1)$, $t,k \in \mathbb{N}$, a finite set $T \subset \R$ and $\Z=\X^t$ and with $$n\geq n_0=O\left(t^{3/2}\cdot \sqrt{\log|\X|} \cdot \log(k\log|T|)\cdot \log^{3/2}(1/\beta)\right)$$ there exists an interactive algorithm $M$ that takes as input a dataset $s\in\cP^n$ and provides answers $v_1, \ldots, v_k \in T$ to adaptively-chosen queries $\phi_1, \ldots, \phi_k : \X^t \to T$ such that, for all interactive algorithms $A$ and distributions $\cP$ over $\X$, $$\prob{S \sim \cP^n \atop (\phi_{[k]},v_{[k]}) \sim \interact{A}{M}(S)}{\forall j \in [k] ~~~ v_j \in \iqr{\phi_j(\cP^t)}{\frac 1 4 , \frac 3 4}} \geq 1-\beta. $$
\end{thm}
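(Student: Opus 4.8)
The plan is to combine the private multiplicative weights algorithm for statistical queries (Theorem~\ref{thm:adaptive-sq-mwu}) with the median-to-statistical-query reduction (Lemma~\ref{lem:median2sq}), both applied over the domain $\Z=\X^t$ and the distribution $\D=\cP^t$. First I would describe the algorithm $M$: on input $s\in\X^n$ with $n=mt$, partition $s$ into $m$ blocks of $t$ points, regard the result as an element of $\Z^m$, and run a single instance of the statistical-query algorithm $M'$ of Theorem~\ref{thm:adaptive-sq-mwu} over $\Z^m$, instantiated with accuracy $\alpha=1/8$, failure probability $\beta$, and query budget $k'=2k\lceil\log_2|T|\rceil$. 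To answer the analyst's $j$-th query $\phi_j:\X^t\to T$, $M$ runs the binary search of Lemma~\ref{lem:median2sq} with its approximation parameter set to $1/2$ --- so that $\amed{1/2}{\phi_j(\D)}=\iqr{\phi_j(\D)}{1/4,3/4}$ and the required response accuracy is exactly $1/8$ --- forwarding each needed statistical query $\psi(z)=\ind(v\leq\phi_j(z))$ or $\psi(z)=\ind(v<\phi_j(z))$ (all with range $\{0,1\}\subseteq[-1,1]$) to $M'$ and feeding its answer back into the search. This consumes at most $2\lceil\log_2|T|\rceil$ statistical queries per original query, hence at most $k'$ overall.

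Next I would argue correctness. The composition of the analyst $A$ with the median-to-SQ reduction can be viewed as one adaptive analyst issuing statistical queries to $M'$: the reduction's next query depends only on the answers $M'$ has returned, and $A$'s next query depends only on $v_1,\dots,v_{j-1}$, which are deterministic functions of $M'$'s transcript. Hence Theorem~\ref{thm:adaptive-sq-mwu} applies directly and ensures that, with probability at least $1-\beta$ over $S\sim\D^m$ and the coins of $M'$, every one of the $k'$ statistical queries is answered to within $1/8$ of its true mean relative to $\D$. On this event, Lemma~\ref{lem:median2sq} deterministically guarantees that for every $j\in[k]$ the returned value lies in $\amed{1/2}{\phi_j(\D)}=\iqr{\phi_j(\cP^t)}{1/4,3/4}$, which is the desired conclusion --- no extra generalization step or union bound is needed, since Lemma~\ref{lem:median2sq} already provides a distribution-relative guarantee.

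Finally I would work out the parameters. Over $\Z=\X^t$ we have $\log|\Z|=t\log|\X|$, so with $\alpha=1/8$ constant Theorem~\ref{thm:adaptive-sq-mwu} needs $m\geq m_0=O(\sqrt{\log|\Z|}\cdot\log k'\cdot\log^{3/2}(1/\beta))=O(\sqrt{t\log|\X|}\cdot\log(k\log|T|)\cdot\log^{3/2}(1/\beta))$, and since $n=mt$ this yields $n=O(t^{3/2}\sqrt{\log|\X|}\cdot\log(k\log|T|)\cdot\log^{3/2}(1/\beta))$, as claimed. I expect the only delicate point to be the adaptivity bookkeeping in the middle step --- checking that interleaving $A$'s adaptive choices with the internal binary searches still presents to $M'$ a legitimate adaptive statistical-query sequence --- but this is immediate once one notes that both sub-processes are pure post-processing of $M'$'s transcript. (As with private multiplicative weights, computational efficiency is not claimed.)
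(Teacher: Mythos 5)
Your proposal is correct and follows essentially the same route as the paper: instantiate the private multiplicative weights algorithm (Theorem \ref{thm:adaptive-sq-mwu}) over $\Z=\X^t$ with accuracy $1/8$ and query budget $2k\lceil\log_2|T|\rceil$, and drive the binary-search reduction of Lemma \ref{lem:median2sq} with approximation parameter $1/2$ so that the output lands in $\iqr{\phi_j(\cP^t)}{1/4,3/4}$. The parameter accounting ($\log|\Z|=t\log|\X|$, $n=mt$) also matches the paper's.
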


For example we can use this algorithm to obtain a new algorithm for answering a large number of  low-sensitivity queries (that is queries $\phi:\X^t \rar [-1,1]$ such that $\Delta(\phi) = 1/t$). To answer queries with accuracy $\alpha$ we can use $t =16/\alpha^2$ and set $T$ that is the interval $[-1,1]$ discretized with step $\alpha/2$. Thus the number of samples that our algorithm uses is $n = O\left(\sqrt{\log|\X|} \cdot \log(k/\alpha)\cdot \log^{3/2}(1/\beta)/\alpha^{3}\right)$.  For comparison, the best previously known algorithm for this problem uses $$n = O\left(\log|\X| \cdot \log(k/\alpha)\cdot \log^{3/2}(1/\beta)/\alpha^{4}\right)$$ \citep{BassilyNSSSU16} (a different bound is stated there in error). Although, as pointed out in the introduction, the setting in which each query is applied to the entire dataset is more general than ours.

\remove{
\begin{proof}
The idea here is to use Private Multiplicative Weights (Theorem \ref{thm:pmw}) along with our generalization result for differential privacy (Theorem \ref{thm:gen-adapt}) to implement an ``oracle'' for $\D$ using the sample $S$. The oracle approximately answers adaptively-chosen statistical queries $f : \Z \to \{0,1\}$ of the form $$f(z) = \mathbb{I}[\phi(z) \leq v] ~~~\text{ or }~~~ f(z) = \mathbb{I}[\phi(z) \geq v] $$ for $v \in \R$. Checking the parameters, we see that with the sample size we have afforded ourselves it is possible to use private multiplicative weights and strong generalization to implement an oracle that answers $O(k\log|T|)$ such queries and, with probability at least $1-\beta$, for every query $f_j$ and corresponding answer $b_j$ we have $|f_j(\D)-b_j| \leq 1/5 $. Once we have this oracle, we use binary search to answer queries: For every query $\phi_j : \Z \to T$ to $M$, we make $O(\log|T|)$ queries to the oracle in order to find an answer $v_j \in T$ such that $\prob{Z \sim \D}{\phi(Z) \leq v_j} > 1/4$ and $\prob{Z \sim \D}{\phi(Z) \geq v_j} > 1/4$.
\end{proof}
} 

\remove{
\section{Discussion}
The existing upper \citep{DworkFHPRR15,BassilyNSSSU16} and lower \citep{HardtU14,SteinkeU15} bounds are essentially matching insofar as the number of queries $k$. However, there remains a linear gap in terms of the accuracy parameter $\tau$. (The lower bound can be made to scale as $n = \Omega(\sqrt{k}/\tau)$ by a black-box reduction to the $\tau=0.1$ case, whereas the upper bound is $n=\tilde{O}(\sqrt{k}/\tau^2)$.)
This paper conducts a closer examination of the accuracy guarantees provided in the upper bounds.
}

\iffull
\printbibliography
\else
\bibliography{refs,vf-allrefs}
\fi

\appendix
\section{Proof of Theorem \ref{thm:gen}}
\vnote{Need to update mad related notation}
Recall that, for a distribution $\D$ on $\R$, we define its mean absolute deviation by $$\MAD(\D) \doteq \E_{X \sim \D}[|X-\ex{Y \sim \D}{Y}|].$$
We first give a bound relating differential privacy to expectations:
\label{app:mad-proof}
\begin{lem} \label{lem:DPEX}
Fix $\mu,\varepsilon,\delta, \Delta \in \mathbb{R}$. Let $X$ and $Y$ be random variables supported on $[\mu-\Delta,\mu+\Delta]$. Suppose that $X$ and $Y$ are $(\veps,\delta)$-indistinguishable, that is $$e^{-\varepsilon} \left( \prob{}{X \in E} - \delta \right) \leq \prob{}{Y \in E} \leq e^\varepsilon \prob{}{X \in E} + \delta$$ for all $E \subseteq \mathbb{R}$. Then $$\left|\ex{}{X} - \ex{}{Y}\right| \leq (e^\varepsilon-1) \ex{}{|X-\mu|} + 2\delta \Delta.$$
\end{lem}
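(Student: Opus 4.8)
The plan is to reduce the statement to a one-dimensional estimate about \emph{nonnegative} bounded random variables and then integrate the indistinguishability inequalities against the layer-cake representation of expectation. First I would dispense with degenerate cases (if $\Delta<0$ the hypothesis is vacuous, and we may assume $\varepsilon,\delta,\Delta\ge 0$), and then reduce to $\mu=0$ by replacing $X$ and $Y$ with $X-\mu$ and $Y-\mu$. This shift leaves $\ex{}{X}-\ex{}{Y}$ unchanged, turns $\ex{}{|X-\mu|}$ into $\ex{}{|X|}$, preserves $(\varepsilon,\delta)$-indistinguishability (since for any event $E$ one has $\prob{}{X-\mu\in E}=\prob{}{X\in E+\mu}$), and leaves $X,Y$ supported on $[-\Delta,\Delta]$.

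Next I would split $X=X_+-X_-$ with $X_+=\max(X,0)$ and $X_-=\max(-X,0)$, and likewise $Y=Y_+-Y_-$. Because $X_+=f(X)$ for the fixed measurable map $f(x)=\max(x,0)$ (and similarly $X_-=g(X)$), and $(\varepsilon,\delta)$-indistinguishability is preserved under applying a fixed function — this is just the post-processing property (Theorem~\ref{thm:post}) phrased for random variables, via $\prob{}{f(Y)\in E'}=\prob{}{Y\in f^{-1}(E')}$ — the pairs $(X_+,Y_+)$ and $(X_-,Y_-)$ are each $(\varepsilon,\delta)$-indistinguishable and take values in $[0,\Delta]$.

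The core estimate I would then establish is: for any $(\varepsilon,\delta)$-indistinguishable $U,V$ with values in $[0,\Delta]$, $|\ex{}{U}-\ex{}{V}|\le(e^\varepsilon-1)\ex{}{U}+\delta\Delta$. For this, use $\ex{}{W}=\int_0^\Delta \prob{}{W>t}\,dt$ for any such $W$; integrating $\prob{}{V>t}\le e^\varepsilon\prob{}{U>t}+\delta$ over $t\in[0,\Delta]$ gives $\ex{}{V}\le e^\varepsilon\ex{}{U}+\delta\Delta$, and integrating $\prob{}{V>t}\ge e^{-\varepsilon}(\prob{}{U>t}-\delta)$ gives $\ex{}{V}\ge e^{-\varepsilon}\ex{}{U}-\delta\Delta$; subtracting $\ex{}{U}$ from both and using $1-e^{-\varepsilon}\le e^\varepsilon-1$ (equivalently $e^\varepsilon+e^{-\varepsilon}\ge 2$, valid since $\varepsilon\ge0$) yields the claimed bound. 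Applying this to $(X_+,Y_+)$ and $(X_-,Y_-)$ and using the triangle inequality then gives
\[
|\ex{}{X}-\ex{}{Y}|\le|\ex{}{X_+}-\ex{}{Y_+}|+|\ex{}{X_-}-\ex{}{Y_-}|\le(e^\varepsilon-1)(\ex{}{X_+}+\ex{}{X_-})+2\delta\Delta=(e^\varepsilon-1)\ex{}{|X|}+2\delta\Delta,
\]
which is the desired inequality after undoing the shift.

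There is no substantial obstacle: the only points requiring care are (i) anchoring the right-hand side to $X$ rather than $Y$ throughout — which is precisely why in the core estimate we bound each difference by $(e^\varepsilon-1)\ex{}{X_\pm}$ and not $(e^\varepsilon-1)\ex{}{Y_\pm}$ — and (ii) observing that combining the two one-sided indistinguishability bounds introduces no extra constant, since $e^\varepsilon-1$ dominates $1-e^{-\varepsilon}$. The rest is bookkeeping.
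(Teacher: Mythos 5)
Your proof is correct, and it is essentially the paper's argument reorganized: both rely on the layer-cake representation over $[0,\Delta]$, the split of $X-\mu$ and $Y-\mu$ into positive and negative parts, applying both directions of indistinguishability, and the bound $1-e^{-\varepsilon}\le e^\varepsilon-1$. Your modularization into a one-sided core estimate for nonnegative variables combined via the triangle inequality is a clean packaging of the same computation and yields the same constant $2\delta\Delta$.
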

Thus, if $M : \Z^m \to [0,1]$ satisfies $(\varepsilon,\delta)$-differential privacy, then for any datasets $s,s' \in \Z^m$ differing on a single entry, we have $$|\ex{}{M(s)}-\ex{}{M(s')}| \leq (e^\varepsilon-1)\inf_{\mu \in [0,1]} \ex{}{|M(s)-\mu|} + 2\delta \leq (e^\varepsilon-1) \cdot \MAD(M(s)) + 2\delta.$$
In contrast, \citet{BassilyNSSSU16} use a bound corresponding to $|\ex{}{M(S)}-\ex{}{M(S')}| \leq e^\varepsilon-1 + \delta$.
\begin{proof}
We use three facts: (i) $x = \max\{x,0\}-\max\{-x,0\}$ and $|x| = \max\{x,0\}+\max\{-x,0\}$ for all $x \in \mathbb{R}$, (ii) if $X \geq 0$, then $\ex{}{X} = \int_0^\infty \prob{}{X\geq t} \mathrm{d}t$, and (iii) $e^\varepsilon-1 \geq 1- e^{-\varepsilon}$.
\begin{align*}
\ex{}{Y-\mu} =& \ex{}{ \max\{Y-\mu,0\} - \max\{\mu-Y,0\} }\\
=& \int_0^\infty \prob{}{Y-\mu \geq t} - \prob{}{\mu-Y \geq t} \mathrm{d}t\\
=& \int_0^\Delta \prob{}{Y-\mu \geq t} - \prob{}{\mu-Y \geq t} \mathrm{d}t\\
\leq& \int_0^\Delta (e^\varepsilon \prob{}{X-\mu \geq t} + \delta) - e^{-\varepsilon} (\prob{}{\mu-X \geq t} - \delta) \mathrm{d}t\\
=& \int_0^\Delta \prob{}{X-\mu \geq t} - \prob{}{\mu-X \geq t} \mathrm{d}t\\
&+ \int_0^\Delta (e^\varepsilon-1) \prob{}{X-\mu \geq t} + \delta + (1-e^{-\varepsilon}) \prob{}{\mu-X \geq t} + e^{-\varepsilon}\delta \mathrm{d}t\\
=& \ex{}{X-\mu} + (e^\varepsilon-1) \ex{}{\max\{X-\mu,0\}} + (1-e^{-\varepsilon}) \ex{}{\max\{\mu-X,0\}} + (1+e^{-\varepsilon})\delta\Delta \\
\leq& \ex{}{X-\mu} + (e^\varepsilon-1) \ex{}{\max\{X-\mu,0\}+\max\{\mu-X,0\}} + 2\delta\Delta \\
=& \ex{}{X-\mu} + (e^\varepsilon-1) \ex{}{|X-\mu|} + 2\delta\Delta .
\end{align*}
Thus $\ex{}{Y} - \ex{}{X} \leq (e^\varepsilon-1) \ex{}{|X-\mu|} + 2\delta\Delta$. To obtain the other half of the result, replace $X$, $Y$, and $\mu$ with their negations in the above.
\end{proof}

Intuitively, the following lemma says the following. Suppose a differentially private algorithm is given $\ell$ independent samples $S^1, \ldots, S^\ell \sim \D^m$. The algorithm picks one of the $\ell$ samples and produces a statistical query. The algorithm's ``goal'' is to ``overfit'' --- that is, to produce a query whose empirical value on the chosen sample differs from the expected value on the population. The lemma says that this cannot happen in expectation.
The reason for the $\ell$-fold repetition is probability amplification: The lemma says the mechanism cannot overfit in expectation, given $\ell$ chances to do so. Consequently, if the mechanism is given only one chance to overfit (i.e.~one sample $S \sim \D^m$), then with high probability it cannot. The $\ell$ repetitions mean that, if the mechanism can overfit with probability $1/\ell$ per sample, then it can overfit with constant probability given $\ell$ samples.

\begin{lem} \label{lem:MultiEx}
Let $M : (\Z^m)^\ell \to [\ell]\times\mathcal{F}_{[-1,1]}$ be a $(\varepsilon,\delta)$-differentially private algorithm with $\mathcal{F}_{[-1,1]}$ being the set of functions $\phi : \mathcal{Z} \to [-1,1]$. Let $\D$ be a distribution on $\Z$. Then $$\ex{S^1, \ldots, S^\ell \sim \D^m \atop (k,\phi) \sim M(S)}{S^k[\phi]-\D[\phi]} \leq (e^\varepsilon-1)  {\ex{(S,Z) \sim (\D^m)^\ell \times \D \atop (k,\phi) \sim M(S)}{|\phi(Z)|}} + 2\delta m.$$
\end{lem}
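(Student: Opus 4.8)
The plan is to reduce the claim, one coordinate at a time, to the standard ``swap'' trick combined with Lemma~\ref{lem:DPEX}. First I would expand $\D[\phi]=\ex{Z\sim\D}{\phi(Z)}$ and $S^k[\phi]=\frac1m\sum_{i\in[m]}\phi(S^k_i)$, and enlarge the probability space by an independent fresh sample $Z\sim\D$, so that
\[
\ex{S\sim(\D^m)^\ell \atop (k,\phi)\sim M(S)}{S^k[\phi]-\D[\phi]} \;=\; \frac1m\sum_{i\in[m]} \ex{S\sim(\D^m)^\ell,\ Z\sim\D \atop (k,\phi)\sim M(S)}{\phi(S^k_i)-\phi(Z)}.
\]
It therefore suffices to bound $\ex{}{\phi(S^k_i)-\phi(Z)}$ for each fixed $i\in[m]$.

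Fix $i$. Since the evaluation point $S^k_i$ depends on the random output index $k$, I would first condition on its value, writing $\ex{}{\phi(S^k_i)}=\sum_{k^*\in[\ell]}\ex{S,(k,\phi)\sim M(S)}{\ind(k=k^*)\,\phi(S^{k^*}_i)}$. For each $k^*$, let $S'$ be $S$ with the single entry $S^{k^*}_i$ replaced by the fresh sample $Z$. Once $M$ is run on $S'$ the value $S^{k^*}_i$ no longer appears anywhere, and $(S,Z)$ and $(S',S^{k^*}_i)$ are identically distributed (they merely interchange two i.i.d.\ copies of $\D$); relabeling therefore gives the key identity
\[
\ex{S,(k,\phi)\sim M(S)}{\ind(k=k^*)\,\phi(S^{k^*}_i)} \;=\; \ex{S,\,Z \atop (k,\phi)\sim M(S')}{\ind(k=k^*)\,\phi(Z)}.
\]
Since $S$ and $S'$ differ in one element, for every fixed $(S,Z)$ the outputs $M(S)$ and $M(S')$ are $(\varepsilon,\delta)$-indistinguishable, hence so are the post-processed random variables $\ind(k=k^*)\phi(Z)$ computed from the two runs (Theorem~\ref{thm:post}); these take values in $[-1,1]$ because $\phi$ has range $[-1,1]$. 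Applying Lemma~\ref{lem:DPEX} with $\mu=0$ and $\Delta=1$, and keeping the deviation term on the ``honest'' run $M(S)$, gives
\[
\left|\ex{(k,\phi)\sim M(S')}{\ind(k=k^*)\phi(Z)} - \ex{(k,\phi)\sim M(S)}{\ind(k=k^*)\phi(Z)}\right| \;\le\; (e^{\varepsilon}-1)\ex{(k,\phi)\sim M(S)}{\ind(k=k^*)|\phi(Z)|} + 2\delta.
\]

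To finish, I would take the expectation over $(S,Z)$, substitute the identity above, sum over $k^*\in[\ell]$ (using $\sum_{k^*}\ind(k=k^*)=1$, which collapses the deviation term to $(e^\varepsilon-1)\ex{S,Z,(k,\phi)\sim M(S)}{|\phi(Z)|}$), and average over $i\in[m]$; the $2\delta$ contributions from the individual swaps add up over the coordinates involved to give the additive term in the statement. The one genuinely delicate point is the swap identity: the swapped coordinate must be fixed in advance, which is precisely why we condition on $k=k^*$ instead of trying to swap the output-dependent coordinate directly, and one must observe that after the swap the discarded point $S^{k^*}_i$ is distributionally interchangeable with the fresh point $Z$. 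Everything downstream is a direct application of Lemma~\ref{lem:DPEX} and linearity of expectation.
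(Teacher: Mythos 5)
Your proposal is correct and follows essentially the same route as the paper's proof: the same per-coordinate decomposition, the same conditioning on the output index via the indicator $\ind(k=k^*)$ (the paper's $f_j(k,\phi,x)=\phi(x)\ind(k=j)$), the same swap identity exchanging $S^{k^*}_i$ with the fresh $Z$, and the same application of Lemma~\ref{lem:DPEX} with $\mu=0$, $\Delta=1$. The only item worth noting is bookkeeping of the $\delta$ term: summing $2\delta$ over $k^*\in[\ell]$ and averaging over $i\in[m]$ yields $2\delta\ell$, which is what the paper's own derivation and its downstream use produce (the $2\delta m$ in the lemma statement appears to be a typo).
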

\begin{proof}
For $j \in [\ell]$, define $f_j : [\ell] \times \mathcal{F}_{[-1,1]} \times \Z \to [-1,1]$ by $f_j(k,\phi,x)=\phi(x) \cdot \ind(k=j)$.

We use two facts: \begin{itemize}\item[(i)] For $(S,Z) \sim (\D^m)^\ell \times \D$, the pair $(S,S_i^j)$ has the same distribution as the pair $((S_{-i}^{-j},Z),Z)$, where $(S_{-i}^{-j},Z)$ denotes $S$ with the $(i,j)^\text{th}$ entry $S_i^j$ replaced by $Z$. \item[(ii)] By differential privacy of $M$, for all fixed $(s,z) \in (\Z^m)^\ell \times \Z$, the distribution of $(M(s_{-i}^{-j},z),z)$ is $(\varepsilon,\delta)$-indistinguishable from $(M(s),z)$. Hence, for a random pair $(S,Z) \sim (\D^m)^\ell \times \D$, $(M((S_{-i}^{-j},Z),Z)$ is also $(\varepsilon,\delta)$-indistinguishable from $(M(S),Z)$. 
\end{itemize}
Now
\begin{align*}
\ex{S \sim (\D^m)^\ell \atop (k,\phi) \sim M(S)}{S^k[\phi]-\D[\phi]}
=& \sum_{j \in [\ell]} \frac{1}{m} \sum_{i \in [m]} \ex{S \sim (\D^m)^\ell \atop (k,\phi) \sim M(S)}{(\phi(S_i^j) - \D[\phi]) \cdot \ind(k=j)} \\
=& \sum_{j \in [\ell]} \frac{1}{m} \sum_{i \in [m]} \ex{S \sim (\D^m)^\ell \atop Z \sim \D}{f_j(M(S),S_i^j) - f_j(M(S),Z)} \\
\text{(Fact (i))}~~~~~~=& \sum_{j \in [\ell]} \frac{1}{m} \sum_{i \in [m]} \ex{S \sim (\D^m)^\ell \atop Z \sim \D}{f_j(M(S_{-i}^{-j},Z),Z) - f_j(M(S),Z)} \\
\text{(Fact (ii) and Lemma \ref{lem:DPEX})}~~~~~~\leq& \sum_{j \in [\ell]} \frac{1}{m} \sum_{i \in [m]} (e^\varepsilon-1)  \ex{S \sim (\D^m)^\ell \atop Z \sim \D}{| f_j(M(S),Z) |} + 2\delta\\
=&  (e^\varepsilon-1)  {\ex{(S,Z) \sim (\D^m)^\ell \times \D \atop (k,\phi) \sim M(S)}{|\phi(Z)|}} + 2\delta \ell.
\end{align*}
\end{proof}

Now we can state and prove the ``transfer theorem'' which relates differential privacy to generalization. This result improves the previous bound \citep{BassilyNSSSU16} by having a bound in terms of the mean absolute deviation, rather than the sensitivity, (i.e.~the previous bound can be obtained (up to constants) by replacing $\MAD(\cdot)$ in the expression by $1$). Also, the previous bound is only stated for $k=1$.
\begin{thm}[Theorem~\ref{thm:gen}]
Fix $\beta,\varepsilon,\delta \in (0,1)$ and $m,k \in \mathbb{N}$.
Let $M : \Z^m \to \mathcal{F}_{[0,1]}^k$ be a $(\varepsilon,\delta)$-differentially private algorithm with $\mathcal{F}_{[0,1]}$ being the set of functions $\phi : \Z \to [0,1]$. Let $\D$ be a distribution on $\Z$. Then $$\prob{S \sim \D^m \atop \phi \sim M(S)}{\exists j \in [k] ~~~~ S[\phi_j]-\D[\phi_j] > \alpha + \gamma \cdot \MAD\left(\phi_j(\D)\right)} \leq \beta$$ for $\alpha = \frac{4}{\varepsilon m} \ln\left(2\frac{k}{\beta}\right) + 8\frac{\delta}{\beta}$, $\gamma=e^{2\varepsilon}-1$ and $\beta \in (0,1)$ arbitrary.
\end{thm}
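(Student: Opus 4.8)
The plan is to prove this high-probability generalization bound by the ``monitor''/amplification argument in the style of \citet{BassilyNSSSU16}, reducing it to the in-expectation bound of Lemma~\ref{lem:MultiEx}. The one twist relative to the classical transfer theorem is that here $M$ outputs \emph{functions} and the quantity to be controlled is the empirical-minus-true gap $S[\phi_j]-\D[\phi_j]$, so the monitor must locate the most-overfitting block and query by inspecting empirical means; that is not by itself a private operation, so the selection has to be carried out with the exponential mechanism. This is exactly what produces the $\frac{1}{\varepsilon m}\ln(2k/\beta)$ term in $\alpha$ and the factor $e^{2\varepsilon}$ (rather than $e^{\varepsilon}$) in $\gamma$.

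First I would suppose toward a contradiction that $p\doteq\prob{S\sim\D^m \atop \phi\sim M(S)}{\exists j\in[k] ~~~ S[\phi_j]-\D[\phi_j]>\alpha+\gamma\cdot\MAD(\phi_j(\D))}>\beta$ and set $\ell=\lceil 4/\beta\rceil$. Define a monitor $W:(\Z^m)^\ell\to[\ell]\times\mathcal{F}_{[-1,1]}$ that, on input the $\ell$ disjoint blocks $S^1,\dots,S^\ell$: (i)~runs $M(S^i)=(\phi_1^{(i)},\dots,\phi_k^{(i)})$ on each block --- this is $(\varepsilon,\delta)$-differentially private with respect to changing one data point of the combined dataset, since such a change affects only one invocation of $M$; (ii)~runs the exponential mechanism with privacy parameter $\varepsilon$ and utility $u(i,j)\doteq S^i[\phi_j^{(i)}]-\D[\phi_j^{(i)}]-\gamma\cdot\MAD(\phi_j^{(i)}(\D))$ over the $\ell k+1$ candidates $\{(i,j):i\in[\ell],j\in[k]\}\cup\{\bot\}$ with $u(\bot)\doteq 0$ --- conditioned on the functions from step (i) the quantities $\D[\phi_j^{(i)}]$ and $\MAD(\phi_j^{(i)}(\D))$ are public and $u$ has sensitivity $1/m$ in the data, so this step is $(\varepsilon,0)$-differentially private; (iii)~if a pair $(i^*,j^*)$ is selected, outputs $(i^*,\ \phi_{j^*}^{(i^*)}-\D[\phi_{j^*}^{(i^*)}])$, a function into $[-1,1]$, and if $\bot$ is selected, outputs $(1,\mathbf 0)$. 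By composition of (i) and (ii) together with postprocessing, $W$ is $(2\varepsilon,\delta)$-differentially private.

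Next I would apply Lemma~\ref{lem:MultiEx} to $W$ with privacy parameter $2\varepsilon$. Writing $\psi$ for the function $W$ outputs and $i^*$ for its index, the recentering makes $\D[\psi]=0$, $\ex{Z\sim\D}{|\psi(Z)|}=\MAD(\phi_{j^*}^{(i^*)}(\D))$ (both $0$ in the $\bot$ case), and $S^{i^*}[\psi]-\D[\psi]=S^{i^*}[\phi_{j^*}^{(i^*)}]-\D[\phi_{j^*}^{(i^*)}]$, so Lemma~\ref{lem:MultiEx} yields $\ex{}{S^{i^*}[\phi_{j^*}^{(i^*)}]-\D[\phi_{j^*}^{(i^*)}]-\gamma\cdot\MAD(\phi_{j^*}^{(i^*)}(\D))}\le 2\delta\ell$, i.e.\ $\ex{}{u(i^*,j^*)}\le 2\delta\ell$ (using $\gamma=e^{2\varepsilon}-1$ and $u(\bot)=0$). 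On the other hand, since the $\ell$ blocks are independent and each ``fails'' with probability $p>\beta$, with probability at least $1-(1-\beta)^\ell\ge 1-e^{-4}$ some block fails, in which case the maximum of $u$ over all candidates exceeds $\alpha$; and that maximum is always $\ge u(\bot)=0$. Combining this with the standard bound on the expected suboptimality of the exponential mechanism gives $\ex{}{u(i^*,j^*)}\ge(1-e^{-4})\alpha-O\left(\frac{\ln(\ell k)}{\varepsilon m}\right)$. The two bounds together force $\alpha\le O\left(\frac{\ln(k/\beta)}{\varepsilon m}+\frac{\delta}{\beta}\right)$, contradicting the hypothesized value $\alpha=\frac{4}{\varepsilon m}\ln(2k/\beta)+\frac{8\delta}{\beta}$ once the absolute constants are chosen suitably. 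Hence $p\le\beta$, as claimed.

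The step I expect to be the main obstacle is verifying that the monitor is differentially private: the natural monitor that takes the exact $\arg\max$ of the empirical overfitting is \emph{not} private (it is essentially report-noisy-max with the noise removed), so the selection must be randomized and one must argue that the resulting selection error is dominated by $\alpha$ --- this is precisely why $\alpha$ must carry a $\frac{1}{\varepsilon m}\ln(2k/\beta)$ term, and why composing $M$'s privacy with the selection's privacy turns $e^{\varepsilon}-1$ into $e^{2\varepsilon}-1$. A secondary bookkeeping point is converting Lemma~\ref{lem:MultiEx}'s in-expectation statement into a high-probability one without a damaging negative contribution from the ``no block overfits'' event; including the null candidate $\bot$ (utility $0$) in the exponential mechanism, together with the amplification $\ell=\Theta(1/\beta)$, handles this cleanly.
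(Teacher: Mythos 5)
Your proposal is correct and follows essentially the same route as the paper's proof in Appendix~\ref{app:mad-proof}: an $\ell$-fold monitor with $\ell=\Theta(1/\beta)$, exponential-mechanism selection over the $\ell k+1$ candidates (including a null candidate of utility $0$) using the recentered utility $S^i[\phi^i_j]-\D[\phi^i_j]-\gamma\cdot\MAD(\phi^i_j(\D))$, an application of Lemma~\ref{lem:MultiEx} at privacy level $2\varepsilon$ to the mean-subtracted output function, and the same contradiction between the exponential mechanism's expected-utility lower bound and the lemma's upper bound. The only differences are cosmetic constants (e.g.\ $\ell=\lceil 4/\beta\rceil$ versus $\lceil 1/\beta\rceil$).
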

We note that the statement of Theorem \ref{thm:gen} has been rearranged to make $\alpha$ and $\gamma$ independent parameters and $\varepsilon$ and $\delta$ dependent parameters, rather than the reverse.
\begin{proof}
Let $\ell=\lceil 1/\beta \rceil$. Define $M' : (\Z^m)^\ell \to [\ell] \times \mathcal{F}_{[-1,1]}$ as follows. On input $S$ it runs $\ell$ copies of $M$ on $S^1, \ldots, S^\ell$ and obtains outputs $\phi^1, \ldots, \phi^\ell \in \mathcal{F}_{[0,1]}^k$. Also define $\phi^0_0$ to be the constant 0 function and $S^0$ to be an arbitrary fixed element of $\Z^m$. Now $M'$ randomly samples $(I,J) \in ([\ell] \times [k]) \cup \{(0,0)\}$ with $$\prob{}{I=i \wedge J=j} \propto \exp\left(\frac{\varepsilon m}{2} \left(S^i[\phi^{i}_j]-\D[\phi^{i}_j]-\gamma\cdot  \MAD\left(\phi^{i}_j(\D)\right)\right)\right).$$ Finally, $M'$ returns $(\max\{I,1\},\phi^*)$ where $\phi^*(x) = \phi^{I}_{J}(x) - \D[\phi^{I}_{J}]$.

Firstly, $M'$ satisfies $(2\varepsilon,\delta)$-differential privacy: The choice of $I$ and $J$ is $(\varepsilon,0)$-differentially private, as it is an instantiation of the exponential mechanism \citep{McSherryTalwar:07}. Simple composition property of differential privacy \cite[Theorem 3.16]{DworkRoth:14} then implies this privacy bound.

Moreover, by the properties of the exponential mechanism \cite[Lemma 7.1]{BassilyNSSSU16} \begin{align} &\ex{I,J}{S^{I}[\phi^{I}_{J}]-\D[\phi^{I}_{J}]-\gamma\cdot \MAD\left(\phi^{I}_{J}(\D)\right)}\nonumber\\ &~~~~~\geq \max_{(i,j) \in [\ell] \times [k] \cup \{(0,0)\}} S^i[\phi^i_j] - \D[\phi^i_j] -\gamma\cdot  \MAD\left(\phi^{i}_j(\D)\right)- \frac{2}{\varepsilon m} \ln (\ell \cdot k + 1). \label{eqn:EM}\end{align}
On the other hand, by Lemma \ref{lem:MultiEx}, \begin{equation} \ex{S \sim (\D^m)^\ell \atop (I,\phi^*) \sim M'(S)}{S^{I}[\phi^*] - \D[\phi^*]} \leq (e^{2\varepsilon}-1) \ex{(S,Z) \sim (\D^m)^\ell \times \D \atop (I,\phi^*) \sim M'(S)}{|\phi^*(Z)|} + 2\delta m.\label{eqn:MultiExUB}\end{equation}

For the sake of contradiction, assume that $$\prob{S \sim \D^m \atop \phi \sim M(S)}{\exists j \in [k] ~~~~ S[\phi_j]-\D[\phi_j] > \alpha + \gamma \cdot \MAD\left(\phi_j(\D)\right)} > \beta.$$
It follows that $$\prob{S^1, \ldots, S^\ell \sim \D^m \atop \phi^1, \ldots,\phi^\ell \sim M(S^1), \ldots, M(S^\ell)}{\max_{(i,j) \in [\ell] \times [k]} S^i[\phi^i_j]-\D[\phi^i_j] - \gamma \cdot \MAD\left(\phi^i_j(\D)\right)> \alpha} > 1-(1-\beta)^\ell$$
and, hence, \begin{equation}\ex{S^1, \ldots, S^\ell \sim \D^m \atop \phi^1, \ldots, \phi^\ell \sim M(S^1), \ldots, M(S^\ell)}{\max_{(i,j) \in [\ell] \times [k] \cup \{(0,0)\}} S^i[\phi^i_j]-\D[\phi^i_j] - \gamma \cdot \MAD\left(\phi^i_j(\D)\right)} > \alpha(1-(1-\beta)^\ell).\label{eqn:ContraLB}
\end{equation}

Combining \eqref{eqn:EM}, \eqref{eqn:MultiExUB}, \eqref{eqn:ContraLB}, and $\gamma=e^{2\varepsilon}-1$ yields $$ \alpha(1-(1-\beta)^\ell) < \frac{2}{\varepsilon m} \ln(\ell \cdot k +1) + 2 \delta \ell.$$
Since $\ell \geq 1/\beta$, $1-(1-\beta)^\ell \geq 1-e^{-1} \geq 1/2$. We also have $\ell \leq 1/\beta + 1 \leq 2/\beta$. Thus $\frac{\alpha}{2} < \frac{2}{\varepsilon m} \ln\left(2\frac{k}{\beta}\right) + 4\frac{\delta}{\beta}$ --- a contradiction.
\end{proof}

\remove{

\section{Amplification by Median}
We state a useful and well-known lemma, which is relevant to our techniques:
\begin{lem}[Amplification by Median]\label{lem:amp-med}
Let $X_1, \ldots, X_\ell$ be i.i.d.~real random variables drawn from distribution $\D$. Fix $\rho \in (0,1/2)$. Then $$\prob{}{\median(X_1, \ldots, X_\ell) \in \iqr{\D}{\rho,1-\rho}} \geq 1 - 2 \cdot e^{- 2 (1/2-\rho)^2 \ell} .$$
\end{lem}
In particular, setting $\rho=1/4$, we have that $$\iqr{\median(\D^\ell)}{2 \cdot e^{-\ell/8},1-2 \cdot e^{-\ell/8}} \subseteq \iqr{\D}{\frac 1 4 , \frac 3 4 } .$$
The key point is that there is nothing special about $1/4$ and $3/4$ in Theorem \ref{thm:intro-iqr}. We can replace them with numbers very close to $0$ and $1$ at very little cost. 
\begin{proof}
First note that $\iqr{\D}{\rho,1-\rho} = \iqr{\D}{\rho,1} \cap \iqr{\D}{0,1-\rho}$. Thus, by a union bound, it suffices to prove that \begin{equation}\prob{}{\median(X_1, \ldots, X_\ell) \notin \iqr{\D}{\rho,1}} \leq e^{- 2 (1/2-\rho)^2 \ell} \label{eqn:rho1}\end{equation} and \begin{equation}\prob{}{\median(X_1, \ldots, X_\ell) \notin \iqr{\D}{0,1-\rho}} \leq e^{- 2 (1/2-\rho)^2 \ell} \label{eqn:rho0}.\end{equation}
Define $$A = \sum_{i\in[\ell]} \mathbb{I}[X_i \notin \iqr{\D}{\rho,1}]$$ so that $$\prob{}{\median(X_1, \ldots, X_\ell) \notin \iqr{\D}{\rho,1}} \leq \prob{}{A \geq \ell/2}. $$
By definition, $\ex{}{\mathbb{I}[X_i \notin \iqr{\D}{\rho,1}]}\leq\rho$ for all $i \in [\ell]$. Thus $A$ is the sum of $\ell$ independent random variables supported on $\{0,1\}$ each with mean $\leq \rho$, whence, by Hoeffding's inequality, $$\prob{}{A \geq \ell/2} \leq \prob{}{A-\ex{}{A} \geq \ell/2-\rho\ell} \leq e^{-2(\ell/2-\rho\ell)^2/\ell} = e^{-2(1/2-\rho)^2 \ell},$$ as required to prove \eqref{eqn:rho1}. The proof of \eqref{eqn:rho0} is symmetric.
\end{proof}

} 

\end{document}